\def\cmc{\textsc{CEC-IB}}
\renewcommand{\algorithmiccomment}[1]{\bgroup\hfill//~#1\egroup}
\def\N{\mathcal{N}}
\def\Z{\mathbb{Z}}
\def\R{\mathbb{R}}
\def\F{\mathcal{G}}
\def\1{\mathds{1}}
\def\X{\mathcal{X}}
\def\E{\mathrm{E}}
\def\Y{\mathcal{Y}}
\def\Z{\mathcal{Z}}
\newtheorem{theorem}{Theorem}[section]
\newtheorem{lemma}{Lemma}[section]
\theoremstyle{definition}
\newtheorem{definition}{Definition}[section]
\newtheorem{remark}{Remark}[section]
\newtheorem{example}{Example}[section]
\newcommand{\crossfi}{H(\N(\mu_{Y_i},\Sigma_{Y_i}))} %% This is the new version
\tikzset{font=\footnotesize,
edge from parent fork down,
level distance=1.75cm,
every node/.style=
    {top color=white,
    %bottom color=blue!25,
    %rectangle,rounded corners,
    minimum height=8mm,
    %draw=blue!75,
    %very thick,
   %drop shadow,
    align=center,
    text depth = 0pt
    },
edge from parent/.style=
    {draw=blue!50,
    thick
    }}
\title{Semi-supervised cross-entropy clustering with information bottleneck constraint}
\author{Marek \'Smieja$^{1}$, Bernhard C. Geiger$^{2}$}
\date{\normalsize $^{1}$Faculty of Mathematics and Computer Science\\ Jagiellonian University\\ \L{}ojasiewicza 6, 30-348 Krakow, Poland\\
$^{2}$Institute for Communications Engineering\\ Technical University of Munich\\Theresienstr. 90, D-80333 Munich, Germany}
\begin{document}

\maketitle

\begin{abstract}
In this paper, we propose a semi-supervised clustering method, {\cmc}, that models data with a set of Gaussian distributions and that retrieves clusters based on a partial labeling provided by the user (partition-level side information). By combining the ideas from cross-entropy clustering (CEC) with those from the information bottleneck method (IB), our method trades between three conflicting goals: the accuracy with which the data set is modeled, the simplicity of the model, and the consistency of the clustering with side information. Experiments demonstrate that {\cmc} has a performance comparable to Gaussian mixture models (GMM) in a classical semi-supervised scenario, but is faster, more robust to noisy labels, automatically determines the optimal number of clusters, and performs well when not all classes are present in the side information. Moreover, in contrast to other semi-supervised models, it can be successfully applied in discovering natural subgroups if the partition-level side information is derived from the top levels of a hierarchical clustering.
\end{abstract}

\section{Introduction}

%semi-sup clustering
Clustering is one of the core techniques of machine learning and data analysis, and aims at partitioning data sets based on, e.g., the internal similarity of the resulting clusters. While clustering is an unsupervised technique, one can improve its performance by introducing additional knowledge as side information. This is the field of semi-supervised or constrained clustering.

%classical side information
One classical type of side information in clustering are pairwise constraints: human experts determine whether a given pair of data points belongs to the same (must-link) or to different clusters (cannot-link) \cite{basu2008constrained}. Although this approach received high attention in the last decade, the latest reports \cite{yi2012semi} suggest that in real-life problems it is difficult to answer whether or not two objects belong to the same group without a deeper knowledge of data set. This is even more problematic as erroneous pairwise constraints can easily lead to contradictory side information \cite{fei2005bayesian}. 

A possible remedy is to let experts categorize a set of data points rather than specifying pairwise constraints. This \emph{partition-level side information} was proposed in \cite{basu2003semi} and recently considered in \cite{liu2015clustering}. The concept is related to partial labeling applied in semi-supervised classification and assumes that a small portion of data is labeled. In contrast to semi-supervised classification \cite{zhu2009introduction, Tu2016673}, the number of categories is not limited to the true number of classes; in semi-supervised clustering one may discover several clusters among unlabeled data points. Another advantage of partition-level side information is that, in contrast to pairwise constraints, it does not become self-contradictory if some data points are mislabeled.

%our proposal
In this paper, we introduce a semi-supervised clustering method, {\cmc}, based on partition-level side information. {\cmc} combines Cross-Entropy Clustering (CEC) \cite{tabor2014cross, cec_guide, spurek2017active}, a model-based clustering technique, with the Information Bottleneck (IB) method \cite{tishby2000information, chechik2005information}
to build {\bf the smallest model that preserves the side information and provides a good model of the data distribution}. In other words, {\cmc} automatically determines the required number of clusters to trade between model complexity, model accuracy, and consistency with the side information.

Consistency with side information is ensured by penalizing solutions in which data points from different categories are put in the same cluster. Since modeling a category by multiple clusters is not penalized, one can apply {\cmc} to obtain a fine clustering even if the  the human expert categorizes the data into only few basic groups, see Figure \ref{fig:hierarchyDiagram}. Although this type of side information seems to be a perfect use case for cannot-link constraints, the computational cost of introducing side information to {\cmc} is negligible while the incorporation of cannot-link constraints to similar Gaussian mixture model (GMM) approaches requires the use of graphical models, which involves high computational cost.  {\cmc} thus combines the flexibility of cannot-link constraints with an efficient implementation.

\begin{figure}[t]
\centering
\begin{tikzpicture}[every tree node/.style={top color=white,
    bottom color=blue!25,
    rectangle,rounded corners,
    minimum height=8mm,
    draw=blue!75,
    very thick,
    %drop shadow,
    align=center,
    text depth = 0pt},edge from parent/.style={draw=blue!50,
    thick}]
\Tree [.\node (Root) {Data};
        [.{category A}
            [.{A$_1$} ]
            [.\node[draw=none, bottom color=white]{$\hspace{1em}\ldots \hspace{1em}$};]
            [.{A$_k$} ] ] 
        [.{category B}
            [.{B$_1$} ]
            [.\node[draw=none, bottom color=white]{ $ \hspace{1em} \ldots \hspace{1em} $};]
            [.{B$_k$} ] ] 
]
    \begin{scope}[xshift=1.6in,every tree node/.style={top color=white,
    bottom color=red!25,
    rectangle,rounded corners,
    minimum height=8mm,
    draw=red!75,
    very thick,
    %drop shadow,
    align=center,
    text depth = 0pt},edge from parent path={}]
\Tree [.\node[draw=none, bottom color=white]{}; 
[ .\node[draw=none, bottom color=white, text=red]{ $\Longrightarrow$};]
 [ .{side-information} ]
 ]
\end{scope}
    \begin{scope}[xshift=-1.6in,every tree node/.style={top color=white,
    bottom color=red!25,
    rectangle,rounded corners,
    minimum height=8mm,
    draw=red!75,
    very thick,
    %drop shadow,
    align=center,
    text depth = 0pt},edge from parent path={}]
\Tree [.\node[draw=none, bottom color=white]{}; 
 [ .{side-information} ]
 [ .\node[draw=none, bottom color=white, text=red]{ $\Longleftarrow$};]
 ]
\end{scope}
\end{tikzpicture}
\caption{Subgroups discovery task. The expert provides side information by dividing a data set into two categories. Making use of this knowledge, the algorithm discovers natural subgroups more reliably than in the unsupervised case.}
\label{fig:hierarchyDiagram}
\end{figure}
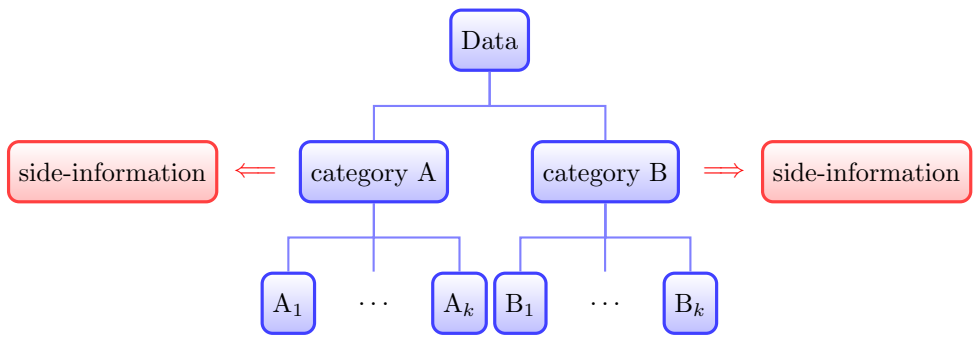

% %details
% To explain a idea behind {\cmc}, suppose that $X \subset \R^N$ is a data set and let $\Z =\{Z_1,\ldots,\Z_m\}$ be a partition of labeled data $X_\ell \subseteq X$ into pre-specified categories. Our method aims at finding a model $f=\max(p_1 f_1,\ldots,p_k f_k)$ for and a corresponding partition $\Y=\{Y_1,\ldots,Y_k\}$ of $X$ that minimize the cross-entropy of the model $f$ and that preserves the \emph{partition-level side information} $\Z$. In other words, we try to find a model $f$ and a partition $\Y$ minimizing
% \begin{equation}\label{eq:costIntro}
%  \E_\beta(X, \Z; f, \Y) :=H(\Y) + \sum_{i=1}^k \frac{|Y_i|}{|X|} H^\times(Y_i \| f_i) + \beta H(\Z | \Y),
% \end{equation}
% where $\beta > 0$ is a fixed parameter. We assume that $f_i$ are Gaussian densities and that the weights $p_i$ are non-negative and sum to one. Our cost function thus trades between three conflicting goals: The Shannon entropy of the partition $H(\Y)$ introduces the penalty for maintaining too many clusters; the cross-entropy $H^\times(Y_i \| f_i)$ ensures an accurate model of the data distribution inside cluster $Y_i$; the conditional entropy $H(\Z| \Y)$ leads to a clustering that is consistent with the partition-level side information provided by the user.

% Since conditional entropy $H(\Z| \Y)$ vanishes if each cluster in $\Y$ contains data points from at most one category in $\Z$, our method allows finer partitions $\Y$ if they are more favorable from a model-fitting perspective. 

%experiments

We summarize the main contributions and the outline of our paper:
\begin{enumerate}
 \item We combine ideas from model-based CEC and from the information bottleneck method to formulate our clustering method {\cmc} for both complete and partial side information (Sections~\ref{sec:model:IB} and~\ref{sec:model:partial}). The proposed method does not require the true number of clusters as an input.
 \item We propose a modified Hartigan algorithm to optimize the {\cmc} cost function (Section \ref{sec:model:algo}). The algorithm has a complexity that is linear in the number of data points in each iteration, and it usually requires less iterations than the expectation-maximization (EM) algorithm used for fitting GMMs (\ref{app:optimizationTime}).
 \item We provide a theoretical analysis of the trade-off between the CEC and the IB cost function in Section~\ref{sec:weight}. This places the parameter selection problem on a solid mathematical ground (see Theorems \ref{thm:split} and \ref{thm:limitBeta}). 
 \item We perform extensive experiments demonstrating that {\cmc} is more robust to noisy side information (i.e., miscategorized data points) than state-of-the-art approaches to semi-supervised clustering (Section~\ref{sec:experiment:noisy}). Moreover, {\cmc} performs well when not all categories are present in the side information (Section~\ref{sec:experiment:few}), even though the true number of clusters is not specified.
 \item We perform two case studies: In Section~\ref{sec:chem}, a human expert provided a partition-level side information about the division of chemical compounds into two basic groups (as in Figure \ref{fig:hierarchyDiagram}); {\cmc} discovers natural chemical subgroups more reliably than other semi-supervised methods, even if some labels are misspecified by the expert (Figure \ref{fig:chem-intro}). The second case study in Section~\ref{sec:image} applies {\cmc} to image segmentation.
\end{enumerate}

\begin{figure}
\begin{center}
 \includegraphics[width=2.5in]{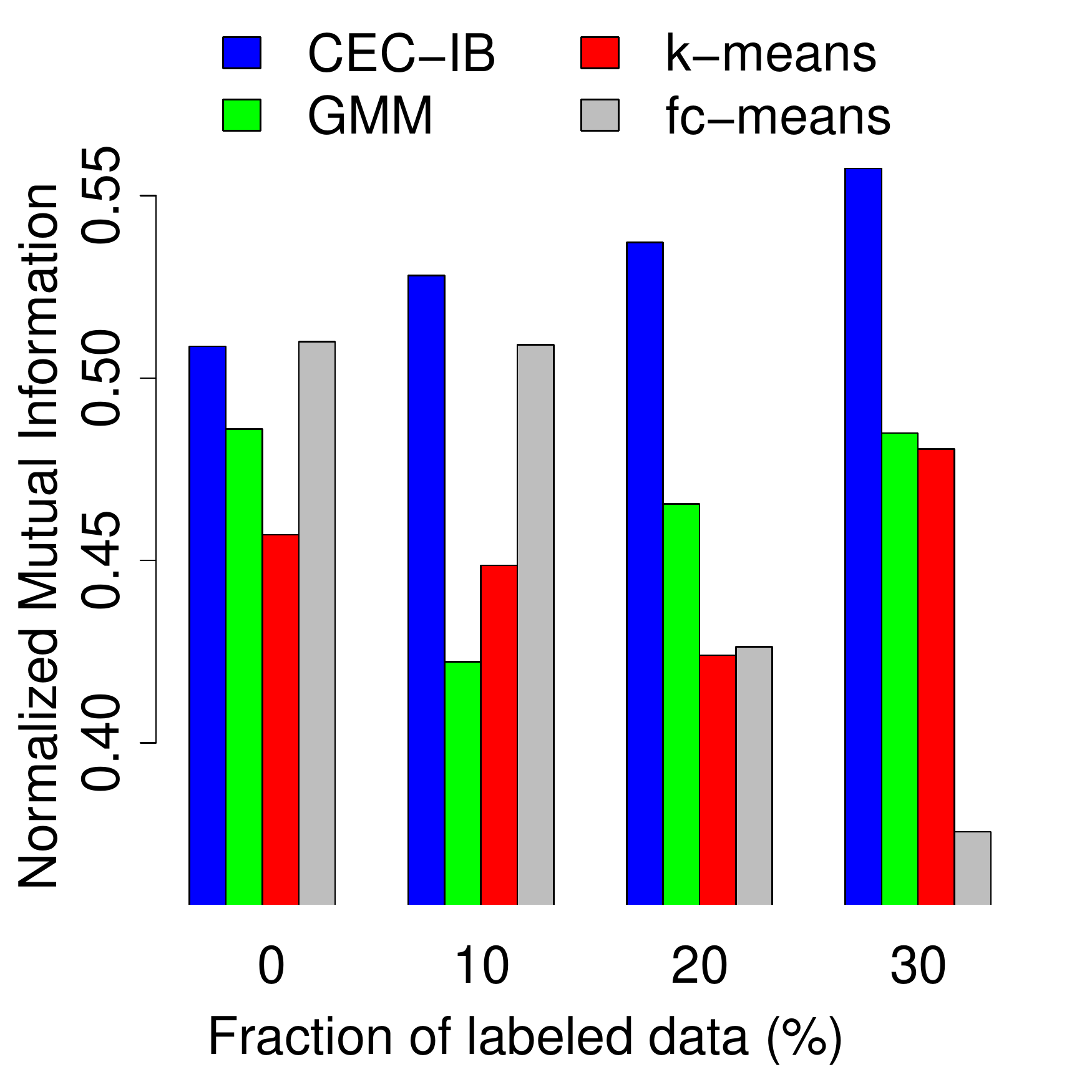}
\end{center}
\caption{Detection of chemical subgroups when 10\% of side information was erroneous. The results of {\cmc}, GMM with cannot-link constraints, constrained k-means and fuzzy c-means (fc-means) were measured by normalized mutual information.}
\label{fig:chem-intro}
\end{figure}

%organization of the paper
% The paper is organized as follows. The next section covers related semi-supervised clustering methods. The cost function of proposed model is introduced in section~\ref{sec:model}: first we review basics of CEC, then we extend CEC to deal with fully labeled data, and finally we adapt the model to the partition-level side information. In Section~\ref{sec:weight} we provide a theoretical analysis of our model. Section~\ref{sec:experiments} outlines a practical heuristic for optimizing {\cmc} cost function and illustrates our results with experiments.

\section{Related work}\label{sec:related}

Clustering has been an important topic in machine learning and data analysis for a long time. Various methods were introduced for splitting data into groups, including model-based, distance-based, spectral, fuzzy, and hierarchical methods (see~\cite{jain1999data, aggarwal2013data} for a survey).

% \marek{Clustering \bernhard{has been an important} topic in machine learning and data analysis \bernhard{for a long time}. Various methods were introduced for splitting data into groups, because each user may want to retrieve different information from the data \cite{qian2016cluster, spurek2017active, ?}. Classical techniques include model-based approaches \cite{spurek2017general, ?}, where a group is modeled as a sample generated from a specific probability distribution, distance-based methods \cite{?}, which use a metric to define a similarity between data points and hierarchical clustering \cite{?} to build a detailed hierarchy of groups. In the last decade, a lot of researchers also applied graph methods to exploit a local structure of data and connections between data instances \cite{qian2012fast}.}

%Despite this diversity, real-world data create new demands and, in consequence, new approaches are needed to resolve these tasks. One example is multi-view clustering, which considers gathering information coming from different domains \cite{jiang2015collaborative}. Another example, complementary or alternative clustering aims at finding groups which provide a perspective on the data that extends what can be inferred from previous clusterings~\cite{gondek2007non}. 
%Finally, semi-supervised clustering -- the problem investigated in this work -- makes use of side information to achieve better clustering results or to provide robustness against noisy data.

Adding to this diversity of techniques, a large number of specialized types of clustering have been developed. 
% Despite this diversity, real-world data create new demands and, in consequence, new approaches are needed to resolve these tasks. 
One example is multi-view clustering, which considers gathering information coming from different domains \cite{jiang2015collaborative}. As another example, complementary or alternative clustering aims at finding groups which provide a perspective on the data that expands on what can be inferred from previous clusterings~\cite{gondek2007non}. 
Finally, semi-supervised clustering -- the problem investigated in this work -- makes use of side information to achieve better clustering results or to provide robustness against noisy side information~\cite{basu2008constrained}.

The traditional approach to incorporate side information into clustering is based on pairwise constraints. The authors of~\cite{Asafi_ConstraintsFeatures} suggested reducing distances between data points with a must-link constraint and adding a dimension for each cannot-link constraint. After updating all other distances to, e.g., satisfy the triangle inequality, the thus obtained pairwise distance matrix can be used for unsupervised clustering. Kamvar et al.~\cite{Kamvar_SpectralLearning} considered a similar procedure, taking the pairwise affinity matrix and setting must-links and cannot-links to predefined maximum and minimum values, respectively. Instead of clustering, they applied eigenvector-based classification taking the labeled data as training set. Another spectral technique, proposed in~\cite{Wang_FCSC}, relies on solving a generalized eigenvalue problem. Qian et al. \cite{qian2016affinity} developed a framework for spectral clustering that allows using side information in the form of pairwise constraints, partial labeling, and grouping information. An information-theoretic cost function, squared mutual information, was proposed for semi-supervised clustering in~\cite{Calandriello_InfoMaxClust}. Also clustering techniques based on non-negative matrix or concept factorization can incorporate pairwise constraints as regularizers~\cite{lu2016semi}.

%% partition level -- I put it to the front, it fits better here.
As mentioned in the introduction, partition-level side information refers to a partial labeling of the data points that need not necessarily consider all classes -- the categories provided as side information may be only a subset of classes, or, as in Figure~\ref{fig:hierarchyDiagram}, be of a hierarchical nature. In consequence, clustering with partition-level side information differs significantly from a typical semi-supervised classification task, as the clustering algorithm should detect clusters within categories and/or within unlabeled data points. A recent paper using partition-level side information is~\cite{liu2015clustering}, where the authors add additional dimensions to feature vectors and propose a modification of k-means to cluster data points. In~\cite{basu2003semi}, partition-level side information was used to design a better initialization strategy for k-means. Similarly, partition-level side information was used to propose a semi-supervised version of fuzzy c-means~\cite{pedrycz1997fuzzy, pedrycz2008fuzzy}. The authors added a regularization term to the fuzzy c-means cost function that penalizes fuzzy clusterings that are inconsistent with the side information. This technique was later combined with feature selection methods~\cite{lai2013improving}. Finally, partition-level side information can be used in density-based clustering such as DBSCAN. Specifically, in~\cite{lelis2009semi} the authors proposed an algorithm that sets the parameter defining the neighborhood radius of a data point based on partial labeling.

%GMM with side information
%mixmod, bgmm itp - packages
GMMs can be easily adapted to make us of partition-level side information by combining the classical unsupervised GMM with a supervised one \cite{ambroise2001learning, zhu2009introduction}. This approach can be extended to labels with reliability information \cite{come2009learning, hullermeier2005learning, bouveyron2009robust}. Various statistical and machine learning libraries, such as mixmod \cite{lebret2015rmixmod} or bgmm \cite{biecek2012r}, provide implementations of GMMs with partition-level side information.

%GMM methods
Also pairwise constraints can be incorporated into GMMs, where dependencies between the hidden cluster indicator variables are then usually modeled by a hidden Markov random field. This procedure was adopted, for example, in~\cite{shental2004computing} to account for cannot-link constraints. Must-link constraints were considered by treating all involved data points as a single data point with a higher weight. The parameters of the GMM, which was used for hard or soft clustering, are obtained by a generalized expectation-maximization procedure that requires simplifications or approximations~\cite{Lu_PPC, basu2004probabilistic, lange2005learning}. An overview of GMM-based methods with pairwise constraints can be found in~\cite{Nelson_RevisitingModels}. 

In contrast to most GMM approaches, our method does not require knowledge of the correct number of clusters; initialized with any (larger) number, {\cmc} reduces the number of clusters for an optimal trade-off between model accuracy, model complexity (i.e., number of clusters), and consistency with the side information. 
%
%% Removed b/c redundant
%It is worth noting that {\cmc} does not penalize the model for constructing clusters that are not identical with pre-specified knowledge, but can create more clusters for elements with the same labels. In consequence, it might be applicable in different situations than typical semi-supervised clustering models, e.g. in discovering subgroups in partially classified data as shown in Figure \ref{fig:hierarchyDiagram} and Section \ref{sec:chem}. 

%information bottleneck
Our method is closely related to the information bottleneck method, which focuses on lossy compression of data preserving the information of a stochastically related random variable \cite{tishby2000information, slonim2002information}. Modifications of IB were used in consensus clustering \cite{topchy2003combining} or alternative clustering \cite{gondek2007non}. The mutual information between data points and its clusters, which describes the cost of (lossy) data compression in IB, is replaced in our model by the cross-entropy -- see Section~\ref{sec:model:IB} for more details. Thus, while IB focuses model simplicity and consistency with side information, {\cmc} adds model accuracy to the cost.

%other methods -- I put it to the front, please change if necessary.
% We shortly review other constrained clustering methods: The authors of~\cite{Asafi_ConstraintsFeatures} suggested using the pairwise distance matrix, reducing distances between must-links and adding a dimension for each cannot-link constraint, and updating all other distances to, e.g., satisfy the triangle inequality. The thus obtained pairwise distance matrix can then be used for unsupervised clustering. Kamvar et al~\cite{Kamvar_SpectralLearning} considered a similar procedure, taking the pairwise affinity matrix and setting must-links and cannot-links to predefined maximum and minimum values, respectively. Instead of clustering, they applied eigenvector-based classification taking the labeled data as training set. Another spectral technique, proposed in~\cite{Wang_FCSC}, relied on solving a generalized eigenvalue problem. The authors of~\cite{Calandriello_InfoMaxClust} applied spectral clustering to optimize the squared mutual information, an information-theoretic cost function. Lu et al~\cite{lu2016semi} incorporate pairwise constraints into concept factorization, as the reward and penalty terms, to enhance the clustering performance with the supervisory information.

\section{Cross-Entropy Clustering with an Information Bottleneck Constraint}\label{sec:model}
\newcommand{\Prob}[1]{\mathbf{P}(#1)}

We now pave the way for our {\cmc} method. Since our model is related to CEC, we first review its basics in Section~\ref{sec:model:cec}. For completely labeled data, i.e., for the case where all data points are labeled, we then introduce our {\cmc} model based on ideas from IB in Section~\ref{sec:model:IB}. Section~\ref{sec:model:partial} extends the analysis to deal with the case where only some data points are labeled. We conclude this section by presenting and analyzing a clustering algorithm that finds a local optimum of our {\cmc} cost function.

\subsection{Cross-entropy clustering}\label{sec:model:cec}

CEC is a model-based clustering method that minimizes the empirical cross-entropy between a finite data set $X \subset \R^N$ and a parametric mixture of densities \cite{tabor2014cross}. This parametric mixture is a subdensity\footnote{i.e., $f(x)\ge 0$ and $\int_{\R^N} f(x)dx \le 1$.} given by
$$
f=\max(p_1 f_1, \ldots, p_k f_k)
$$
where $p_1$ through $p_k$ are non-negative weights summing to one and where $f_1$ through $f_k$ are densities from the Gaussian family $\F$ of probability distributions on $\R^N$. The empirical cross-entropy between $X$ and subdensity $f$ is
$$
H^\times(X \| f) = - \frac{1}{|X|} \sum_{x \in X} \log f(x) =-\frac{1}{|X|} \sum_{i=1}^k \sum_{x \in Y_i} \log(p_i f_i(x))
$$
where 
\begin{equation}\label{eq:partition}
  \Y = \{Y_1,\ldots,Y_k\},\quad Y_i :=\{x\in X{:}\ p_if_i (x)  = \max_{j} p_j f_j(x)\}
\end{equation}
is a partition of $X$ induced by the subdensity $f$. Letting 
\begin{flalign*}
& \mu_{Y_i} = \frac{1}{|Y_i|} \sum_{x \in Y_i} x,\\
& \Sigma_{Y_i} = \frac{1}{|Y_i|} \sum_{x \in Y_i} (x - \mu_{Y_i})(x - \mu_{Y_i})^T
\end{flalign*}
be the sample mean vector and sample covariance matrix of cluster $Y_i$, we show in~\ref{app:cec} that CEC looks for a clustering $\Y$ such that the following cost is minimized:
\begin{equation} \label{eq:cec}
H^\times(X \| f) =  H(\Y) + \sum_{i=1}^k  \frac{|Y_i|}{|X|} H(\N(\mu_{Y_i},\Sigma_{Y_i})),
\end{equation}
where the model complexity is measured by the Shannon entropy of the partition $\Y$,
$$
H(\Y) := - \sum_{i=1}^k \frac{|Y_i|}{|X|} \log \frac{|Y_i|}{|X|},
$$
and where the model accuracy, i.e., accuracy of density estimation in cluster $Y_i$, is measured by the differential entropy of the Gaussian density $f_i$,
$$
H(\N(\mu_{Y_i},\Sigma_{Y_i})) = \tfrac{N}{2}\ln(2\pi e)+\tfrac{1}{2}\ln \det (\Sigma_{Y_i}) = \min_{f_i\in\F} H^\times(Y_i \| f_i).
$$

The main difference between CEC and GMM-based clustering lies in substituting a mixture density $f=p_1 f_1 + \cdots + p_k f_k$ by a subdensity $f=\max(p_1 f_1, \ldots,p_k f_k)$. This modification allows to obtain a closed form solution for the mixture density given a fixed partition $\Y$, while for a fixed mixture density the partition $\Y$ is given in~\eqref{eq:partition}. This suggests a heuristic similar to the k-means method. In consequence, CEC might yield a slightly worse density estimation of data than GMM, but converges faster (see Section~\ref{sec:model:algo} and Appendix~\ref{app:optimizationTime}) while the experimental results show that the clustering effects are similar.

\subsection{CEC-IB with completely labeled data}\label{sec:model:IB}

We now introduce {\cmc} for completely labeled data (i.e., all data points are labeled) by combining the ideas from model-based clustering with those from the information bottleneck method. We also show that under some assumptions {\cmc} admits an alternative derivation based on conditional cross-entropy given the side information.
 
% First, we define the partition-level side information:
\begin{definition}
Let $X$ be a finite data set and let $X_\ell\subseteq X$ denote the set of labeled data points. The \emph{partition-level side information} is a partition $\Z=\{Z_1,\ldots,Z_m\}$ of $X_\ell$, where every $Z_j \in \Z$ contains all elements of $X_\ell$ with the same label.
\end{definition}

To make this definition clear, suppose that $\X=\{X_1,X_2,\dots,X_l\}$ is the \emph{true} partition of the data that we want to recover, i.e., we want to obtain $\Y=\X$. The partition-level side information $\Z$ can take several possible forms, including:
\begin{itemize}
 \item $|\Z|=l$, and $Z_j\subseteq X_j$ for $j=1,\dots,l$. This is equivalent to the notion of partial labeling in semi-supervised classification.
 \item $|\Z|=m<l$ and for every $j=1,\dots,m$ there is a different $i$ such that $Z_j\subseteq X_i$. This is the case where only some of the true clusters are labeled.
 \item $|\Z|=m<l$ and there are $m$ disjoint sets $I_j\subset\{1,\dots,l\}$ such that $Z_j\subset\bigcup_{i\in I_j} X_i$. This is the case where the labeling is derived from a higher level of the hierarchical true clustering (cf.~Figure~\ref{fig:hierarchyDiagram}).
\end{itemize}

For the remainder of this subsection, we assume that the side information is complete, i.e., that each data point $x\in X$ is labeled with exactly one category. In other words, $X_\ell = X$ and $\Z$ is a partition of $X$. We drop this assumption in Section~\ref{sec:model:partial}, where we consider partial labeling, i.e., $X_\ell\subseteq X$.

Our effort focuses on finding a partition that is consistent with side information:
\begin{definition}\label{def:consistency}
Let $X$ be a finite data set and let $X_\ell \subseteq X$ be the set of labeled data points that is partitioned into $\Z=\{Z_1,\ldots,Z_m\}$. We say that a partition $\Y=\{Y_1,\dots,Y_k\}$ of $X$ is \emph{consistent} with $\Z$, if for every $Y_i$ there exists at most one $Z_j$ such that $Z_j \cap Y_i\neq \emptyset$.
\end{definition}
The definition of consistency generalizes the refinement relation between partitions of the same set. If, as in this section, $X_\ell=X$, then $\Y$ is consistent with $\Z$ if and only if $\Y$ is a refinement of $\Z$. In other words, a clustering $\Y$ is consistent with $\Z$ if every $Y_i \in \Y$ contains elements from at most one category $Z_j \in \Z$. Mathematically, for a clustering $\Y$ consistent with $\Z$ we have
\begin{equation}\label{eq:consistent}
 \forall Y_i\in\Y{:}\ \exists! j'=j'(i){:}\ Z_j \cap Y_i =\begin{cases}  Y_i & j=j'\\0 & \text{else}.\end{cases}
\end{equation}
Thus, for a consistent clustering $\Y$ the conditional entropy $H(\Z|\Y)$ vanishes:\begin{multline*}
H(\Z|\Y) 
= \sum_{i=1}^k \frac{|Y_i|}{|X|} H(\Z | Y_i)
= - \sum_{i=1}^k \sum_{j=1}^m \frac{|Z_j \cap Y_i|}{|X|} \log \left( \frac{|Z_j \cap Y_i|}{|Y_i|} \right)\\
\stackrel{(a)}{=} - \sum_{i=1}^k \frac{|Z_{j'} \cap Y_i|}{|X|} \log \left( \frac{| Y_i|}{|Y_i|} \right) =0
\end{multline*}
where $(a)$ is due to~\eqref{eq:partition}.

% \begin{align*}
% H(\Z|\Y) 
% &= \sum_{i=1}^k \frac{|Y_i|}{|X|} H(\Z | Y_i)\\
% &= - \sum_{i=1}^k \sum_{j=1}^m \frac{|Y_i|}{|X|} \frac{|Z_j \cap Y_i|}{|Y_i|} \log \left( \frac{|Z_j \cap Y_i|}{|Y_i|} \right)\\
% &= - \sum_{i=1}^k \sum_{j=1}^m \frac{|Z_j \cap Y_i|}{|X|} \log \left( \frac{|Z_j \cap Y_i|}{|Y_i|} \right)\\
% &= - \sum_{i=1}^k \sum_{j=1}^m \frac{|Z_j \cap Y_i|}{|X|} \log \left( \frac{| Y_i|}{|Y_i|} \right) =0.
% \end{align*}

The conditional entropy $H(\Z|\Y)$ therefore is a measure for consistency with side information: the smaller the conditional entropy, the higher is the consistency. We thus propose the following cost function for {\cmc}  in the case of complete side information, i.e., when $X_\ell = X$ and $\Z$ is a partition of $X$:
\begin{equation}\label{eq:costCond}
 \E_\beta(X, \Z; \Y) :=H(\Y) + \sum_{i=1}^k \frac{|Y_i|}{|X|} \crossfi + \beta H(\Z| \Y) \text{, where } \beta \geq 0.
\end{equation}
The first two terms are the CEC cost function~\eqref{eq:cec}, and the last term $H(\Z|\Y)$ penalizes clusterings $\Y$ that are not consistent with the side information $\Z$. Thus {\cmc} aims at finding the minimal number of clusters needed to model the data set distribution and to preserve the consistency with the side information. The weight parameter $\beta$ trades between these objectives; we will analyze rationales for selecting this parameter in Section~\ref{sec:weight}. 

Our cost function~\eqref{eq:costCond} is intricately connected to the IB and related methods. In the notation of this work, i.e., in terms of partitions rather than random variables, the IB cost function is given as \cite{tishby2000information}
$$
I(X; \Y) - \beta I(\Y; \Z) = H(\Y) - H(\Y|X) - \beta H(\Z) + \beta H(\Z|\Y).
$$
Noticing that $H(\Z)$ does not depend on the clustering $\Y$, the main difference between IB and {\cmc} is that {\cmc} incorporates a term accounting for the modeling accuracy in each cluster, while IB adds a term related to the ``softness'' of the clustering: Since $H(\Y|X)$ is minimized for deterministic, i.e., hard clusters, IB implicitly encourages soft clusters. A version of IB ensuring deterministic clusters was recently introduced in~\cite{Strouse_DIB}. The cost function of this method dispenses with the term related to the softness of the clusters leading to a clustering method minimizing 
$$
H(\Y)+ \beta H(\Z|\Y).
$$
Our {\cmc} method can thus be seen as deterministic IB with an additional term accounting for model accuracy. {\cmc} can therefore be considered as a model-based version of the information bottleneck method. 

We end this subsection by showing that under some assumptions, one can arrive at the {\cmc} cost function (with $\beta=1$) in a slightly different way, by minimizing the conditional cross-entropy function:
\begin{theorem}\label{thm:condcross}
Let $X$ be a finite data set that is partitioned into $\Z=\{Z_1,\ldots,Z_m\}$. Minimizing the {\cmc} cost function~\eqref{eq:costCond}, for $\beta=1$, is equivalent to minimizing the conditional cross-entropy function:
 $$
H^\times((X \| f) |\Z) := \sum_{j=1}^m \frac{|Z_j|}{|X|} H^\times(Z_j \| f_{|j}),
$$
where 
$$
f_{|Z_j}:=f_{|j} = \max(p_1(j) f_1,\ldots,p_k(j) f_k)
$$
is the conditional density $f$ given $j$-th category and where $p_1(j),\ldots,p_k(j)$ are non-negative weights summing to one.
\end{theorem}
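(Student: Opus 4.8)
The plan is to prove the equivalence by showing that, over the clusterings the conditional-density model can represent, the two objectives differ only by the additive constant $H(\Z):=-\sum_{j}\frac{|Z_j|}{|X|}\log\frac{|Z_j|}{|X|}$, which is fixed by the given side information and hence does not affect the location of the minimizer. So the entire content is the identity $H^\times((X\|f)|\Z)=\E_1(X,\Z;\Y)-H(\Z)$.

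First I would make the correspondence between the two parameterizations explicit. Each conditional subdensity $f_{|j}=\max_i p_i(j)f_i$ induces, exactly as in~\eqref{eq:partition}, a partition $\Y_{|j}=\{Y_{i|j}\}$ of the category $Z_j$ via $Y_{i|j}:=\{x\in Z_j{:}\ p_i(j)f_i(x)=\max_{i'}p_{i'}(j)f_{i'}(x)\}$. Collecting these per-category partitions yields a clustering $\Y=\bigcup_j\Y_{|j}$ of $X$ in which no cluster crosses a category boundary, so $\Y$ is consistent with $\Z$ in the sense of Definition~\ref{def:consistency}; conversely, every consistent $\Y$ (with a Gaussian fitted to each cluster) arises from such a family of conditional subdensities. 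Here I use the stated assumption that each Gaussian is active in at most one category, which is what renders the shared-Gaussian notation harmless and makes fitting a separate Gaussian per cluster admissible.

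Next I would evaluate the inner minimization for a single category. For a fixed partition $\Y_{|j}$ of $Z_j$, minimizing $H^\times(Z_j\|f_{|j})$ over the weights and Gaussians reduces to the unconditional CEC computation: the Gibbs inequality forces the optimal weights to be the cluster proportions $p_i(j)=|Y_{i|j}|/|Z_j|$, and the identity $H(\N(\mu_{Y_{i|j}},\Sigma_{Y_{i|j}}))=\min_{f_i\in\F}H^\times(Y_{i|j}\|f_i)$ forces the optimal Gaussians to be the fitted ones, giving
$$
\min H^\times(Z_j\|f_{|j}) = H(\Y_{|j}) + \sum_i \frac{|Y_{i|j}|}{|Z_j|}\, H(\N(\mu_{Y_{i|j}},\Sigma_{Y_{i|j}})),
$$
i.e. the CEC cost~\eqref{eq:cec} applied to the data subset $Z_j$.

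Finally I would assemble the pieces. Weighting the previous display by $|Z_j|/|X|$ and summing over $j$, the accuracy terms combine directly into $\sum_i\frac{|Y_i|}{|X|}H(\N(\mu_{Y_i},\Sigma_{Y_i}))$, while the entropy terms are handled by the chain rule for a refinement: since $\Y$ refines $\Z$, knowing $\Y$ determines $\Z$, so $H(\Y)=H(\Z)+H(\Y|\Z)$ with $H(\Y|\Z)=\sum_j\frac{|Z_j|}{|X|}H(\Y_{|j})$, whence $\sum_j\frac{|Z_j|}{|X|}H(\Y_{|j})=H(\Y)-H(\Z)$. Combining these, and using that $H(\Z|\Y)=0$ for consistent $\Y$ (already established in the excerpt), I obtain $H^\times((X\|f)|\Z)=\E_1(X,\Z;\Y)-H(\Z)$, so the two objectives share their minimizers. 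The step I expect to demand the most care is the correspondence of the first paragraph together with the per-category weight optimization: one must verify that the category-wise Gibbs-inequality argument is consistent with the partition it induces, exactly as in the unconditional CEC derivation of Appendix~\ref{app:cec}, and confirm that restricting attention to consistent clusterings is precisely the assumption under which the equivalence holds.
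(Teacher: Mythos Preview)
Your overall plan---show that the two objectives differ only by the additive constant $H(\Z)$---is exactly what the paper does. The gap is in how you get there.

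You impose that ``each Gaussian is active in at most one category,'' which forces the induced clustering $\Y$ to refine $\Z$ and hence $H(\Z|\Y)=0$. This assumption is not part of the theorem, and the paper does not use it. Under your restriction you only establish the identity $H^\times((X\|f)|\Z)=\E_1(\Y)-H(\Z)$ on \emph{consistent} partitions; to conclude that the two minimization problems are equivalent you would still have to argue that the minimizer of $\E_1$ over all partitions is consistent, which is neither claimed nor obvious. Your closing caveat (``restricting attention to consistent clusterings is precisely the assumption under which the equivalence holds'') is therefore not a verification step but an extra hypothesis you have smuggled in.

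The paper's route removes the restriction. Allow each shared density $f_i$ to be active in several categories, so that a cluster $Y_i=\bigcup_j(Y_i\cap Z_j)$ may straddle several $Z_j$. Then: (i) the cross-entropy term aggregates across categories to $\sum_i\frac{|Y_i|}{|X|}H^\times(Y_i\|f_i)$, minimized as usual by the maximum-likelihood Gaussian on the \emph{whole} $Y_i$; (ii) the per-category Gibbs argument still applies and gives optimal weights $p_i(j)=|Z_j\cap Y_i|/|Z_j|$, so the weight term equals $H(\Y|\Z)$ even when $\Y$ does not refine $\Z$. A single use of the chain rule in the form $H(\Y|\Z)=H(\Y)+H(\Z|\Y)-H(\Z)$ then yields $H^\times((X\|f)|\Z)=\E_1(\Y)-H(\Z)$ for \emph{every} partition $\Y$; the term $H(\Z|\Y)$ in~\eqref{eq:costCond} appears from the chain rule rather than being forced to zero by consistency. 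Dropping your extra assumption and working with $Z_j\cap Y_i$ throughout gives the full equivalence with essentially no additional effort.
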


The proof of this theorem is given in~\ref{app:derivation}. It essentially states that our cost function ensures that each category $Z_j$ is modeled by a parametric mixture of densities that is both simple and accurate. We believe that this view on the problem can lead to the development of a clustering algorithm slightly different from what is presented in this paper. %Note, however, that this cost function does not admit the introduction of a weight parameter $\beta$, which may restrict its performance in practical scenarios (see Sections~\ref{sec:weight} and~\ref{sec:experiments}).

\subsection{CEC-IB with partially labeled data}\label{sec:model:partial}
\newcommand{\Lab}{\mathcal{L}}

The previous section assumed that all data points in $X$ were labeled, i.e., the partition-level side information $\Z=\{Z_1,\ldots,Z_m\}$ was a partition of $X$. In this section, we relax this assumption and assume that only a subset $X_\ell \subseteq X$ is labeled. In this case $\Z$ is a partition only of $X_\ell$, and in consequence, the conditional entropy $H(\Z|\Y)$ from the previous subsection is undefined.

To deal with this problem, let $\Lab = \{X_\ell, X \setminus X_\ell\}$ denote the partition of $X$ into labeled and unlabeled data. We decompose the conditional entropy of $\Z$ given partitions $\Y$ and $\Lab$ as
\begin{equation}\label{eq:totalcost}
 H(\Z | \Y, \Lab) = \frac{|X_\ell|}{|X|}  H(\Z | \Y, X_\ell) + \frac{|X\setminus X_\ell|}{|X|} H(\Z | \Y, X\setminus X_\ell),
\end{equation}
where
\begin{align*} 
 H(\Z | \Y, X_\ell) 
 &=\sum_{i=1}^k  \frac{|Y_i \cap X_\ell|}{|X_\ell|} H(\Z | Y_i\cap X_\ell) \notag\\
 &=\sum_{i=1}^k  \frac{|Y_i \cap X_\ell|}{|X_\ell|} \sum_{j=1}^m \frac{|Y_i \cap Z_j|}{|Y_i \cap X_\ell|} \left(-\log \frac{|Y_i \cap Z_j|}{|Y_i \cap X_\ell|}\right).
\end{align*}

Let us now assume that the partition-level side information is a representative sample of true categories. In other words, assume that the probability for a category of an unlabeled data point given the cluster equals the empirical probability of this category for labeled data points in this cluster. To formalize this reasoning, we view the partition $\Z$ as a random variable that takes values in $\{1,\dots,m\}$. Our labeled data set $X_\ell$ corresponds to realizations of this random variable, i.e., for every $x \in X_\ell$, the corresponding random variable $\Z$ assumes the value indicated by the labeling. Since the side information was assumed to be representative, the relative fraction of data points in cluster $Y_i$ assigned to category $Z_j$ gives us an estimate of the true underlying probability; we extrapolate this estimate to unlabeled data points and put
$$
\Prob{\Z=j|Y_i \cap (X\setminus X_\ell)}=\Prob{\Z=j|Y_i \cap X_\ell}=\frac{|Y_i \cap Z_j|}{|Y_i \cap X_\ell|} = \Prob{\Z=j|Y_i}.
$$
Hence, $H(\Z | Y_i\cap (X\setminus X_\ell)) = H(\Z | Y_i\cap X_\ell)$ for every $Y_i$, and we get for \eqref{eq:totalcost}:
\begin{align}
 H(\Z | \Y)&=H(\Z | \Y, \Lab)\\
 & = \frac{|X_\ell|}{|X|}  H(\Z | \Y, X_\ell) + \frac{|X\setminus X_\ell|}{|X|} H(\Z | \Y, X\setminus X_\ell)\\[0.4ex]
& = \frac{|X_\ell|}{|X|}  \sum_{i=1}^k  \frac{|Y_i \cap X_\ell|}{|X_\ell|} H(\Z | Y_i\cap X_\ell) \notag \\ 
&{}+  \frac{|X\setminus X_\ell|}{|X|}  \sum_{i=1}^k \frac{|Y_i \cap (X\setminus X_\ell)|}{|X\setminus X_\ell|} H(\Z | Y_i \cap X_\ell)\\
& =  \sum_{i=1}^k \frac{|Y_i|}{|X|} H(\Z | Y_i \cap X_\ell).\label{eq:condEntPartial}
\end{align}
where the first equality follows because the conditional entropy $H(\Z | \Y, \Lab)$ does not depend on the partition $\Lab$.

With this, we define the {\cmc} cost function for a model with partition-level side information:
\begin{definition} ({\cmc} cost function)
Let $X$ be a finite data set and let $X_\ell \subseteq X$ be the set of labeled data points that is partitioned into $\Z=\{Z_1,\ldots,Z_m\}$. The cost of clustering $X$ into the partition $\Y=\{Y_1,\ldots,Y_k\}$ for a given parameter $\beta \geq 0$ equals
\begin{equation}\label{eq:cost}
\E_\beta(X, \Z; \Y) :=H(\Y) + \sum_{i=1}^k \frac{|Y_i|}{|X|} \left( \crossfi + \beta H(\Z | Y_i \cap X_\ell)\right).
\end{equation}
To shorten the notation we sometimes write $\E_\beta(\Y)$ assuming that $X$ and $\Z$ are fixed.
\end{definition}
Note that for a complete side information, i.e., for $X_\ell=X$, we get precisely the cost function~\eqref{eq:costCond} obtained in the previous subsection.

\subsection{Optimization algorithm}\label{sec:model:algo}

The optimization of {\cmc} cost function can be performed similarly as in the classical CEC method, in which the Hartigan approach \cite{hartigan1979algorithm} is used. 

Let $X$ be a finite data set and let $X_\ell \subseteq X$ be the set of labeled data points that is partitioned into $\Z=\{Z_1,\ldots,Z_m\}$. The entire procedure consists of two steps: initialization and iteration. In the initialization step, a partition $\Y$ is created randomly; $f_i = \N(\mu_{Y_i}, \Sigma_{Y_i})$ are Gaussian maximum likelihood estimators on $Y_i$. In the iteration stage, we go over all data points and reassign each of them to the cluster that decreases the {\cmc} cost~\eqref{eq:cost} the most. After each reassignment, the clusters densities $f_i$ are re-parameterized by the maximum likelihood estimators of new clusters and the cardinalities of categories $Z_j \cap Y_i$ are recalculated. If no cluster membership changed then the method terminates with a partition $\Y$.

Note that this procedure automatically removes unnecessary clusters by introducing the term $H(\Y)$, which is the cost of cluster identification. If the method is initialized with more clusters than necessary, some clusters will lose data points to other clusters in order to reduce $H(\Y)$, and the corresponding clusters may finally disappear altogether (e.g., by the number of data points contained in this cluster falling below a predefined threshold). 

To describe the algorithm in detail, let us denote the cost of a single cluster $Y \subset X$ by
\begin{equation}\label{eq:partialCost}
 \E_\beta(Y) :=  \frac{|Y|}{|X|} \left( -\ln \frac{|Y|}{|X|} + H(\N(\mu_Y, \Sigma_Y)) + \beta H(\Z | Y \cap X_\ell)\right),
\end{equation}
assuming that $X$, $\Z$, and $\beta$ are fixed. Then, for a given partition $\Y$ of $X$ the minimal value of {\cmc} cost function equals:
$$
\E_\beta(X,\Z;\Y) = \sum_{i=1}^k \E_\beta(Y_i).
$$
Making use of the above notation, the algorithm can be written as follows:

\begin{flushleft}
\footnotesize
\begin{algorithmic}[1] 
\STATE \textbf{INPUT:}
	\STATE $X \subset \R^N$ -- data set
	\STATE $\Z = \{Z_1,\ldots,Z_m\}$ -- partition-level side information
	\STATE $k$ -- initial number of clusters
	\STATE $\beta$ -- weight parameter
	\STATE $\varepsilon > 0$ -- cluster reduction parameter
\STATE \textbf{OUTPUT:}\\
	\STATE Partition $\Y$ of $X$\\
\STATE \textbf{INITIALIZATION:}\\
		\STATE $\Y = \{Y_1,\ldots,Y_k\}$ -- random partition of $X$ into $k$ groups 
\STATE \textbf{ITERATION:}\\
	\REPEAT
		\FORALL{$x \in X$}
			\STATE $Y_x \leftarrow$ get cluster of $x$
			\STATE $Y \leftarrow \mathrm{arg} \max\limits_{Y \in \Y} \{ \E_\beta(Y_x) + \E_\beta(Y) - \E_\beta(Y_x \setminus \{x\}) - \E_\beta(Y \cup \{x\}) \}$ \label{pseudo:rec1}
			\IF{$Y \neq Y_x$}
				\STATE move $x$ from $Y_x$ to $Y$
				\STATE update density models of $Y_x$ and $Y$\label{pseudo:rec2}
				\IF{$|Y_x| < \varepsilon \cdot |X|$}
					\STATE delete cluster $Y_x$ and assign its elements to these clusters which minimize the {\cmc} cost function
				\ENDIF
			\ENDIF
		\ENDFOR
	\UNTIL{no switch for all subsequent elements of $X$}
\end{algorithmic}
\end{flushleft}
The outlined algorithm is not deterministic and its results depend on the randomly chosen initial partition. Therefore, the algorithm can be restarted multiple times to avoid getting stuck in bad local minima.

One may think that the recalculation of the models and the evaluation of the cost in lines \ref{pseudo:rec1} and \ref{pseudo:rec2} is computationally complex. Looking at~\eqref{eq:partialCost}, one can see that evaluating the cost for a given cluster requires recomputing the sample mean vector and sample covariance matrix, which, according to \cite[Theorem 4.3.]{tabor2014cross}, has a complexity quadratic in the dimension $N$ of the dataset. Computing the determinant of the sample covariance matrix can be done with cubic complexity. Moreover, computing the conditional entropy of $\Z$ given the current cluster $Y$ is linear in the number $m$ of categories; if the selected data point $x$ is not labeled, then there is no cost at all for computing these terms, since they cancel in the difference in line \ref{pseudo:rec1}. Since in each iteration, all data points have to be visited and, for each data point, all clusters have to be tested, one arrives at a computational complexity in the order of $\mathcal{O}(nk(N^3+m))$. In comparison, Lloyd's algorithm for k-means has a complexity of $\mathcal{O}(nkN)$ in each iteration and the expectation maximization (EM) algorithm to fit a GMM has a complexity of $\mathcal{O}(nkN^2)$~\cite[p.~232]{Redner_EM}. Note, however, that neither classical k-means nor EM is designed to deal with side information, hence, the complexity of semi-supervised algorithms is in general larger. In particular, the addition of cannot-link constraints to GMM can involve a high computational cost. Moreover, in \ref{app:optimizationTime} we provide experimental evidence that the proposed Hartigan algorithm converges faster than Lloyd's or EM, because the model is re-parametrized after each switch.

\section{Selection of the weight parameter}\label{sec:weight}

In this section we discuss the selection of the weight parameter $\beta$, trading between model complexity, model accuracy, and consistency with side information. Trivially, for $\beta=0$ we obtain pure model-based clustering, i.e., the CEC method while, for $\beta\to\infty$, model fitting becomes irrelevant and the obtained clustering is fully consistent with reference labeling. 

Our first theoretical result states that for $\beta=1$ the algorithm tends to create clusters that are fully consistent with the side-information. Before proceeding, we introduce the following definitions:

\begin{definition}\label{def:coarsening}
 Let $X$ be a data set and let $\Y=\{Y_1,\dots,Y_k\}$ be a partition of $X$. Let further $\Z=\{Z_1,\dots,Z_m\}$ be a partition of $X_\ell \subseteq X$. We say that $\Y$ is a \emph{coarsening} of $\Z$,  if for every $Z_j$ there exists a cluster $Y_i$ such that $Z_j \subseteq Y_i$. 
 
We say that the partition $\Y$ is \emph{proportional} to $\Z$, if the fraction of data points in each cluster equals the fraction of \emph{labeled} data points in this cluster, i.e., if $\frac{|Y_i|}{|X|} = \sum_{j=1}^ m\frac{|Z_j\cap Y_i|}{|X_\ell|}=\frac{|Y_i \cap X_\ell|}{|X_\ell|}$.
\end{definition}

%This definition of a coarsening generalizes the relation between partitions of the same set. If $X_\ell=X$, these definitions coincide.

Proportionality is required in the proofs below since it admits applying the chain rule of entropy to $H(\Z|\Y)$ even in the case where $X_\ell\subset X$. In other words, if $\Y$ is proportional to $\Z$, then (see~\ref{app:chainrule} for the proof):
$$
 H(\Z|\Y) + H(\Y) = H(\Z,\Y).
$$
Every coarsening of a proportional partition $\Y$ is proportional. Trivially, if $X_\ell=X$, then every partition $\Y$ is proportional to $\Z$. Note, however, that for finite data sets and if $X_\ell\subset X$, it may happen that there exists no partition $\Y$ proportional to the side information $\Z$ (e.g., if all but one data points are labeled). Nevertheless, the following theorems remain valid as guidelines for parameter selection.

Finally, note that consistency as in Definition~\ref{def:consistency} and coarsening as in Definition~\ref{def:coarsening} are, loosely speaking, opposites of each other. In fact, if $X_\ell=X$ and if $\Z$ is a partition of $X$, then $\Y$ is a coarsening of $\Z$ if and only of $\Z$ is consistent with $\Y$. Although we are interested in partitions $\Y$ consistent with $\Z$, we use the concept of a coarsening to derive results for parameter selection. Moreover, note that a partition $\Y$ can be both consistent with and a coarsening of the side information $\Z$. This is the case where every $Y_i$ contains exactly one $Z_j$ and every $Z_j$ is contained in exactly one $Y_i$ (i.e., $\Y$ has the same number of elements as $\Z$).

\begin{theorem}\label{thm:split}
Let $X \subset \R^N$ be a finite data set and $X_\ell \subseteq X$ be the set of labeled data points that is partitioned into $\Z=\{Z_1,\dots,Z_m\}$. Let $\Y=\{Y_1,\dots,Y_k\}$ be a proportional coarsening of $\Z$, and suppose that the sample covariance matrices $\Sigma_i$ of $Y_i$ are positive definite.

If $\tilde{\Y}=\{\tilde{Y}_1,\dots,\tilde{Y}_{k'}\}$ is a coarsening of $\Y$, then
\begin{equation}\label{eq:inquality}
\E_1(\tilde{\Y}) \geq  \E_1( \Y).
\end{equation}
\end{theorem}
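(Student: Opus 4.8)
The plan is to rewrite the cost $\E_1$ so that its complexity and consistency terms collapse into a single quantity that is \emph{identical} for every proportional coarsening of $\Z$, leaving only the model-accuracy term to be compared. First I would use that $\beta=1$ together with \eqref{eq:condEntPartial} writes the consistency contribution of $\Y$ as the conditional entropy $H(\Z|\Y)$, so that
$$
\E_1(\Y) = H(\Y) + H(\Z|\Y) + \sum_{i=1}^k \frac{|Y_i|}{|X|} \crossfi .
$$
Because $\Y$ is proportional to $\Z$, all of $H(\Y)$, $H(\Z)$, $H(\Z|\Y)$ and $H(\Y|\Z)$ are entropies of the single empirical joint distribution of (cluster, category) over $X_\ell$, namely $P(i,j)=|Y_i\cap Z_j|/|X_\ell|$; hence the chain rule holds in both directions and in particular $H(\Y) + H(\Z|\Y) = H(\Z,\Y) = H(\Z) + H(\Y|\Z)$.

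The key observation is that $H(\Y|\Z)=0$. Indeed, $\Y$ being a coarsening of $\Z$ means every $Z_j$ is contained in a unique cluster, so a labeled point's category determines its cluster; the cluster is therefore a deterministic function of the category, whence $H(\Y|\Z)=0$. The same holds for $\tilde\Y$, since a coarsening of a coarsening of $\Z$ is again a coarsening of $\Z$, and a coarsening of a proportional partition is proportional. Consequently $H(\Y)+H(\Z|\Y) = H(\Z) = H(\tilde\Y)+H(\Z|\tilde\Y)$, so the complexity-plus-consistency part of the cost equals the \emph{same constant} $H(\Z)$ for both partitions, and
$$
\E_1(\tilde\Y) - \E_1(\Y) = \sum_{i'} \frac{|\tilde Y_{i'}|}{|X|} H(\N(\mu_{\tilde Y_{i'}},\Sigma_{\tilde Y_{i'}})) - \sum_{i} \frac{|Y_i|}{|X|} \crossfi .
$$

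It then remains to show that merging clusters cannot decrease the weighted model-accuracy term. I would argue group by group: if $\tilde Y_{i'} = \bigcup_s Y_{i_s}$ with within-group weights $w_s = |Y_{i_s}|/|\tilde Y_{i'}|$, the law of total covariance gives
$$
\Sigma_{\tilde Y_{i'}} = \sum_s w_s \Sigma_{Y_{i_s}} + \sum_s w_s (\mu_{Y_{i_s}}-\mu_{\tilde Y_{i'}})(\mu_{Y_{i_s}}-\mu_{\tilde Y_{i'}})^T \succeq \sum_s w_s \Sigma_{Y_{i_s}},
$$
the last term being positive semidefinite. Using monotonicity of $\ln\det$ with respect to the Loewner order followed by its concavity on positive definite matrices, $\ln\det \Sigma_{\tilde Y_{i'}} \geq \ln\det \bigl(\sum_s w_s \Sigma_{Y_{i_s}}\bigr) \geq \sum_s w_s \ln\det \Sigma_{Y_{i_s}}$, which (after adding the common constant $\tfrac{N}{2}\ln(2\pi e)$) becomes $H(\N(\mu_{\tilde Y_{i'}},\Sigma_{\tilde Y_{i'}})) \geq \sum_s w_s H(\N(\mu_{Y_{i_s}},\Sigma_{Y_{i_s}}))$. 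Multiplying by $|\tilde Y_{i'}|/|X|$, summing over $i'$, and using $\tfrac{|\tilde Y_{i'}|}{|X|}w_s = \tfrac{|Y_{i_s}|}{|X|}$ shows the accuracy difference above is nonnegative, which is exactly \eqref{eq:inquality}. The positive-definiteness hypothesis guarantees that all determinants are strictly positive, so the entropies are finite and the matrix inequalities apply.

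The main obstacle I anticipate is the accuracy inequality, i.e., correctly chaining Loewner monotonicity with concavity of $\ln\det$; the weight bookkeeping and the chain-rule reduction are routine once the vanishing of $H(\Y|\Z)$ is recognized. A secondary point to treat carefully is justifying the two-sided chain rule under mere proportionality rather than under $X_\ell=X$, which is precisely where the proportionality hypothesis is genuinely used.
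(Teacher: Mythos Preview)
Your proposal is correct and follows essentially the same route as the paper: both use the chain rule for proportional partitions (Appendix~\ref{app:chainrule}) to collapse $H(\Y)+H(\Z|\Y)$ to a quantity independent of the particular coarsening (the paper shows $H(\Z,\Y)=H(\Z,\tilde\Y)$ via $H(\Y|\Z,\tilde\Y)=0$, which is equivalent to your $H(\Y|\Z)=0$), and both handle the model-accuracy term via the law of total covariance together with monotonicity and concavity of $\ln\det$. The only cosmetic difference is that the paper packages the accuracy step as a two-cluster lemma (Lemma~\ref{lem:split}) applied recursively, whereas you apply concavity directly to the full convex combination; your version is slightly more streamlined but relies on the same matrix inequalities.
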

\begin{proof}
See~\ref{app:proofsplit}.
\end{proof}

% \cms{I moved the following remark after the theorem according to the suggestion of reviewer 1}
% \begin{remark}
% For finite data sets, if $X_\ell\subset X$, it may happen that there exists no partition $\Y$ proportional to the side information $\Z$: Consider the scenario in which all but one data points are labeled. %$The following theorems should thus be seen as guidelines for parameter selection.
% \end{remark}

%\begin{remark}
%The constraint that the sample covariance matrices $\Sigma_i$ are positive definite is not trivial. If the number of data points in a cluster $Y_i$ is smaller than the dimension $N$, then the sample covariance matrix $\Sigma_i$ will be singular, and hence the above theorem does not apply. If the number of data points exceeds $N$, then (under mild conditions on the underlying distribution of the data set), $\Sigma_i$ will be positive definite almost surely. In practice, this condition can be enforced by, e.g., removing a cluster as soon as it has less than $N$ data points.
%\end{remark}

% The following fact, showing that the entropy of a data set (related Gaussian density) is never lower than the weighted entropies of clusters, will be needed in the proof of this theorem:

An immediate consequence of Theorem~\ref{thm:split} is that, for $\beta=1$, {\cmc} tends to put elements with different labels in different clusters. Note, however, that there might be partitions $\Y$ that are consistent with $\Z$ that have an even lower cost \eqref{eq:cost}: Since every consistent partition $\Y$ satisfies $H(\Z|\Y)=0$, any further refinement of $\Y$ reduces the cost whenever the cost for model complexity, $H(\Y)$, is outweighed by the modeling inaccuracy $\sum_{i=1}^k \frac{|Y_i|}{|X|} \crossfi$.

Theorem~\ref{thm:split} does not assume that the side information induces a partition of $X$ that fits our intuition of clusters: the $Z_j$  need not be a connected set, but could result from, say, a random labeling of the data set $X$. Then, for $\beta=1$, splitting $X$ into clusters that are consistent with the side information will be at least as good as creating a single cluster. Interestingly, if the labeling is completely random, any $\beta<1$ will prevent dividing the data set into clusters: 

\begin{remark}\label{rem:randomlabels} 
Suppose a completely random labeling for the setting of Theorem~\ref{thm:split}. More precisely, we  assume that the set of labeled data $X_\ell$ is divided into $\Z=\{Z_1,\ldots,Z_m\}$ and that the partition $\Y$ is a proportional coarsening of $\Z$. If sufficiently many data points are labeled, we may assume that the sample covariance matrix $\Sigma_{Y_i}$ of $Y_i$ is close to the covariance matrix $\Sigma_X$ of $X$, i.e. $\Sigma_{Y_i}\approx \Sigma_X$. For any coarsening $\tilde{\Y}$ of $\Y$, the cross-entropies for $\Y$ and $\tilde{\Y}$ are approximately equal:
$$
 \sum_i \frac{|Y_i|}{|X|} \crossfi \approx \sum_j \frac{|\tilde{Y}_j|}{|X|} H(\N(\mu_{\tilde{Y}_j},\Sigma_{\tilde{Y}_j})) 
 \approx H(\N(\mu_X,\Sigma_X))
$$
because the sample covariance matrices of $\tilde{Y} \in \tilde{\Y}$ are also close to $\Sigma_X$.

If we compare the remaining parts of cost function \eqref{eq:cost}, then with $\beta<1$ we obtain:
 \begin{multline}\label{eq:betacost}
  H(\Y) + \beta H(\Z|\Y) = (1-\beta)H(\Y) + \beta (H(\Y)+H(\Z|\Y)) \\= (1-\beta)H(\Y) + \beta (H(\tilde\Y)+H(\Z|\tilde\Y))
  > H(\tilde\Y)+\beta H(\Z|\tilde\Y).
 \end{multline}
The last inequality follows from the fact that $H(\Y)>H(\tilde\Y)$. Therefore, {\cmc} with $\beta< 1$ is robust on random labeling.
\end{remark}

%\begin{figure}[h]
%\begin{center}
%\subfigure[Side information]{\fbox{\includegraphics[width=3in]{figs/inf1}}} \quad
%\subfigure[Clustering]{\fbox{\includegraphics[width=3in]{figs/cla1}}}
%\end{center}
%\caption{{\bf Illustration of Corollary \ref{cor:split}.} We took a spherically shaped data set $X$ and randomly selected 20\% of the data points. We labeled one half of them by category 1 the other half by a category 2. The algorithm (with $\beta=1$) splits the data set $X$ into two clusters based on the information contained in labeled data.}\label{ex:mouseSplit}
%\end{figure}

Our second result is a critical threshold $\beta_0$, above which splitting a given cluster $\tilde{Y}_1$ into smaller clusters $Y_1,\ldots,Y_l$ reduces the cost. This threshold $\beta_0$ depends on the data set and on the side information. For example, as Remark~\ref{rem:randomlabels} shows, for a completely random labeling we get $\beta_0=1$. To derive the threshold in the general case, we combine the proof of Theorem~\ref{thm:split} with~\eqref{eq:betacost}:

\begin{theorem} \label{thm:limitBeta}
Let $X \subset \R^N$ be a finite data set and $X_\ell \subseteq X$ be the set of labeled data points that is partitioned into $\Z=\{Z_1,\dots,Z_m\}$. Let $\Y=\{Y_1,\dots,Y_k\}$ be a proportional coarsening of $\Z$, and suppose that the sample covariance matrices $\Sigma_i$ of $Y_i$ are positive definite. Suppose that $\tilde{\Y}=\{Y_1,\dots,Y_{k'-1},(Y_{k'}\cup\cdots\cup Y_k)\}$, for $1< k'< k$, is a coarsening of $\Y$, and let $\mu$ and $\Sigma$ be the sample mean vector and sample covariance matrix of $Y_{k'}\cup\cdots\cup Y_k$. Let $q_i=p_i$ for $i=1,\dots,k'-1$ and $q_k=\sum_{i=k'}^k p_i$.
We put 
\begin{equation}\label{eq:betaThresh}
 \beta_0 = 1+\frac{\sum_{i=k'}^k \frac{p_i}{2q_{k'}} \ln\left(\frac{\det\Sigma_i}{\det\Sigma}\right) }{H\left(\frac{p_{k'}}{q_{k'}},\dots,\frac{p_k}{q_{k'}}\right)}.
\end{equation}

If $\beta \geq \beta_0$, then 
$$
 \E_\beta ( \tilde{\Y}) \geq \E_\beta (\Y).
$$
\end{theorem}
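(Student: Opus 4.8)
The plan is to evaluate the cost difference $\E_\beta(\tilde{\Y}) - \E_\beta(\Y)$ term by term, isolate its dependence on $\beta$, and then solve the resulting affine inequality in $\beta$. Since $\tilde{\Y}$ differs from $\Y$ only by merging $Y_{k'},\dots,Y_k$ into the single block $\tilde{Y}_{k'}:=Y_{k'}\cup\cdots\cup Y_k$ of weight $q_{k'}=\sum_{i=k'}^{k}p_i$, the clusters $Y_1,\dots,Y_{k'-1}$ cancel in every term and I only need to compare the merged block against its constituents. This is precisely the structure of the proof of Theorem~\ref{thm:split}, which I would reuse.

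For the model-complexity term, writing $p_i=q_{k'}\cdot(p_i/q_{k'})$ and collecting the $\ln q_{k'}$ contributions gives
\[
H(\Y)-H(\tilde{\Y}) = q_{k'}\, H\!\left(\tfrac{p_{k'}}{q_{k'}},\dots,\tfrac{p_k}{q_{k'}}\right).
\]
For the modeling-accuracy term, the constant $\tfrac{N}{2}\ln(2\pi e)$ is weighted by $\sum_{i=k'}^{k}p_i=q_{k'}$ on both sides and cancels, leaving
\[
\sum_{i=k'}^{k} p_i\, H(\N(\mu_{Y_i},\Sigma_i)) - q_{k'}\, H(\N(\mu,\Sigma)) = \tfrac12 \sum_{i=k'}^{k} p_i \ln\frac{\det\Sigma_i}{\det\Sigma},
\]
which is exactly the nonpositive quantity already analyzed in Theorem~\ref{thm:split} (where, for $\beta=1$, it alone guarantees $\E_1(\tilde{\Y})\geq\E_1(\Y)$).

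The delicate step — and the one I expect to be the main obstacle — is the consistency term $\beta\,[H(\Z|\tilde{\Y})-H(\Z|\Y)]$ under partial labeling, where the conditional entropies are defined through the proportional extension to unlabeled points. Here I would invoke that a coarsening of a proportional partition is again proportional, so that both $\Y$ and $\tilde{\Y}$ are proportional coarsenings of $\Z$ and the chain rule of~\ref{app:chainrule} applies to each, giving $H(\Z,\Y)=H(\Y)+H(\Z|\Y)$ and $H(\Z,\tilde{\Y})=H(\tilde{\Y})+H(\Z|\tilde{\Y})$. Because each category $Z_j$ is contained in a single cluster of either partition, knowing $\Z$ determines the cluster, so both joint entropies coincide with $H(\Z)$; subtracting yields
\[
H(\Z|\tilde{\Y})-H(\Z|\Y) = H(\Y)-H(\tilde{\Y}) = q_{k'}\, H\!\left(\tfrac{p_{k'}}{q_{k'}},\dots,\tfrac{p_k}{q_{k'}}\right),
\]
the identity underlying~\eqref{eq:betacost}: merging clusters raises the label impurity by precisely the partition entropy that is lost.

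Assembling the three pieces I obtain
\[
\E_\beta(\tilde{\Y})-\E_\beta(\Y) = (\beta-1)\, q_{k'}\, H\!\left(\tfrac{p_{k'}}{q_{k'}},\dots,\tfrac{p_k}{q_{k'}}\right) + \tfrac12 \sum_{i=k'}^{k} p_i \ln\frac{\det\Sigma}{\det\Sigma_i}.
\]
Since $1<k'<k$, the block is split into at least two nonempty parts, so the entropy $H(p_{k'}/q_{k'},\dots,p_k/q_{k'})$ is strictly positive and the right-hand side is affine and strictly increasing in $\beta$. A direct substitution shows it vanishes exactly at the threshold $\beta_0$ of~\eqref{eq:betaThresh}, so $\beta\geq\beta_0$ forces the difference to be nonnegative, which is the claim. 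Note that the forward implication requires only the positivity of the entropy together with this closed-form identity; no sign information on $\det\Sigma/\det\Sigma_i$ is needed, although the nonpositivity observed in Theorem~\ref{thm:split} is what places $\beta_0$ at or below $1$.
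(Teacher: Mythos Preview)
Your proposal is correct and follows essentially the same route as the paper's proof: you compute the three cost differences (model complexity, model accuracy, consistency), use proportionality and the chain rule of~\ref{app:chainrule} together with the coarsening relation to obtain the identity $H(\Z|\tilde{\Y})-H(\Z|\Y)=H(\Y)-H(\tilde{\Y})$ (the paper's~\eqref{eq:proportionalCoarsening}), and then solve the resulting affine expression in $\beta$ for its zero. Your bookkeeping is in fact slightly cleaner than the paper's, since you carry the factor $q_{k'}$ explicitly throughout and note directly that both joint entropies $H(\Z,\Y)$ and $H(\Z,\tilde{\Y})$ collapse to $H(\Z)$.
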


\begin{proof}
See \ref{app:proof:lastthm}.
\end{proof}

We now evaluate a practically relevant instance of the above theorem, where the data follows a Gaussian distribution and the partition-level side information is ``reasonable'':
\begin{example} \label{thm:beta0Gauss}
Let $X \subset \R$ be a data set generated by a one-dimensional Gaussian distribution $f=\N(\mu,\sigma^2)$, and suppose that the data set is large enough such that the sample mean $\mu_X$ and sample variance $\sigma_X^2$ are close to $\mu$ and $\sigma^2$, respectively. A classical unsupervised model-based clustering technique, such as CEC or GMM, terminates with a single cluster.

Now suppose that $Z_1 \subset (-\infty,\mu)$ and $Z_2 \subset [\mu,+\infty)$ are equally-sized sets, which suggests that $\Y=\{Y_1,Y_2\}=\{(-\infty,\mu) \cap X, [\mu,+\infty) \cap X\}$ is the expected clustering. Consequently, on one hand, the data distribution indicates that a single cluster should be created while, on the other hand, the side information suggests splitting the data set into two clusters. At the threshold $\beta_0$  these two conflicting goals are balanced, while for $\beta > \beta_0$ a consistent clustering is obtained.

To calculate the critical value $\beta_0$, let $\Y=\{Y_1,Y_2\}$ be proportional to $\Z$, and let $f_i = \N(\mu_{Y_i},\sigma_{Y_i}^2)$ be the optimal fit for cluster $Y_i$. Since the data in $Y_i$ can be well approximated by a truncated Gaussian distribution, we can calculate:
$$
\sigma_{Y_1}^2 \approx \sigma_{Y_2}^2 \approx \sigma^2 \left(1-\frac{2}{\pi}\right).
$$
Making use of the previous theorem, $\E_{\beta}(\{X\}) = \E_{\beta}(\Y)$ for 
\begin{equation}\label{eq:beta0}
\beta=\beta_0 \approx 1+ \frac{\ln \sqrt{1-\frac{2}{\pi}}}{H(\frac{1}{2}, \frac{1}{2})}\approx 0.269.
\end{equation}

Continuing this example, in some cases the side information might be noisy, i.e., data points are labeled wrongly. Consider the labeling $\Z$ that satisfies $Z_1 \subset (-\infty, \mu+c)$ and $Z_2\subset [\mu-c,+\infty)$, for some $c>0$. In other words, the human experts did not agree on the labeling at the boundary between the clusters. If we choose $\Y$ proportional to this noisy side information $\Z$, then one has reason to suppose that the sample variances of $Y_1$ and $Y_2$ are larger than in the noiseless case, hence leading to a larger threshold $\beta_0$ according to Theorem~\ref{thm:beta0Gauss}. Setting $\beta$ to a value only slightly higher than the threshold $\beta_0$ for the noiseless case thus ensures a partition $\Y$ that is consistent with the noiseless labeling, but that is robust to noise. In summary, one should choose $\beta$ large (i.e., close to 1), if one believes that the side information is correct, but small if one has to expect noisy side information. 
\end{example}

\section{Experiments}\label{sec:experiments}

We evaluated our method in classical semi-supervised clustering tasks on examples retrieved from the UCI repository \cite{asuncion2007uci} and compared its results to state-of-the-art semi-supervised clustering methods. We evaluated performance in the case of only few classes being present in the partition-level side information and for noisy labeling, and investigated the influence of the parameter $\beta$ on the clustering results. We furthermore applied {\cmc} on a data set of chemical compounds \cite{warszycki2013linear} to discover subgroups based on partition-level side information derived from the top of a cluster hierarchy and illustrate its performance in an image segmentation task.

\subsection{Experimental setting}\label{sec:experiment:setting}

We considered five related semi-supervised clustering methods for comparison. The first is a classical semi-supervised classification method that is based on fitting a GMM to the data set taking partial labeling into account. Since it is a classification method, it only works if all true classes are present in the categorization $\Z$. We used the R implementation Rmixmod~\cite{lebret2015rmixmod} with default settings; we refer to this method as ``mixmod''.

The second method incorporates pairwise constraints as side information for a GMM-based clustering technique~\cite{shental2004computing}. To transfer the partition-level side information to pairwise constraints, we went over all pairs of labeled data points in $X_\ell$ and generated a must-link constraint if they were in the same, and a cannot-link constraint if they were in different categories. We used the implementation from one of the authors' website\footnote{\url{http://www.scharp.org/thertz/code.html}} and refer to this method as ``c-GMM''. We ran c-GMM in MultiCov mode, i.e. every cluster was characterized by its own covariance matrix.

We also applied an extension of k-means to accept partition level-side information~\cite{liu2015clustering}. The method requires setting a weight parameter $\lambda$, that places weight on the features derived from the side information. The authors suggested $\lambda=100$, but we found that the method performs more stable for $\lambda=100\cdot \mathrm{tr}(\Sigma)$, i.e., for $\lambda$ being proportional to the trace of the sample covariance matrix of the data set $X$. We refer to this method as ``k-means''.

Moreover, we considered a semi-supervised variant of fuzzy c-means~\cite{pedrycz1997fuzzy, pedrycz2008fuzzy}, which incorporates partition-level side information. We used the Euclidean distance, set the fuzzifier parameter to 2, and chose a trade-off parameter $\alpha = \frac{|X|}{|X_\ell|}$ as suggested by the authors. To obtain a ``hard'' clustering $\Y$ from a fuzzy partition we assigned every point to its most probable cluster. This technique will be referred to as ``fc-means''.

Finally, we used a semi-supervised version of spectral clustering~\cite{qian2016affinity} (referred to as ``spec''), which was claimed to achieve state-of-the-art performance among spectral algorithms. The method accepts pairwise constraints and operates on the affinity (similarity) matrix of the data set. The authors of~\cite{qian2016affinity} suggested setting the similarity between data points $x_i$ and $x_j$ to $e^{-\Vert x_i - x_j\Vert^2/2\rho^2}$, where $\Vert\cdot\Vert$ is the Euclidean distance and where $\rho > 0$ is called affinity parameter. In order to account for different variances in different dimensions, we used
\begin{equation}
 e^{-\sum_{\ell=1}^N \frac{\vert x_i^{(\ell)}-x_j^{(\ell)}\vert^2}{2\rho^2\sigma^2_{(l)}}},
\end{equation}
where $x_i^{(\ell)}$ is the value of the $\ell$-th coordinate of $x_i$ and where $\sigma^2_{(\ell)}$ is the variance of the $\ell$-th coordinate of $X$. The method can be tuned with two parameters: affinity parameter $\rho$ and trade-off factor $\eta$. The authors suggest to find the best possible combination of these parameters using a grid-search strategy. Since we did not allow for tuning any parameters of other methods (including $\beta$ in {\cmc}), for a fair comparison we decided to fix these two parameters. Specifically, we put $\eta = 0.7$ analyzing the results reported in~\cite{qian2016affinity}. We moreover set $\rho = 1$ based on the fact that the Euclidean distances are already normalized according to the variances of the respective dimensions and since~\cite{qian2016affinity} reports little influence of the selection of $\rho$. We generated must-link and cannot-link constraints as we did for c-GMM; moreover, the entries of the affinity matrix were set to one for must-link, and to zero for cannot-link constraints. 

Since {\cmc} automatically determines an appropriate number of clusters by removing clusters with too few data points, we initialized {\cmc} with twice the correct number of clusters. In contrast, other methods were run with the correct numbers of clusters. In a semi-supervised clustering task with correct labels from all classes, the competing methods can thus be expected to perform better than {\cmc}.

To better illustrate the effect of the weight parameter $\beta$, we used two parameterization of {\cmc}, using $\beta=1$ and $\beta=\beta_0 \approx 0.269$ given by \eqref{eq:beta0}. We refer to these two variants as $\cmc_1$ and $\cmc_0$, respectively. 

\begin{table}[t]\footnotesize
\caption{Summary of UCI datasets used in the experiments.}
\label{tab:data1}	
\medskip
\setlength{\arrayrulewidth}{0.1mm}
\setlength{\tabcolsep}{3pt}
\centering
\begin{tabular}{lccc}
\bf Data set & \bf \# Instances & \bf \# Features & \bf \# Classes \\ \hline
Ecoli$^+$ & 327 & 5 & 5\\
Glass & 214 & 9 & 6\\
Iris & 150 & 4 & 3\\
Segmentation$^+$ & 210 & 5 & 7\\
User Modeling & 403 & 5 & 4\\
Vertebral & 310 & 6 & 3\\
Wine & 178 & 13 & 3\\
\hline
\end{tabular}
\\ \vspace{0.1in}
\footnotesize{$^+$: PCA was used to reduce a dimensionality of the data set and remove dependent attributes}
\end{table}

The similarity between the obtained clusterings and the ground truth was evaluated using Normalized Mutual Information (NMI) \cite{ana2003robust}. For a reference grouping $\X$ and a clustering $\Y$ it is defined by
$$
\mathrm{NMI}(\Y, \X) = \frac{2I(\Y; \X)}{H(\Y) + H(\X)}.
$$ 
Since $I(\Y;\X) \leq \min\{H(\Y), H(\X)\}$, NMI is bounded from above by 1, which is attained for identical partitions. If $\Y$ and $\X$ contain different numbers of clusters, then NMI is always below 1.

\subsection{Semi-supervised clustering}\label{sec:experiment:semi}

We evaluated the proposed method in a classical semi-supervised clustering task, in which we aim to recover a reference partition based on a small sample of labeled data. 

We used seven UCI data sets, which are summarized in Table~\ref{tab:data1}. The partition-level side information was generated by choosing 0\%, 10\%, 20\%, and 30\% of the data points and labeling them according to their class. To remove effects from random initializations, we generated 10 different samples of side information for each percentage and averaged the resulting NMI values. 

\begin{figure}
\begin{center}
\includegraphics[width=3.7in]{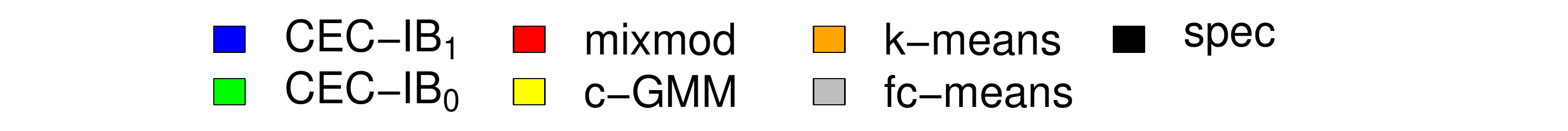}

\subfigure[Ecoli]{\includegraphics[width=1.7in]{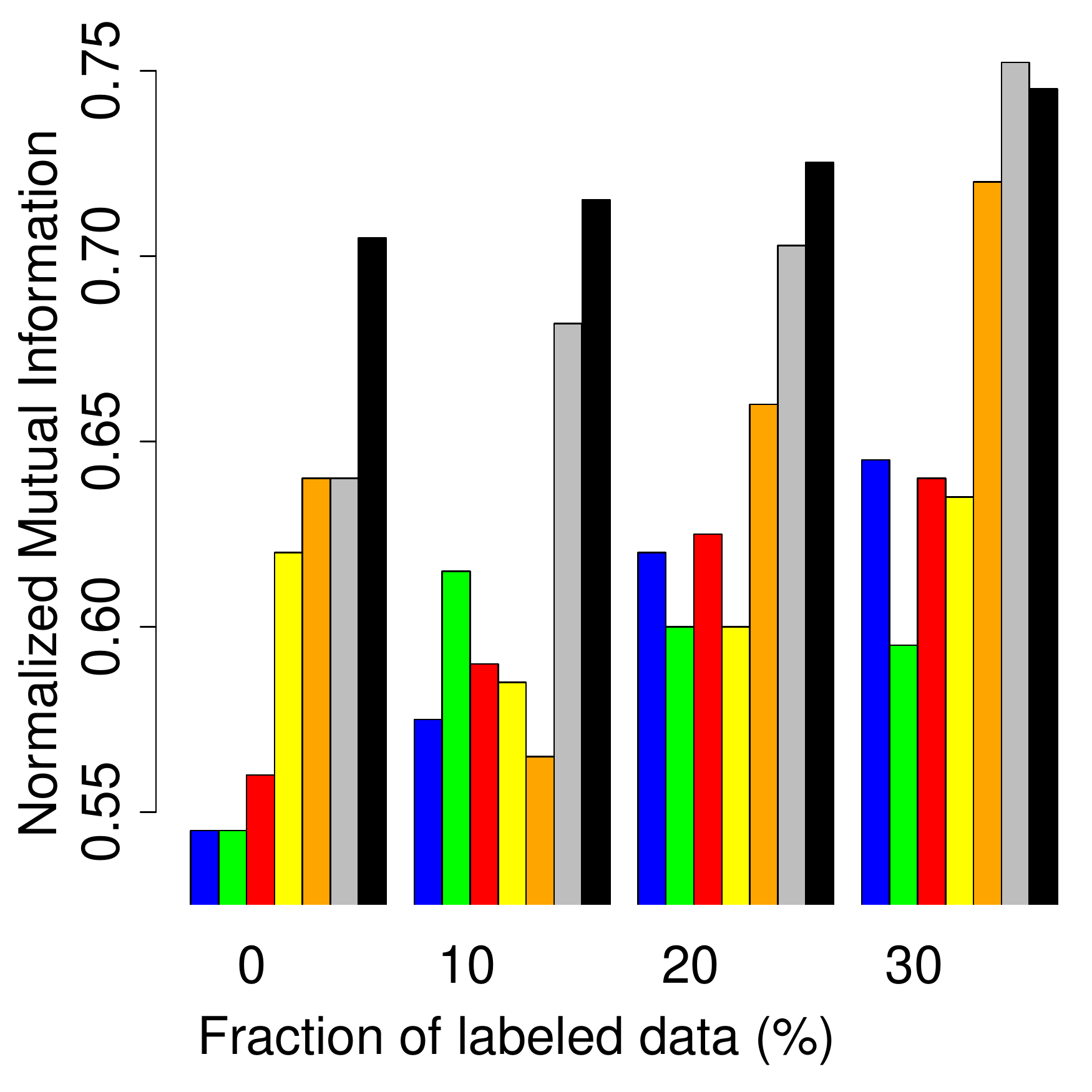}}
\subfigure[Glass]{\includegraphics[width=1.7in]{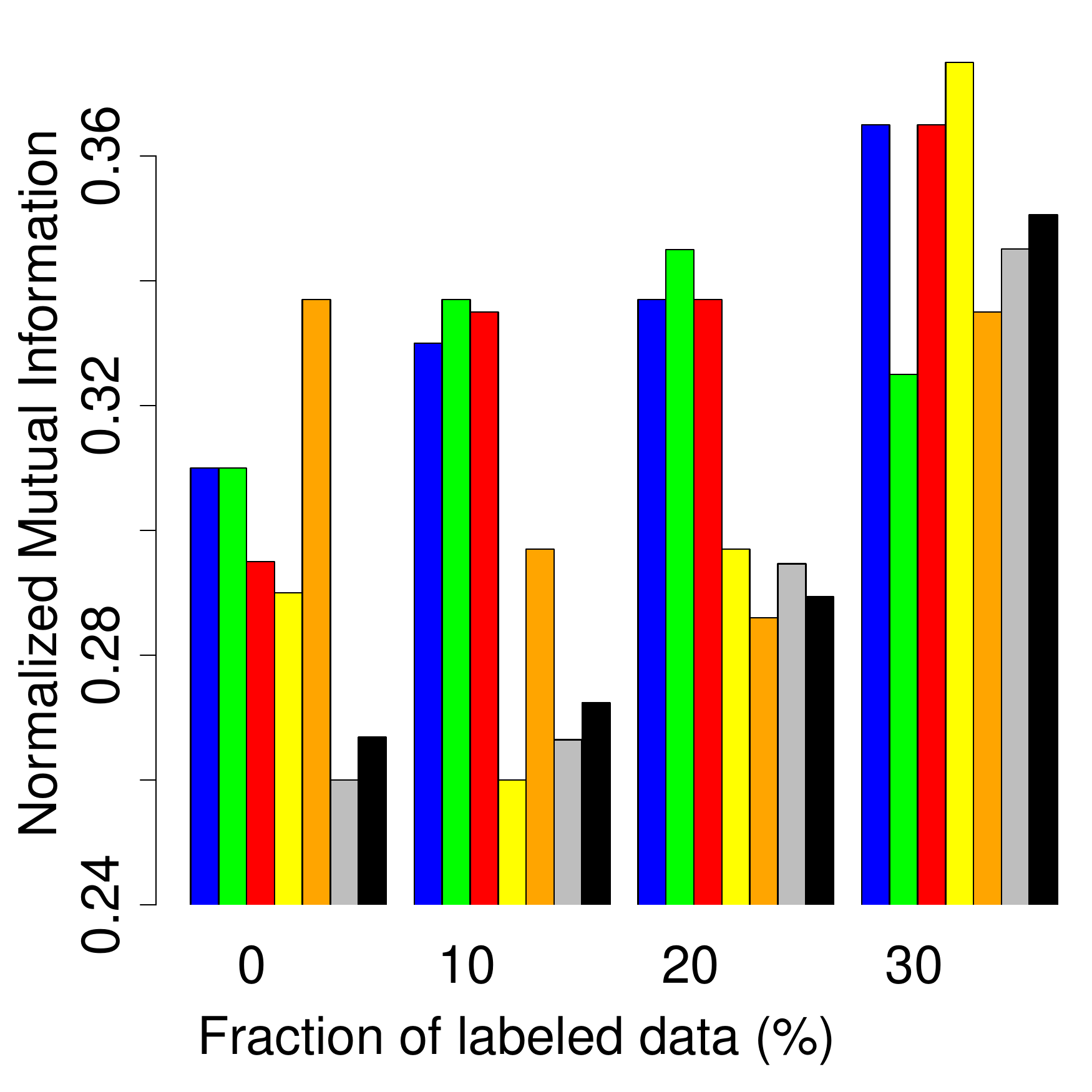}}
\subfigure[Segmentation]{\includegraphics[width=1.7in]{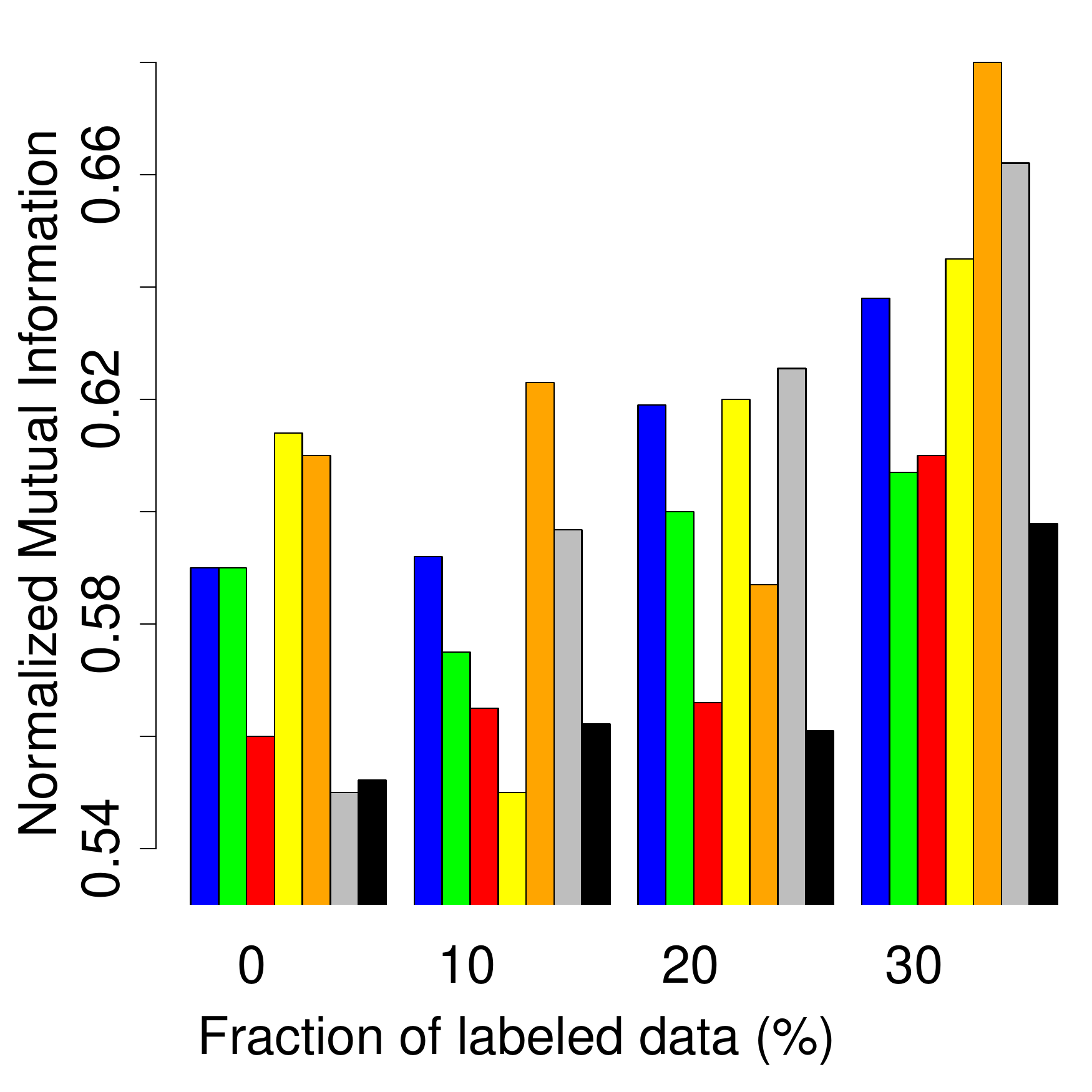}}
\subfigure[User Modeling]{\includegraphics[width=1.7in]{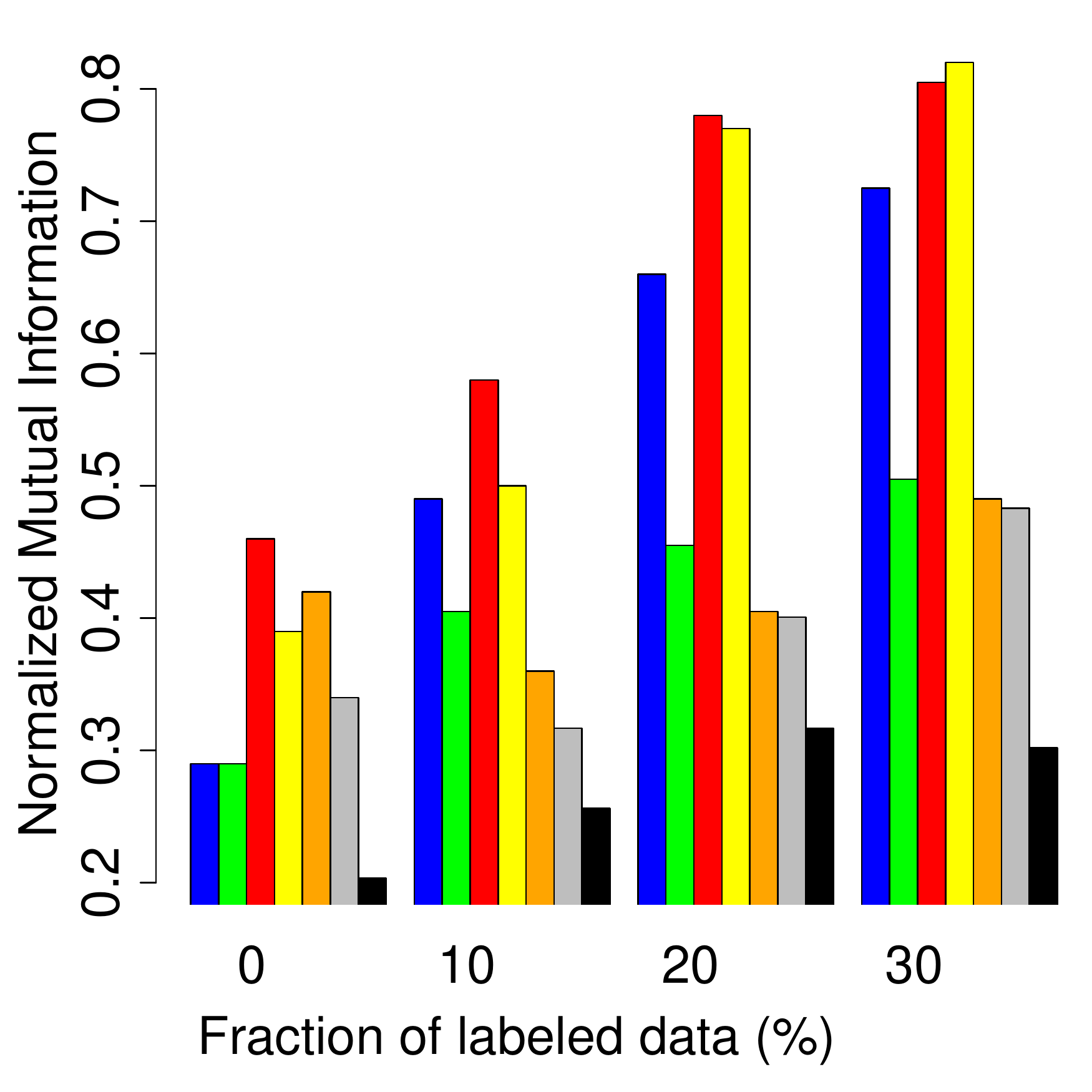}}
\subfigure[Vertebral]{\includegraphics[width=1.7in]{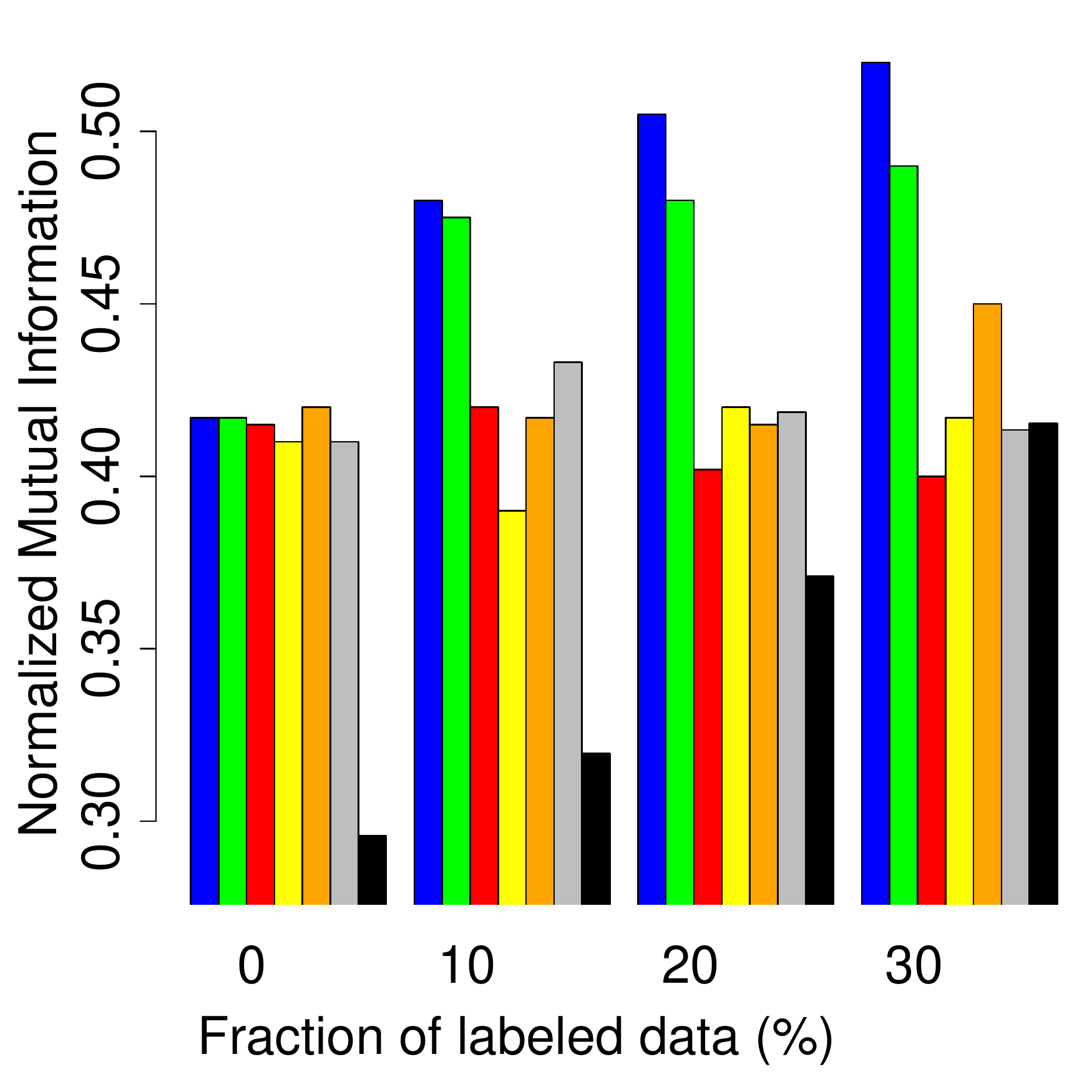}}
\subfigure[Wine]{\includegraphics[width=1.7in]{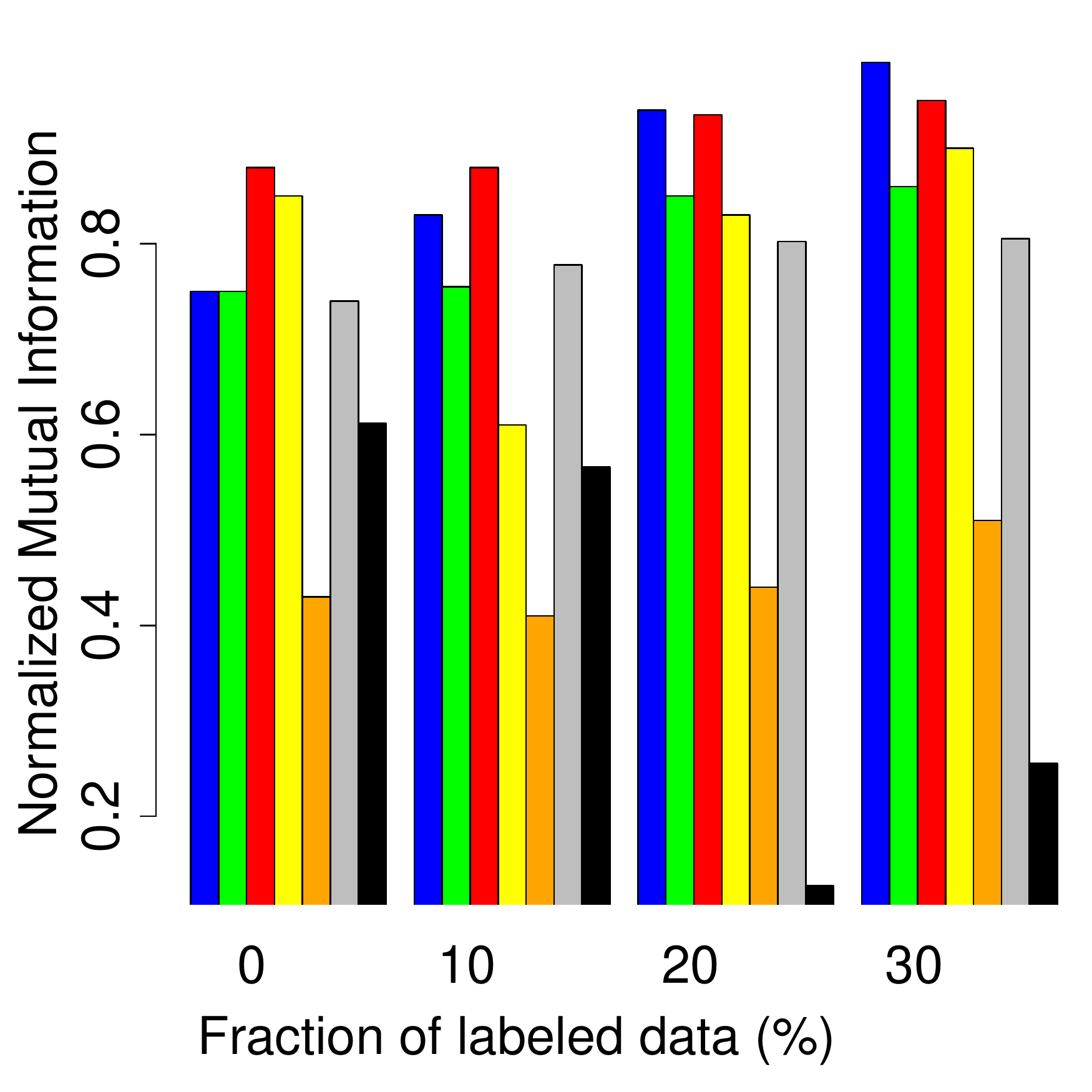}}
\subfigure[Iris]{\includegraphics[width=1.7in]{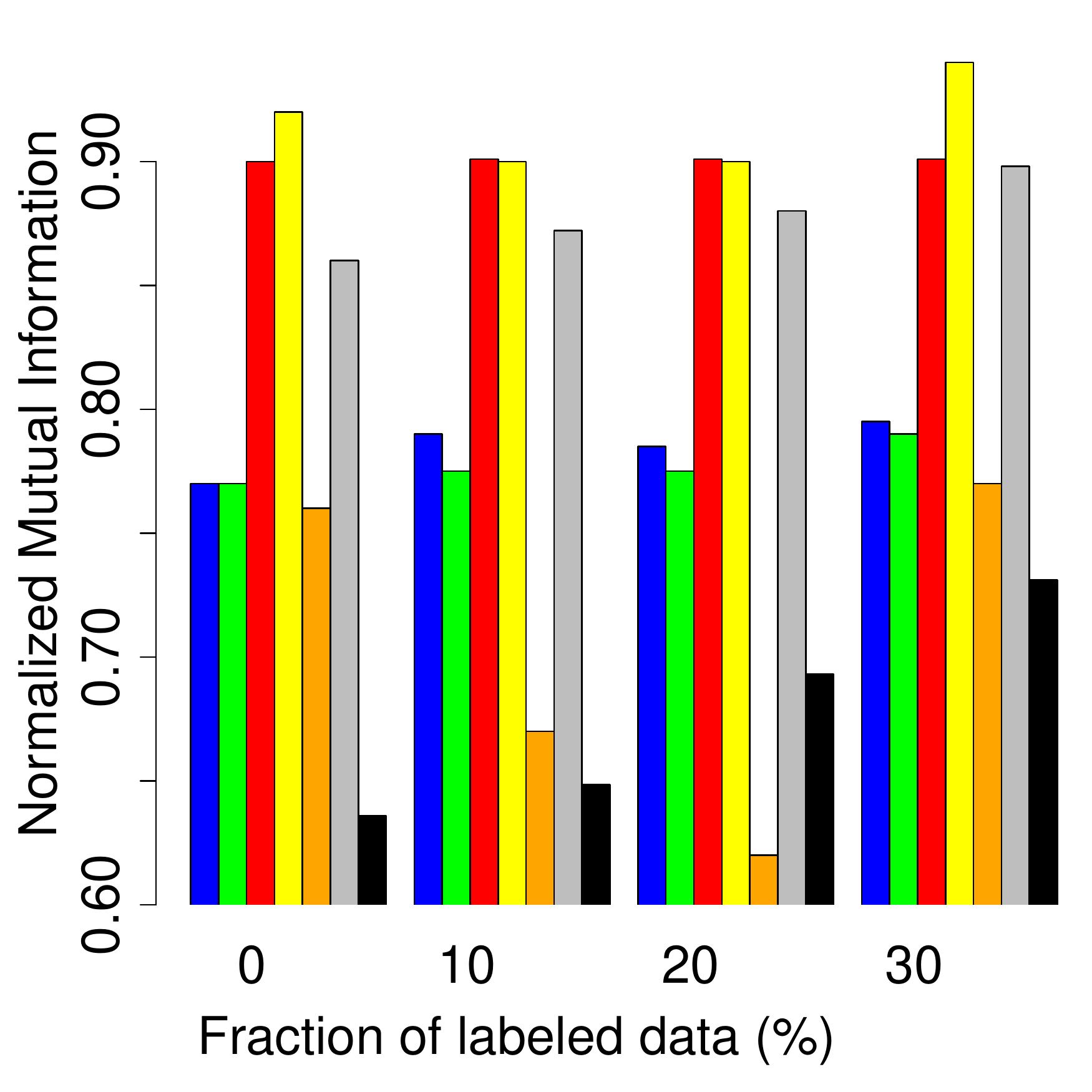}}
\subfigure[Iris$^+$]{\label{fig:iris3}\includegraphics[width=1.7in]{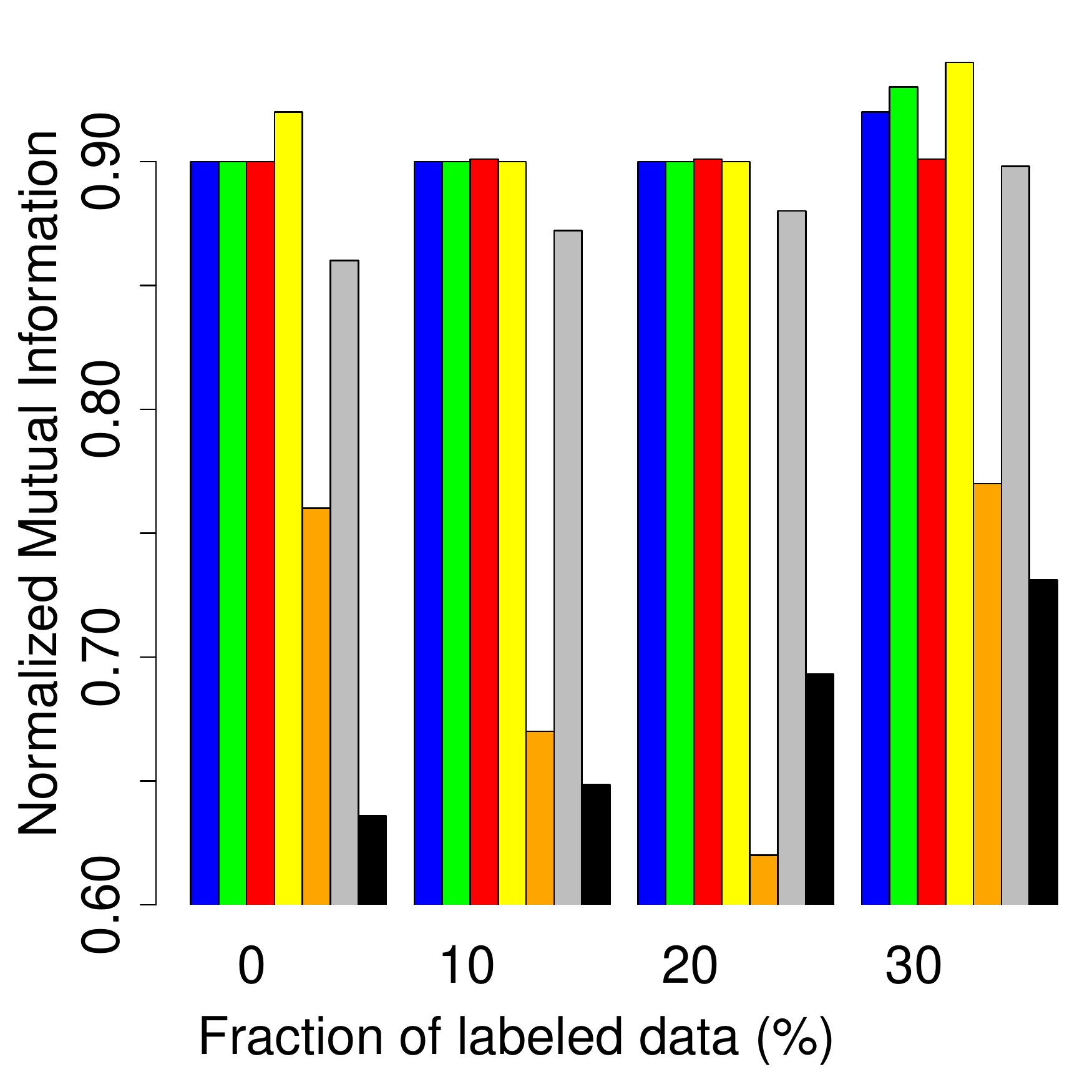}}
\end{center}
\caption{Normalized Mutual Information of examined methods evaluated on UCI datasets. {\cmc} was initialized with twice the true number of clusters while other methods used the correct number of clusters.}
\label{fig:uci1}
\footnotesize{$^+$ {\cmc} was initialized with the correct number of clusters}
\end{figure}

The clustering results presented in Figure \ref{fig:uci1} show that $\cmc_1$ usually achieved a higher NMI than $\cmc_0$: Since the partition-level side information is noise-free, i.e., agrees with the reference grouping, a larger weight parameter $\beta$ leads to better performance. In general, $\cmc_1$ produced results similar to the two other GMM-based techniques, c-GMM and mixmod. Notable differences can be observed on Vertebral dataset, where {\cmc} performed significantly better, and on Iris and User Modeling, where the competing methods gave higher NMI. This is most likely caused by the fact that {\cmc} failed to determine the correct number of clusters (see Table \ref{tab:clusters}), while the GMM implementations were given this correct number of clusters as side information. As it can be seen in Table \ref{tab:clusters}, in all other cases, {\cmc} terminated with a number of clusters very close to the true value. Initializing {\cmc} with the correct number of clusters for the Iris data set, we get results that are comparable to those of mixmod and c-GMM (see Figure \ref{fig:iris3}).

\begin{table}[t]\footnotesize
\caption{Number of clusters returned by {\cmc} for a given amount of labeled data.}
\label{tab:clusters}	
\medskip
\setlength{\arrayrulewidth}{0.1mm}
\setlength{\tabcolsep}{3pt}
\centering
\begin{tabular}{lccccc}
\bf Data set & \bf \# Classes & \bf 0\% & \bf 10\% & \bf 20\% & \bf 30\% \\ \hline
Ecoli & 5&7 & 6 & 6 & 6 \\
Glass & 6& 5 & 6 & 6 & 6 \\
Iris & 3& 5 & 5 & 5 & 5 \\
Segmentation & 7&8 & 8 & 7 & 8 \\
User & 4&7 & 6 & 6 & 6 \\
Vertebral & 3&4 & 4 & 4 & 4 \\
Wine & 3&3 & 3 & 3 & 3 \\
\hline
\end{tabular}
\end{table}

%Observe that k-means performed worse than the GMM-based methods, except for the Ecoli and Segmentation datasets. The large difference in the results can probably be explained by the fact that k-means incorporates side information in a completely different way, leading to a different optimization criterion.

Observe that k-means gave slightly lower NMI than fc-means. Nevertheless, both algorithms performed worse than the GMM-based methods, except for the Ecoli and Segmentation data sets. The difference in the results can be explained by the fact that fc-means and k-means are distance-based methods and therefore perform differently from model-based approaches. Although the performance of spec usually increases with more labeled examples, its results are worse than the other methods.

\subsection{Few labeled classes} \label{sec:experiment:few}

\begin{figure}
\begin{center}
\includegraphics[width=3.7in]{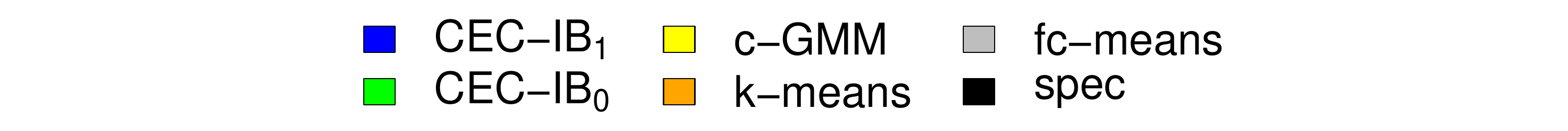}

\subfigure[Ecoli]{\includegraphics[width=1.7in]{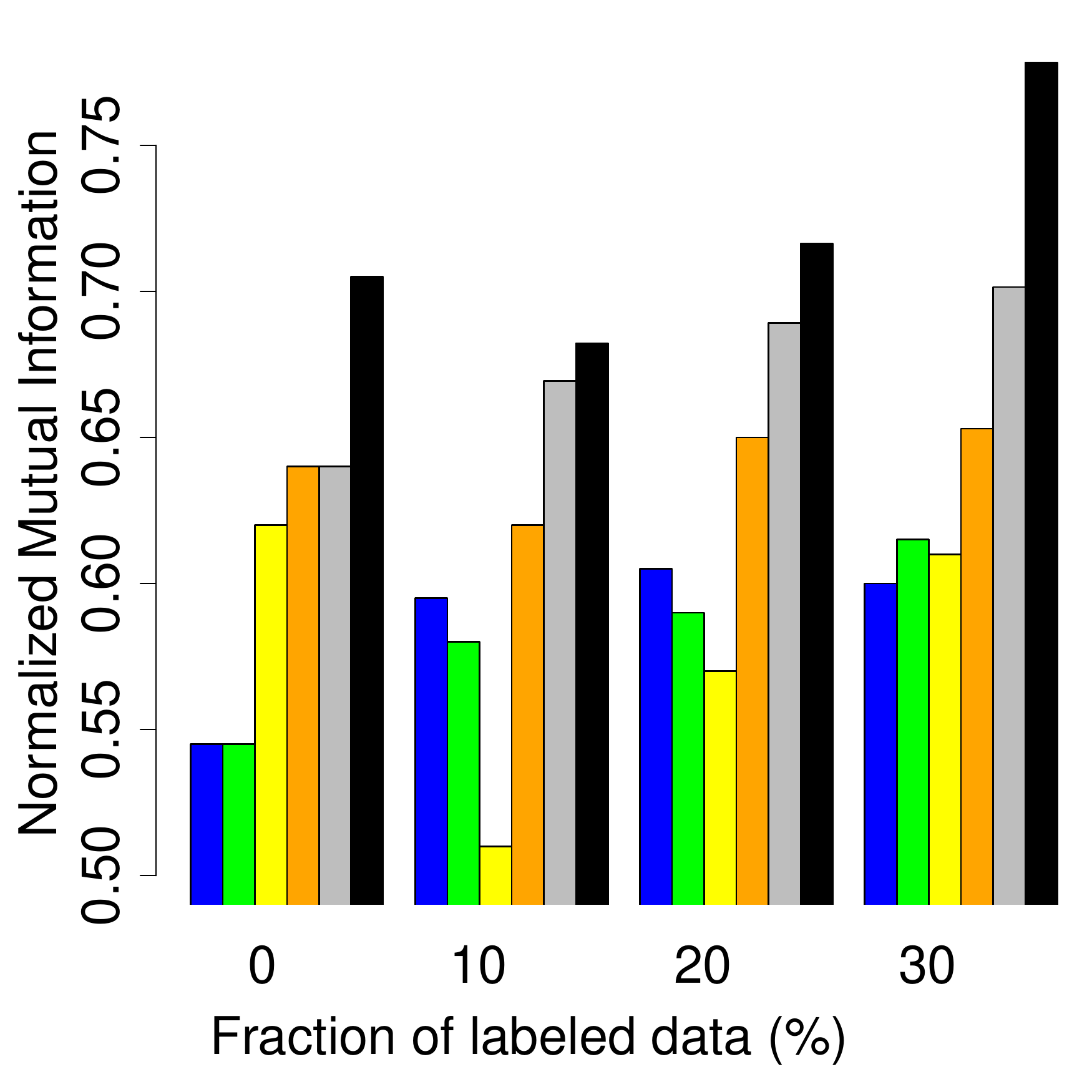}}
\subfigure[Glass]{\includegraphics[width=1.7in]{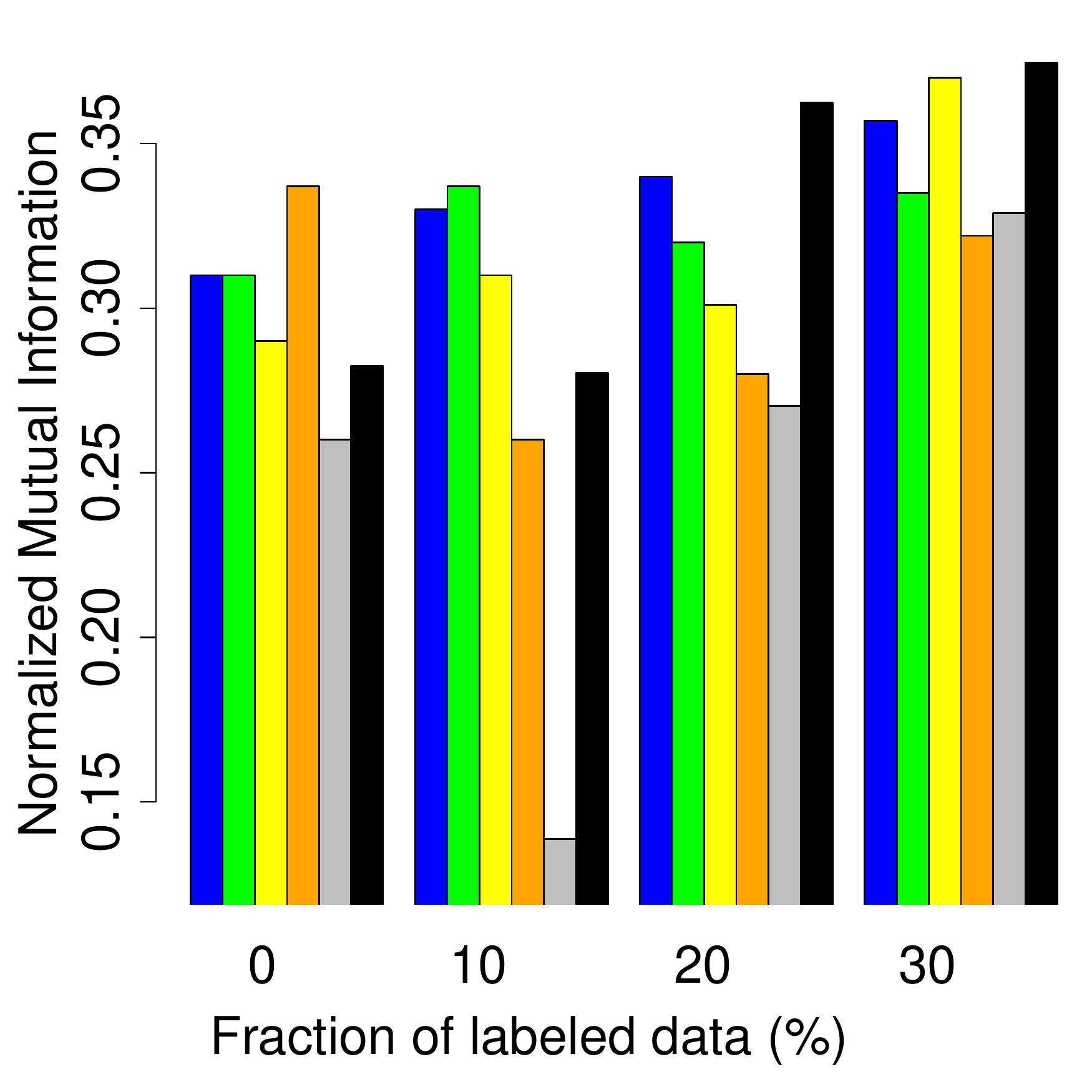}}
\subfigure[Segmentation]{\includegraphics[width=1.7in]{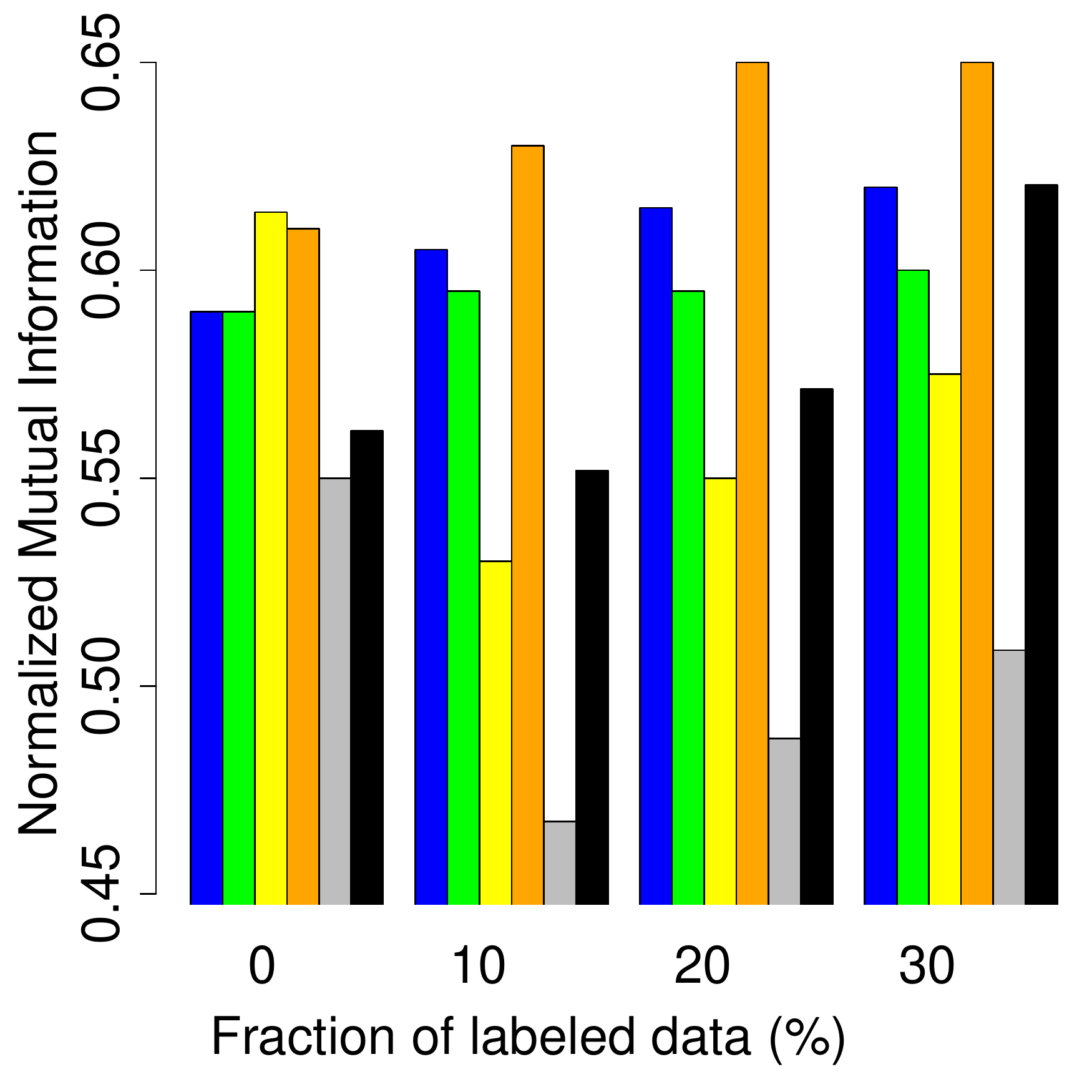}}
\subfigure[User]{\includegraphics[width=1.7in]{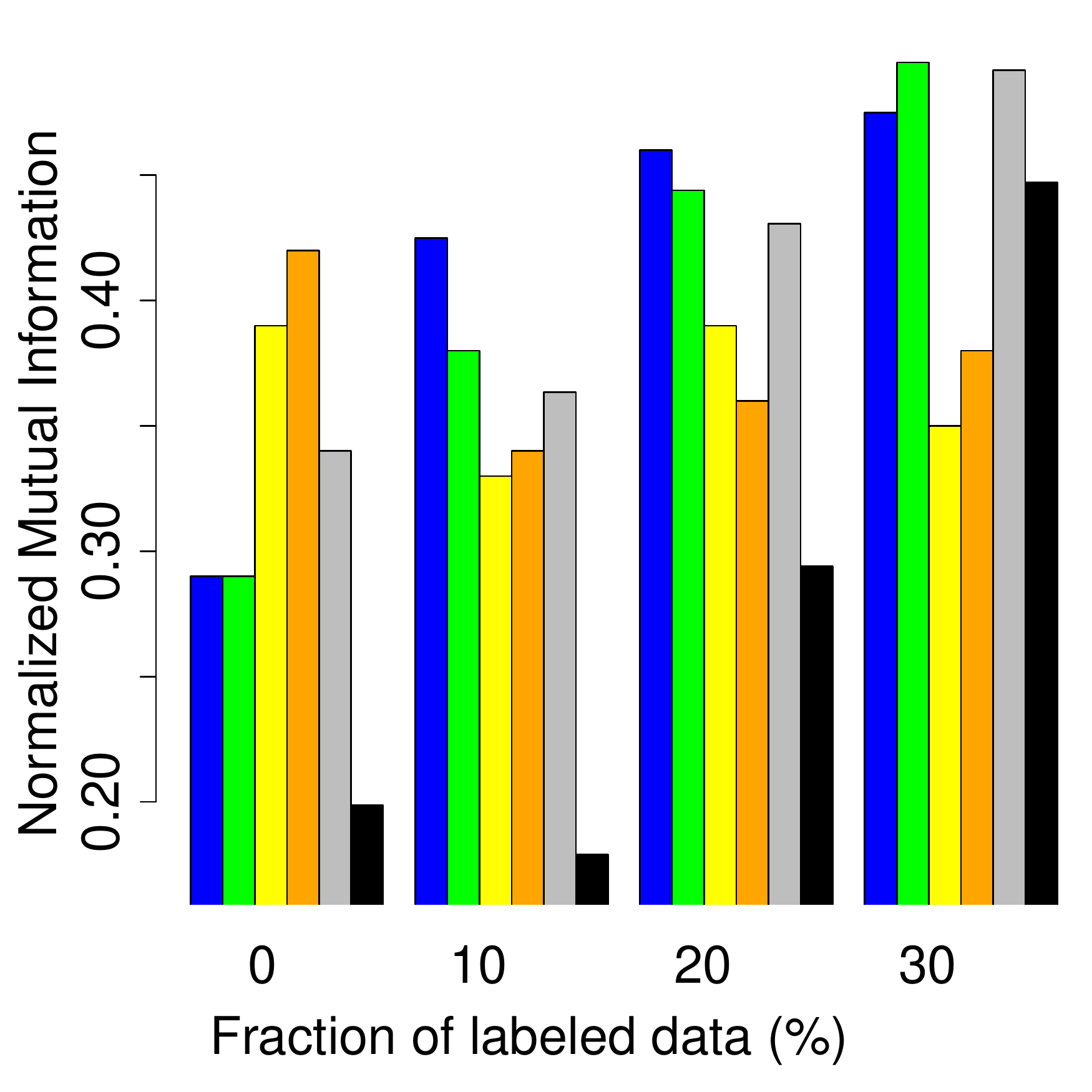}}
\subfigure[Vertebral]{\includegraphics[width=1.7in]{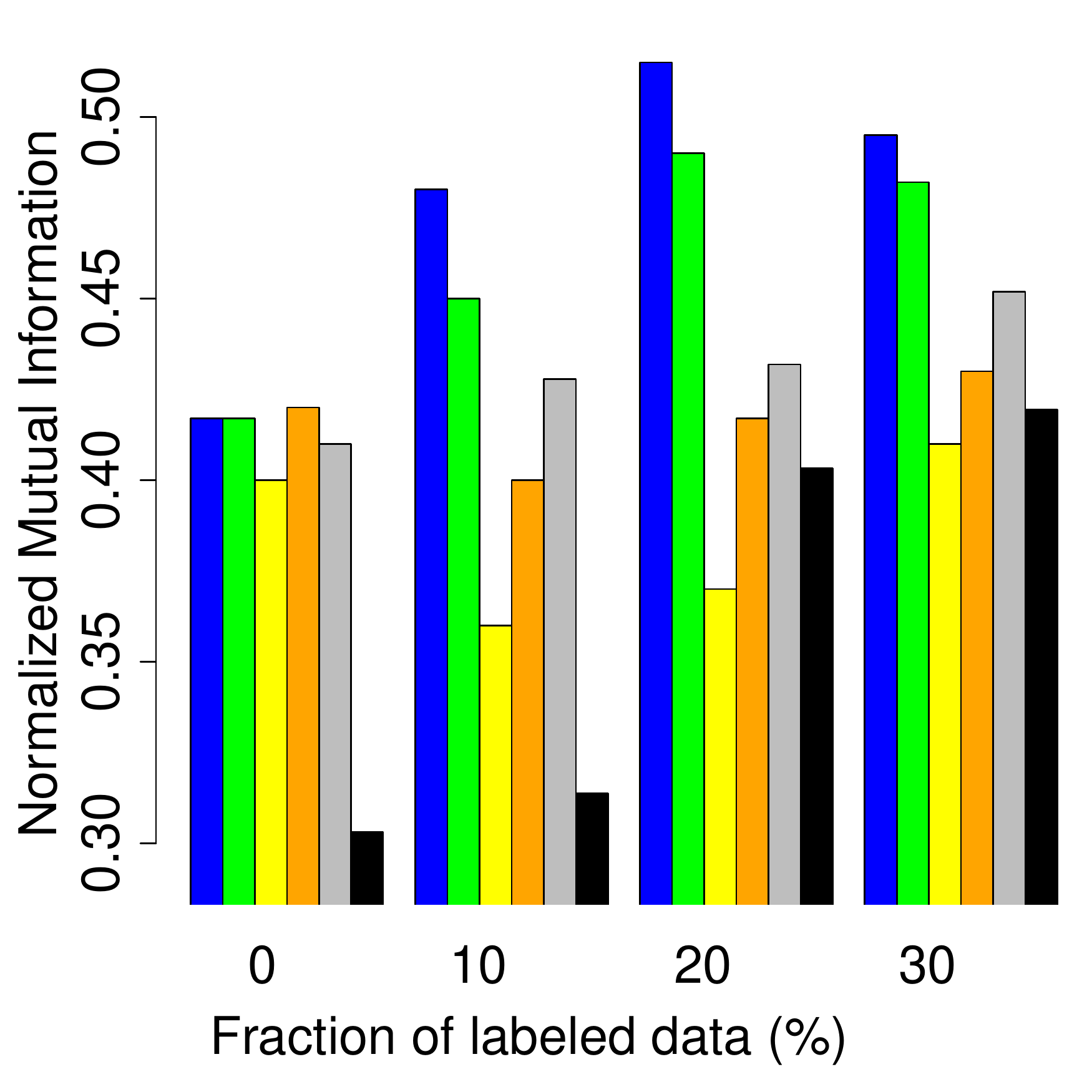}}
\subfigure[Wine]{\includegraphics[width=1.7in]{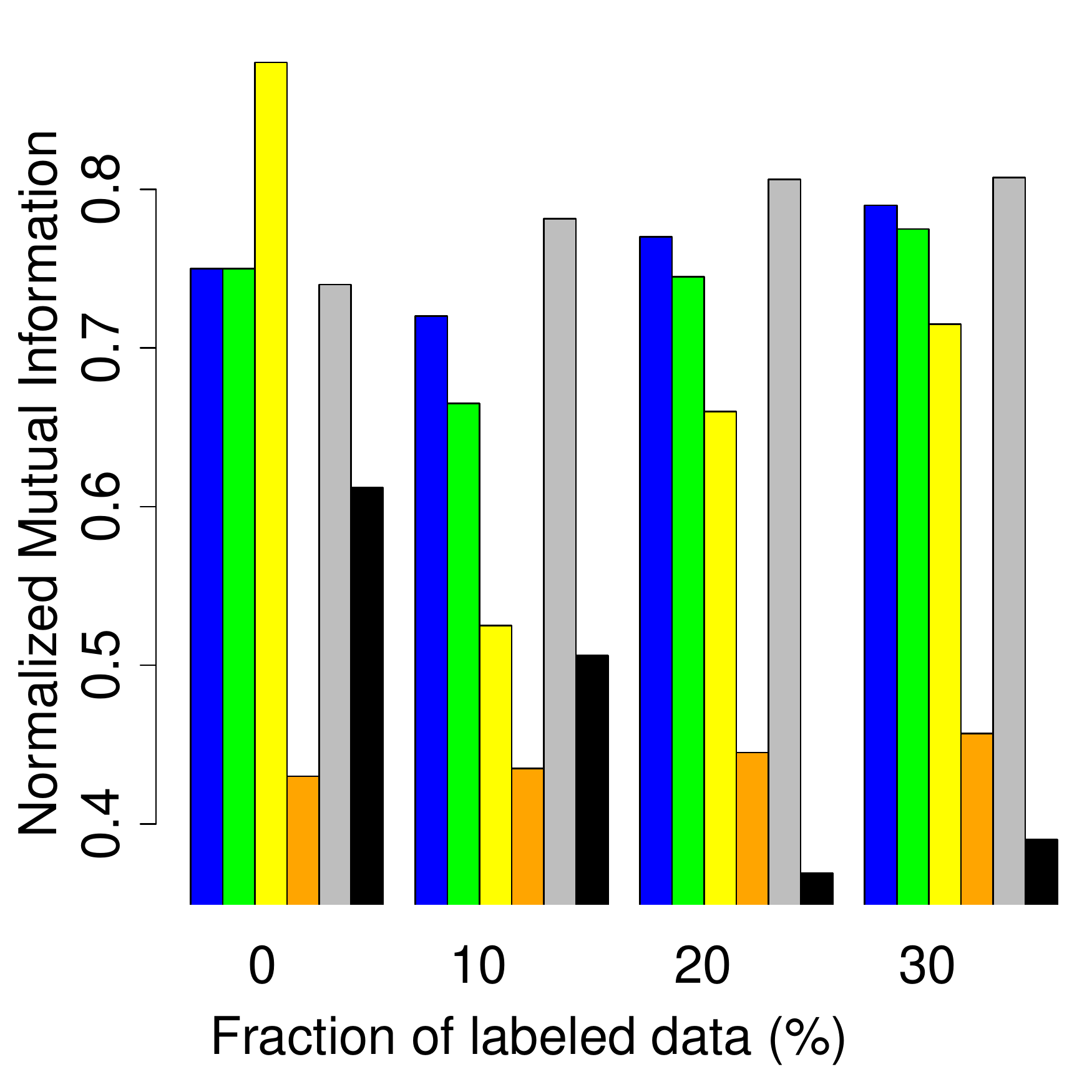}}
\subfigure[Iris]{\includegraphics[width=1.7in]{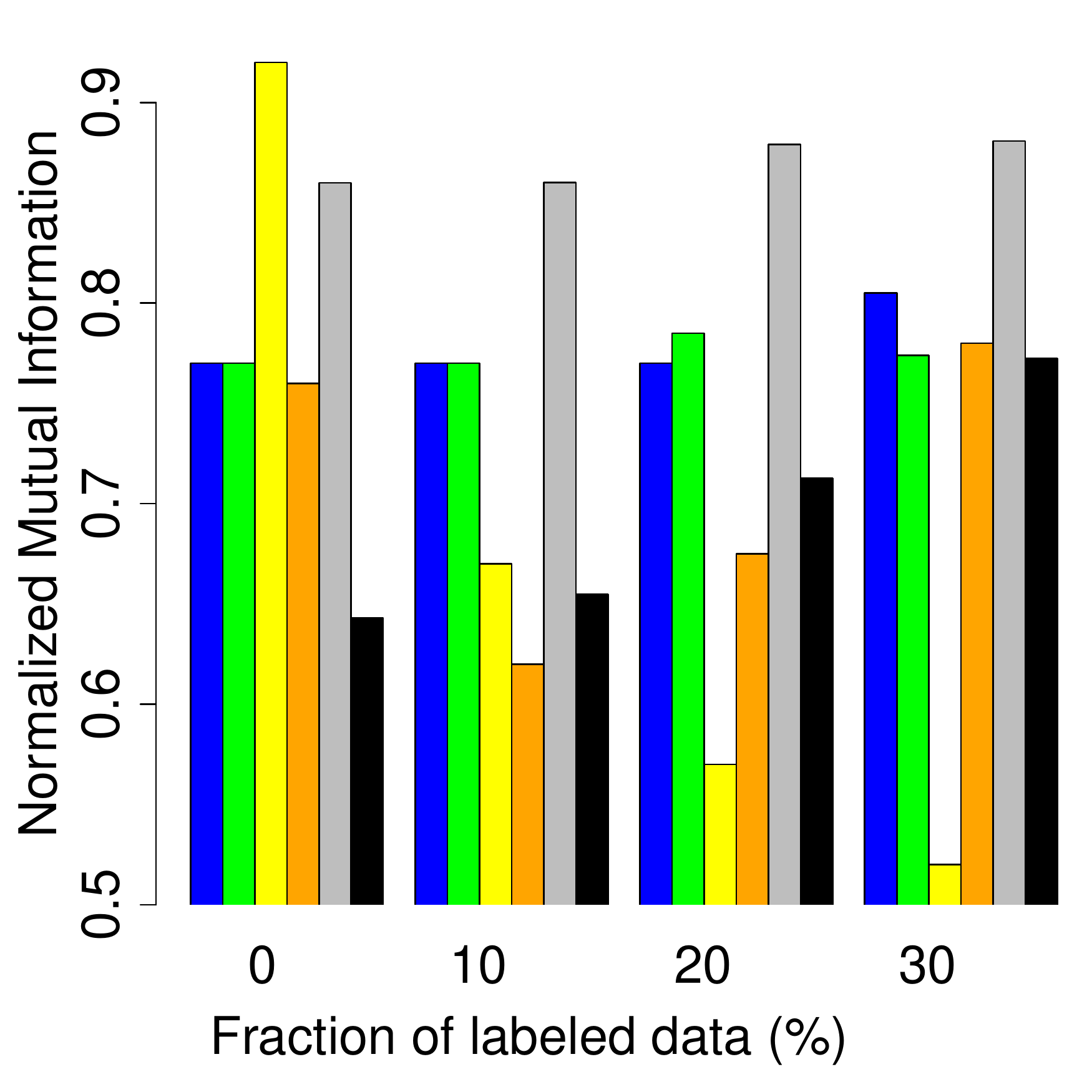}}
\end{center}
\caption{Normalized Mutual Information of examined methods evaluated on UCI datasets when the partition-level side information covered only two classes. }
\label{fig:fewClasses}
\end{figure}

In a (semi-)supervised classification task, the model learns classes from a set of labeled data and applies this knowledge to unlabeled data points. More specifically, the classifier cannot assign class labels that were not present in a training set. In contrast, clustering with partition-level side information, can detect clusters within a labeled category or within the set of unlabeled data points.

In this section, we apply {\cmc} to a data set for which the partition-level side information contains labels of only two classes from the reference grouping. As before we considered 0\%, 10\%, 20\% and 30\% of labeled data. For each of the 10 runs we randomly selected two classes from a reference grouping that covered at least 30\% of data in total and generated the partition-level side information from these two categories (the same classes were used for all percentages of side information). It was not possible to run mixmod in this case because this package does not allow to use a number of clusters different from the categories given in the side information.
%% Not necessary/confusing
% Since in each of variant (for a fixed run) we used the same classes to create the side information, we expected that the results should be monotonic with respect to the number of labeled data.

Figure \ref{fig:fewClasses} shows that {\cmc} was able to consistently improve its clustering performance with an increasing size of the labeled data set\footnote{Although the results for 0\% of labeled data should be identical with the ones reported in Section \ref{sec:experiment:semi}, some minor differences might follow from a random initialization of the methods, see Section \ref{sec:model:algo}.}. Surprisingly, c-GMM sometimes dropped in performance when adding side information. This effect was already visible in Figure~\ref{fig:uci1}, but seems to be more pronounced here. While a deeper analysis of this effect is out of scope of this work, we believe that it is due to the simplification made in~\cite{shental2004computing} to facilitate applying a generalized EM scheme. This simplification is valid if pairs of points with cannot-link constraints are disjoint, an assumption that is clearly violated by the way we generate cannot-link constraints (see Section~\ref{sec:experiment:setting}).

Contrary to c-GMM, the results of fc-means and k-means were far more stable. In most cases both algorithms increased their performance having access to more labeled data. Interestingly, spec performed in general better when only two classes were labeled than in the previous experiment where all classes were labeled. In consequence, its results were often comparable to or sometimes even better than other methods.

\subsection{Noisy side information}\label{sec:experiment:noisy}

\begin{figure}
\begin{center}
\includegraphics[width=3.7in]{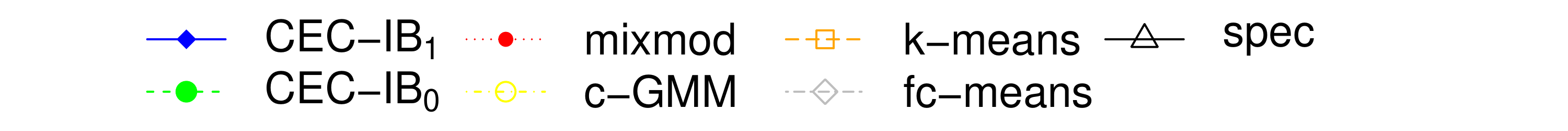}

\subfigure[Ecoli]{\includegraphics[width=1.7in]{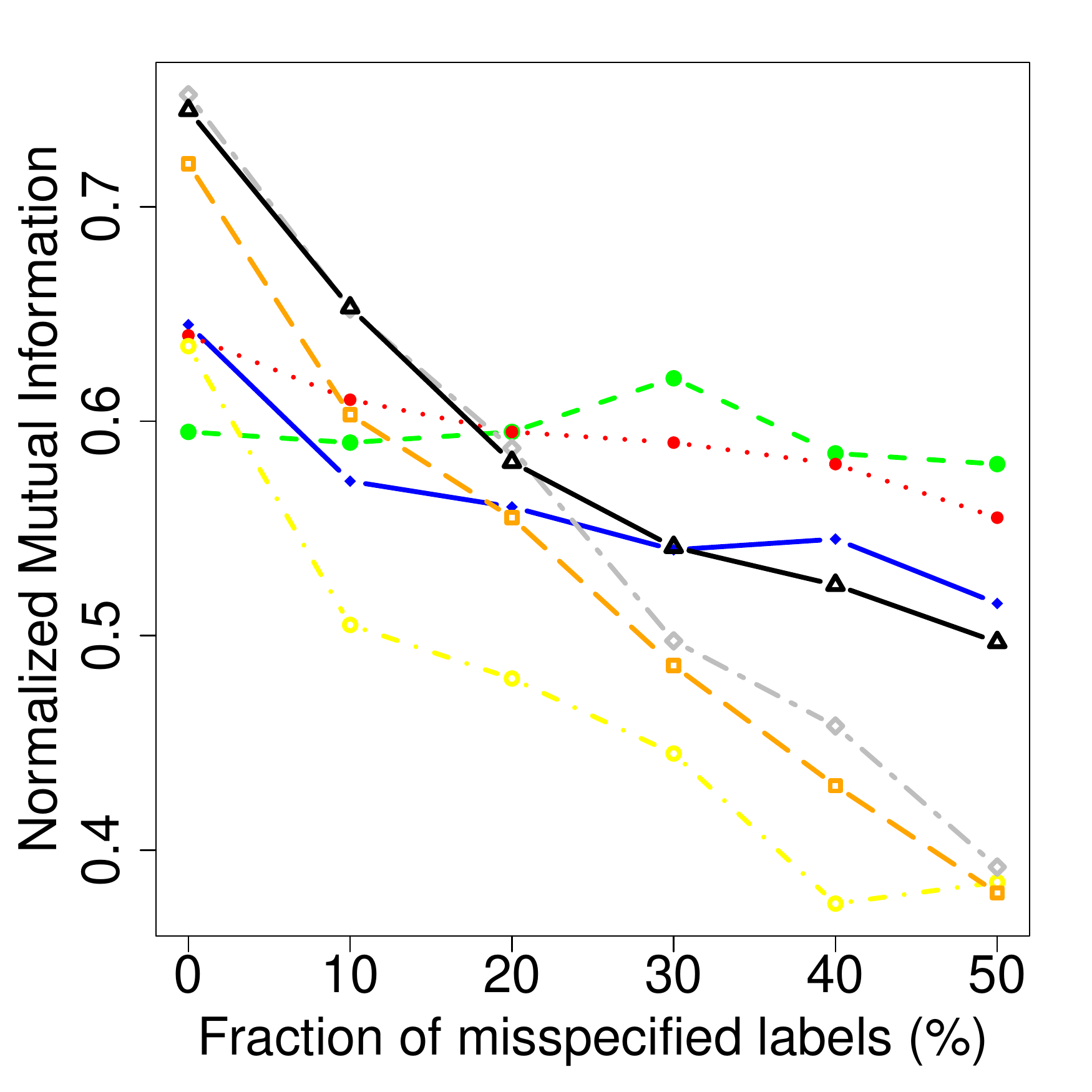}}
\subfigure[Glass]{\includegraphics[width=1.7in]{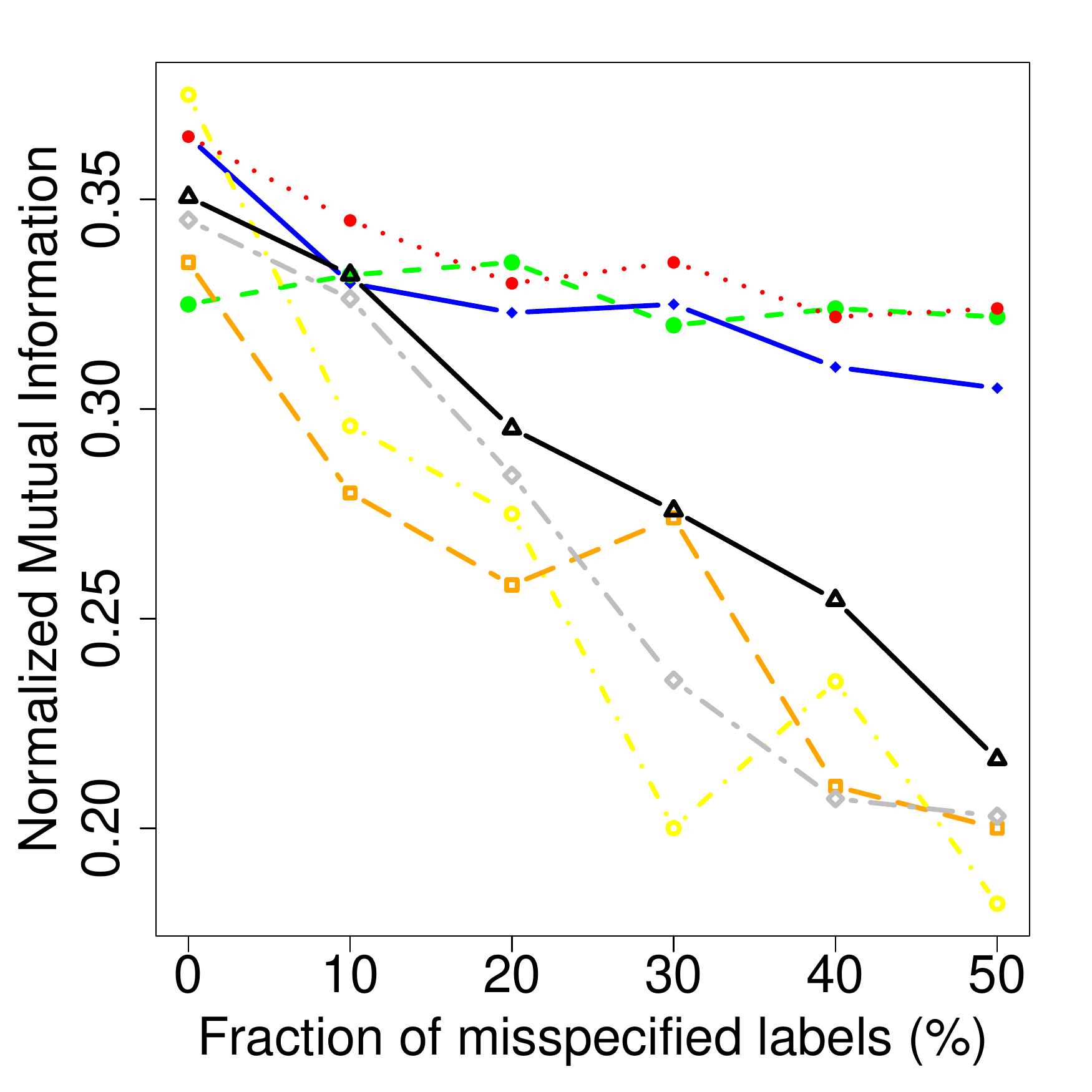}}
\subfigure[Segmentation]{\includegraphics[width=1.7in]{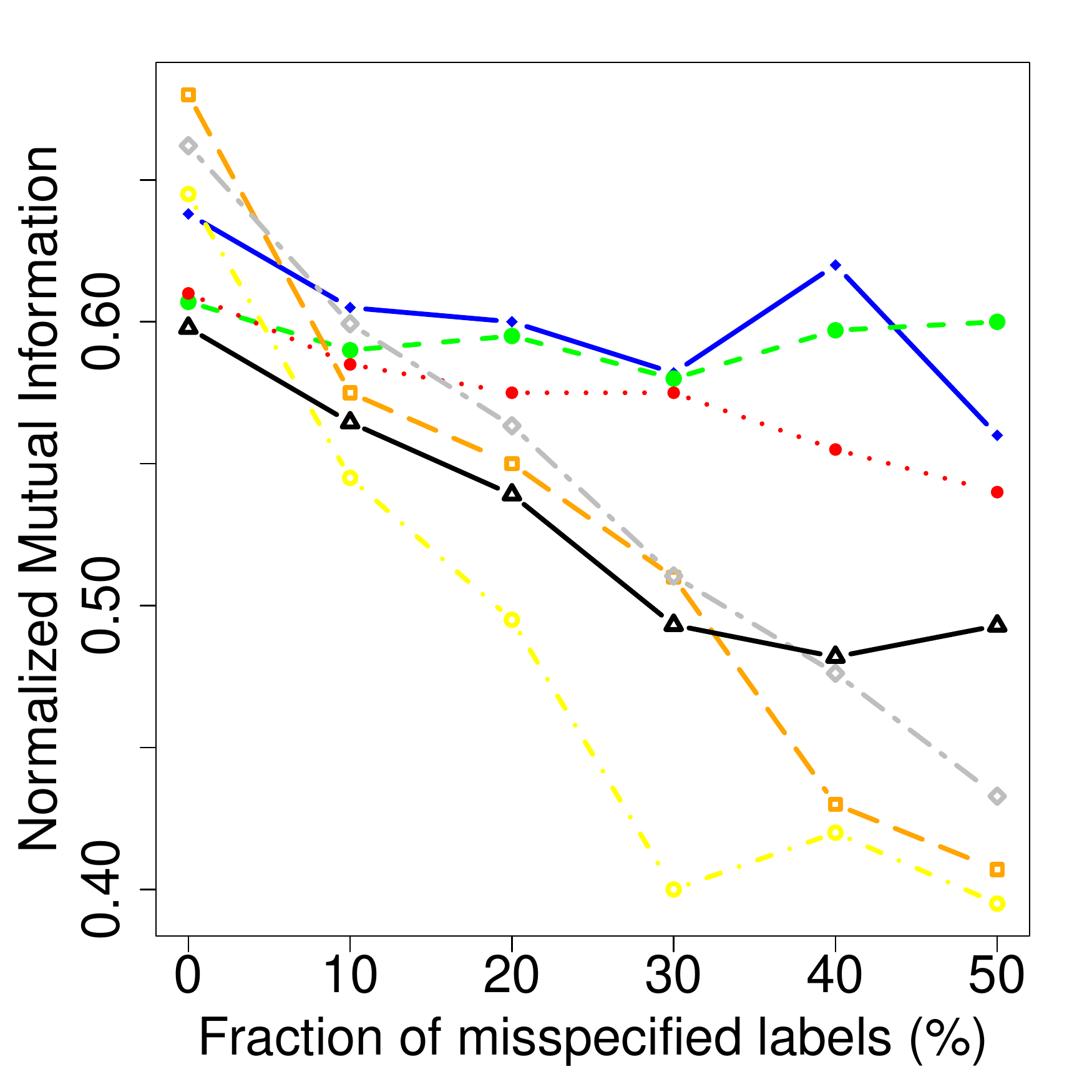}}
\subfigure[User Modeling]{\includegraphics[width=1.7in]{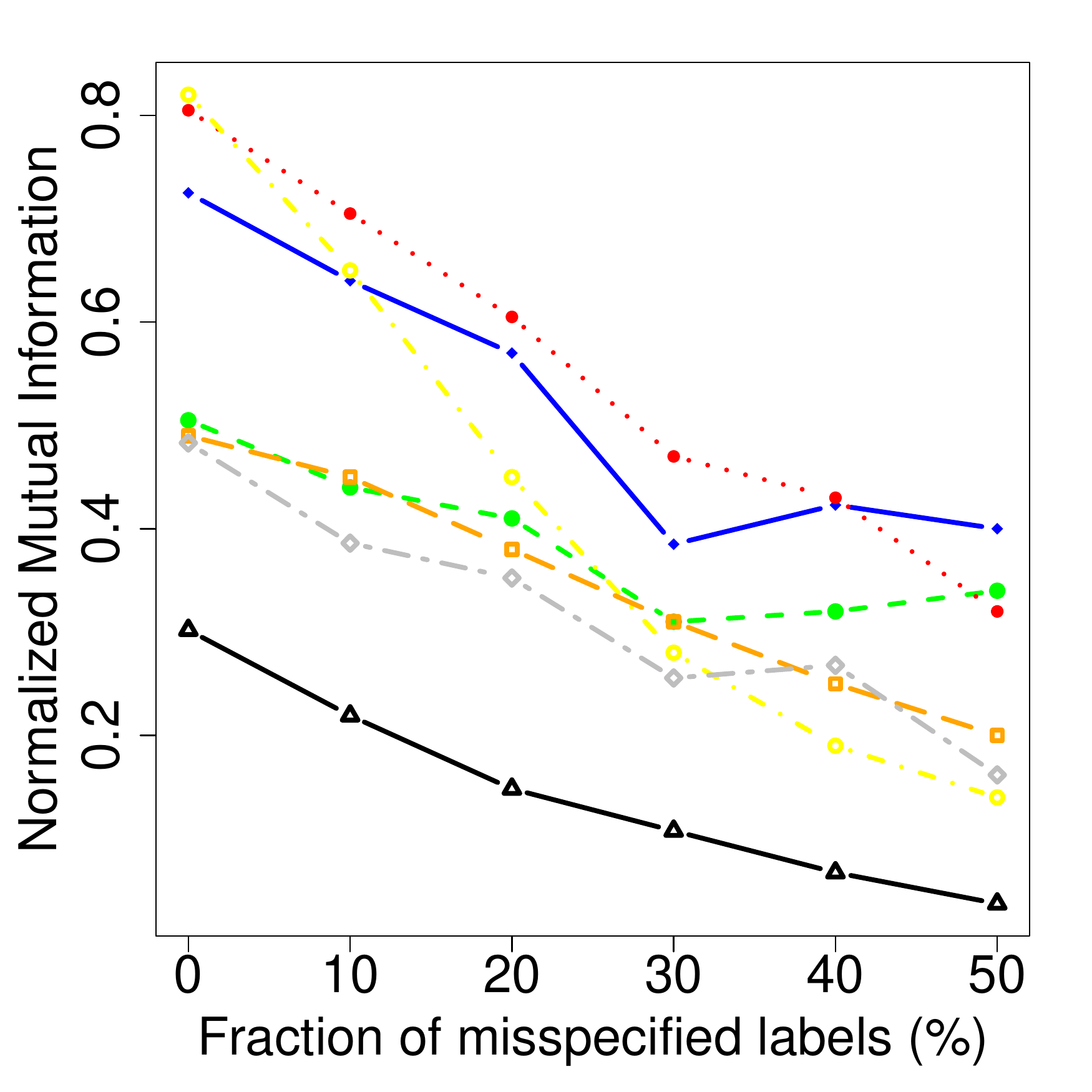}}
\subfigure[Vertebral]{\includegraphics[width=1.7in]{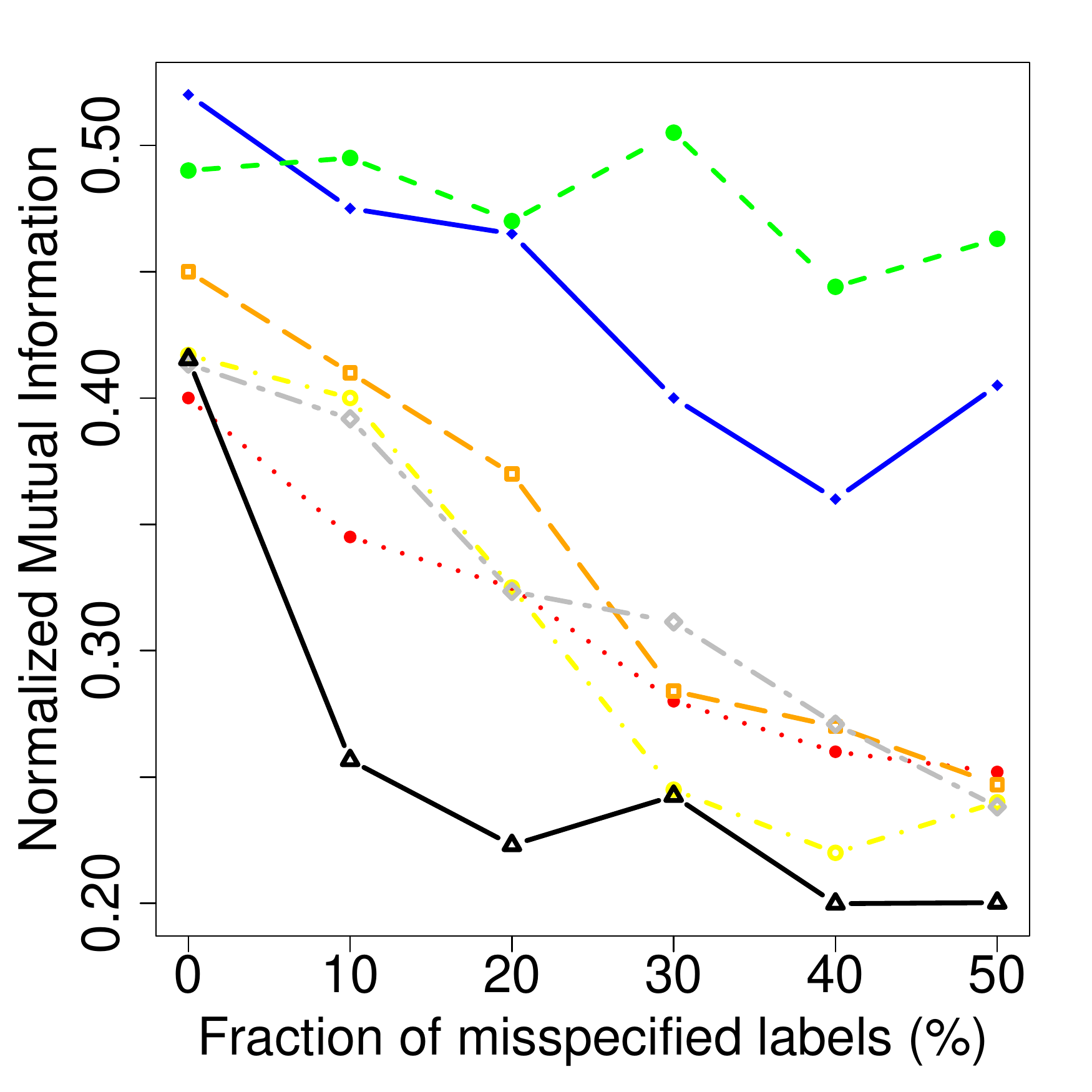}}
\subfigure[Wine]{\includegraphics[width=1.7in]{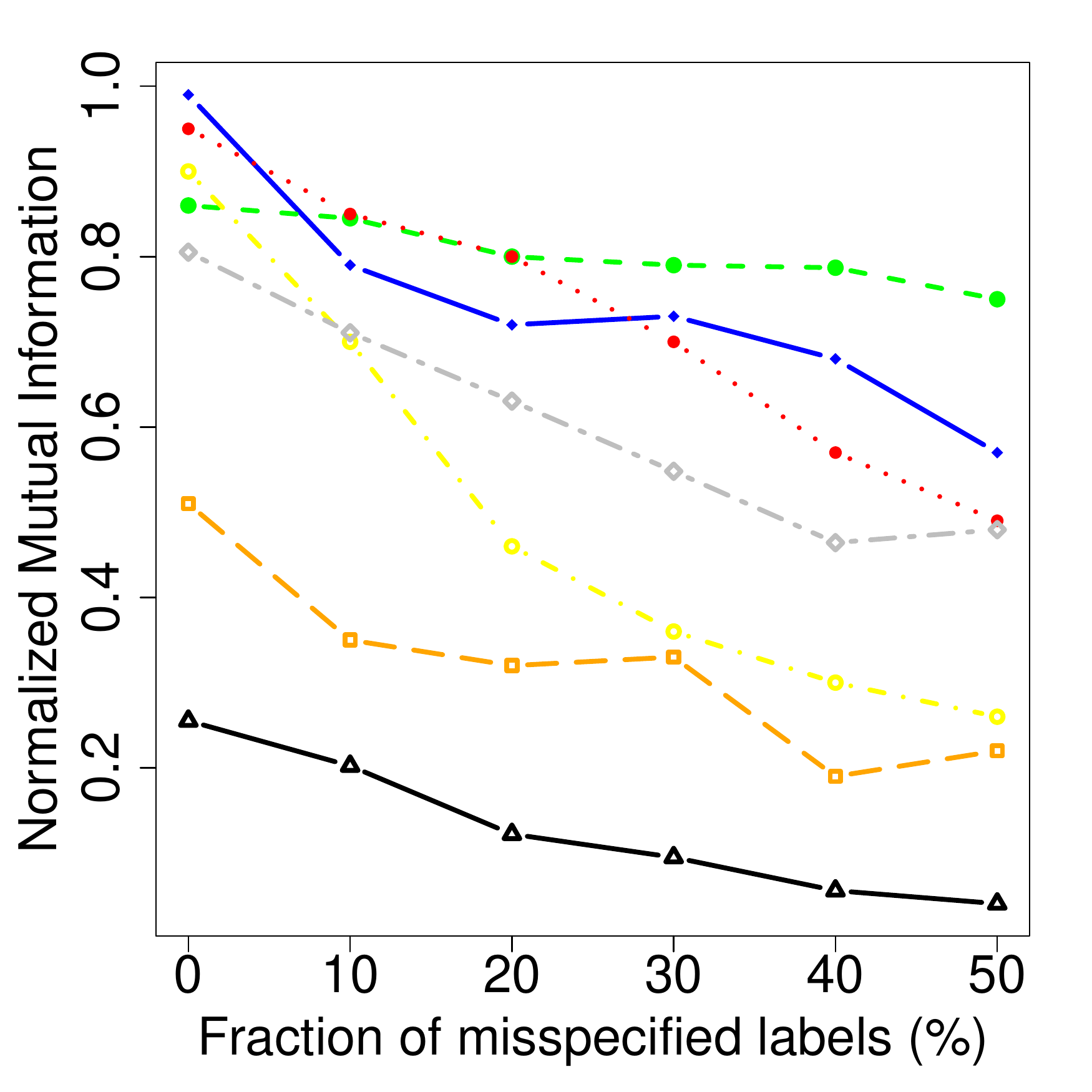}}
\subfigure[Iris]{\includegraphics[width=1.7in]{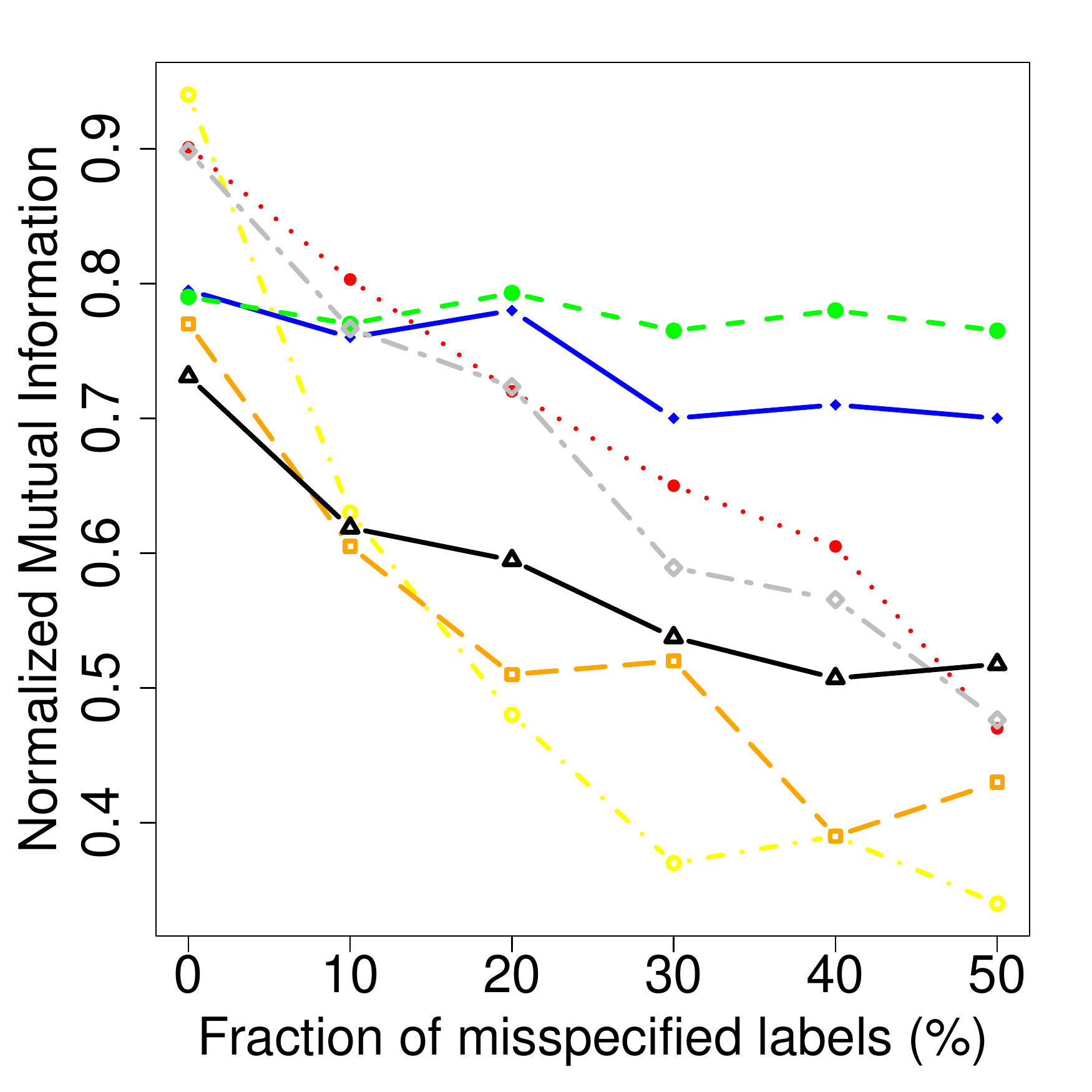}}
\end{center}
\caption{Influence of misspecified labels on the clustering results. {\cmc} was run with twice the true number of clusters.}
\label{fig:uci-noise}
\end{figure}

In real-world applications, the side information usually comes from human experts, who label training samples. Depending on the expertise of these workers, some part of this side information might be noisy or erroneous. Therefore, the clustering algorithm needs to be robust w.r.t.\ noisy side information.

To simulate the above scenario, we randomly selected 30\% of the data points as side information, as in Section \ref{sec:experiment:semi}, and assign incorrect labels for a fixed percentage of them (0\%, 10\%, 20\%, 30\%, 40\%, 50\% of misspecified labels). All methods were run in the same manner as in the previous experiments.

One can see in Figure \ref{fig:uci-noise} that $\cmc_0$ showed the highest robustness to noisy labels among all competing methods, i.e., the NMI deteriorated the least with increasing noise. Although $\cmc_1$ achieved higher NMI than $\cmc_0$ for correctly labeled data (without noise), its performance is usually worse than $\cmc_0$ when at least 30\% of labels are misspecified. The robustness of mixmod and spec is acceptable; their results vary with the used data set, but on average they cope with incorrect labels comparably to $\cmc_1$. In contrast, c-GMM, k-means and fc-means are very sensitive to noisy side information. Since their performance falls drastically below the results returned for strictly unsupervised case, they should not be used if there is a risk of unreliable side information.

\subsection{Influence of weight parameter}

From Figure~\ref{fig:uci1} it can be seen that $\beta = \beta_0$ often seems to be too small to benefit sufficiently from partition-level side information, although it provides high robustness to noisy labels. In this experiment, we investigate the dependence between the value of $\beta$ and the size of the labeled data set and the fraction of noisy labels, respectively.

\begin{figure}
\begin{center}
\subfigure[Ecoli]{\includegraphics[width=1.7in]{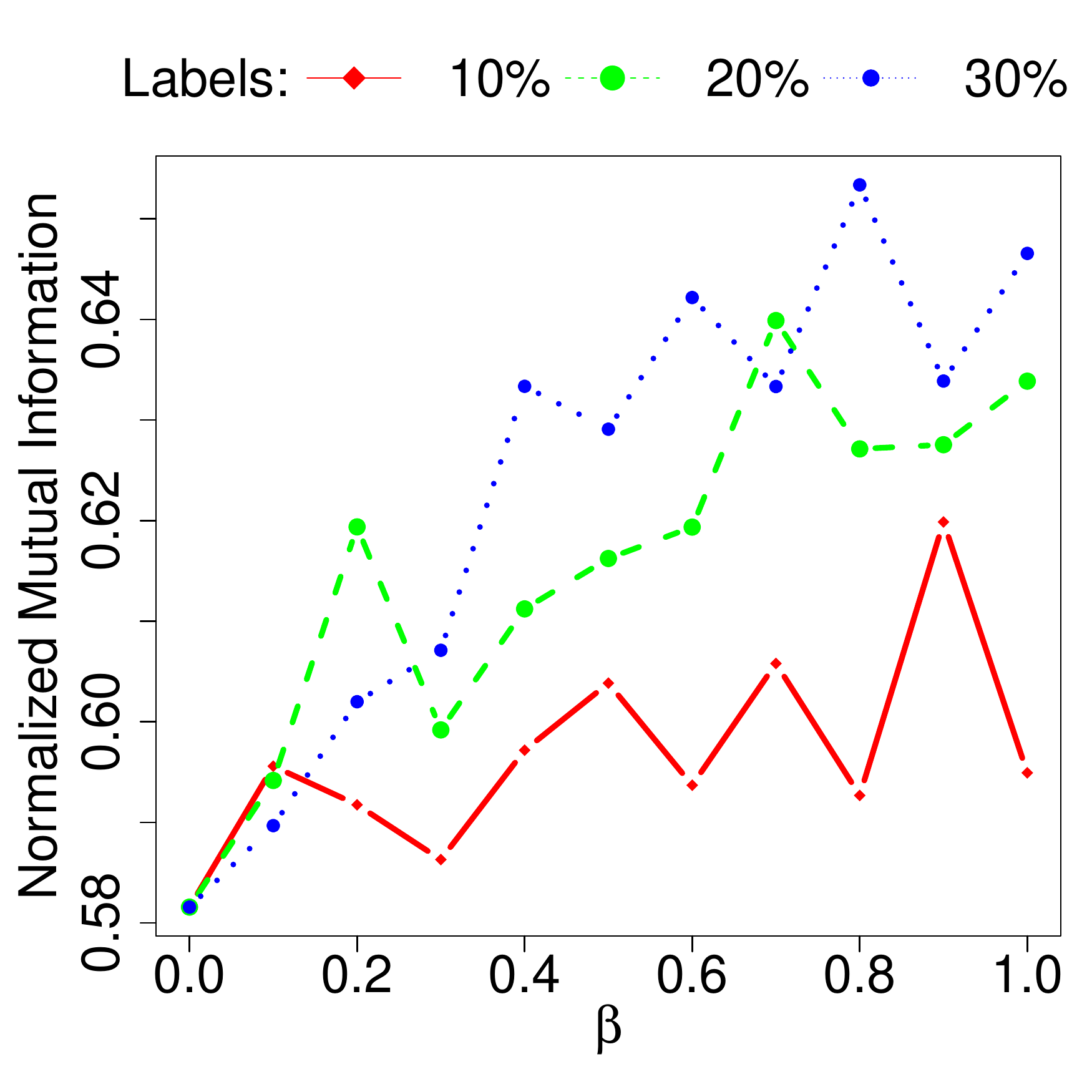}}
\subfigure[Glass]{\includegraphics[width=1.7in]{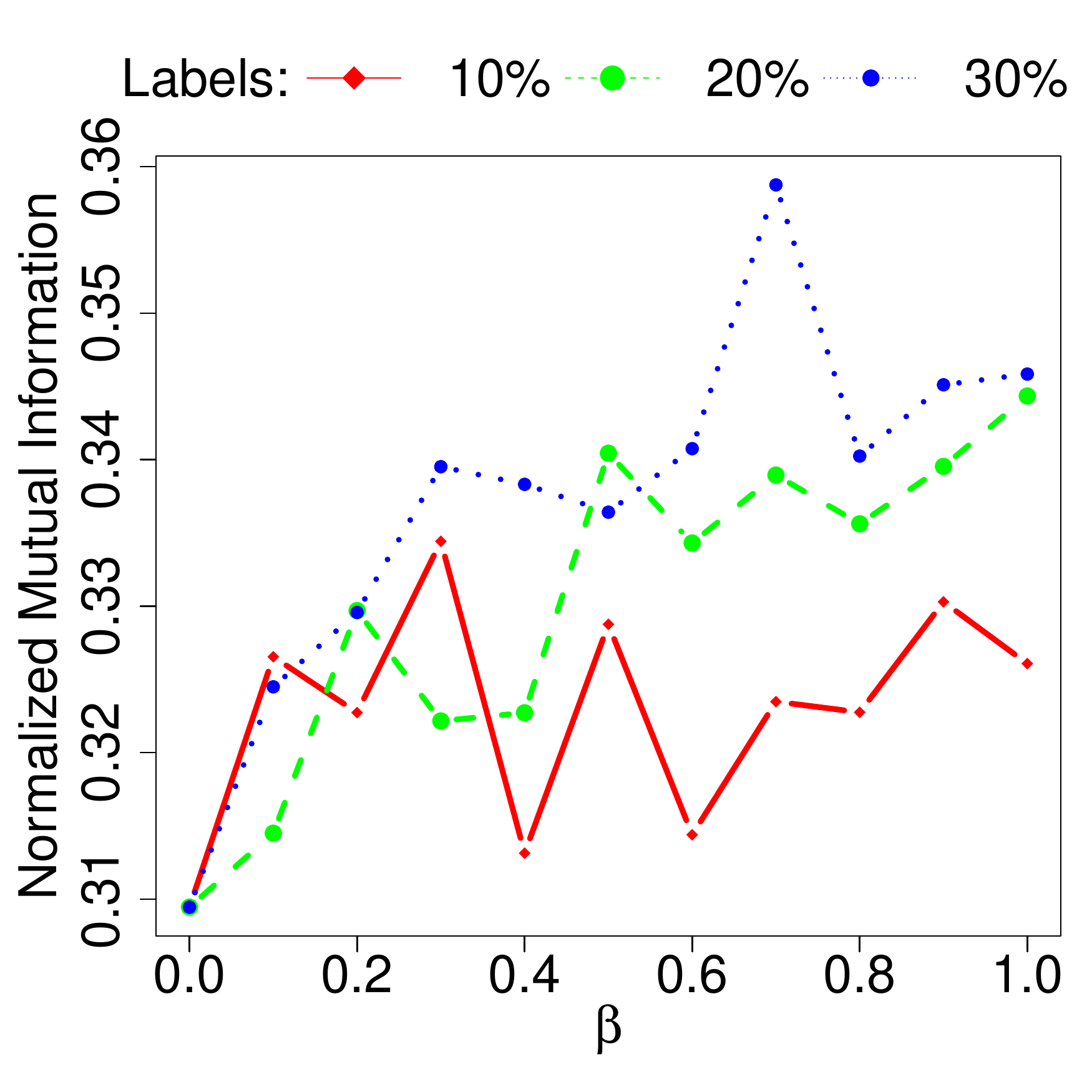}}
\subfigure[Segmentation]{\includegraphics[width=1.7in]{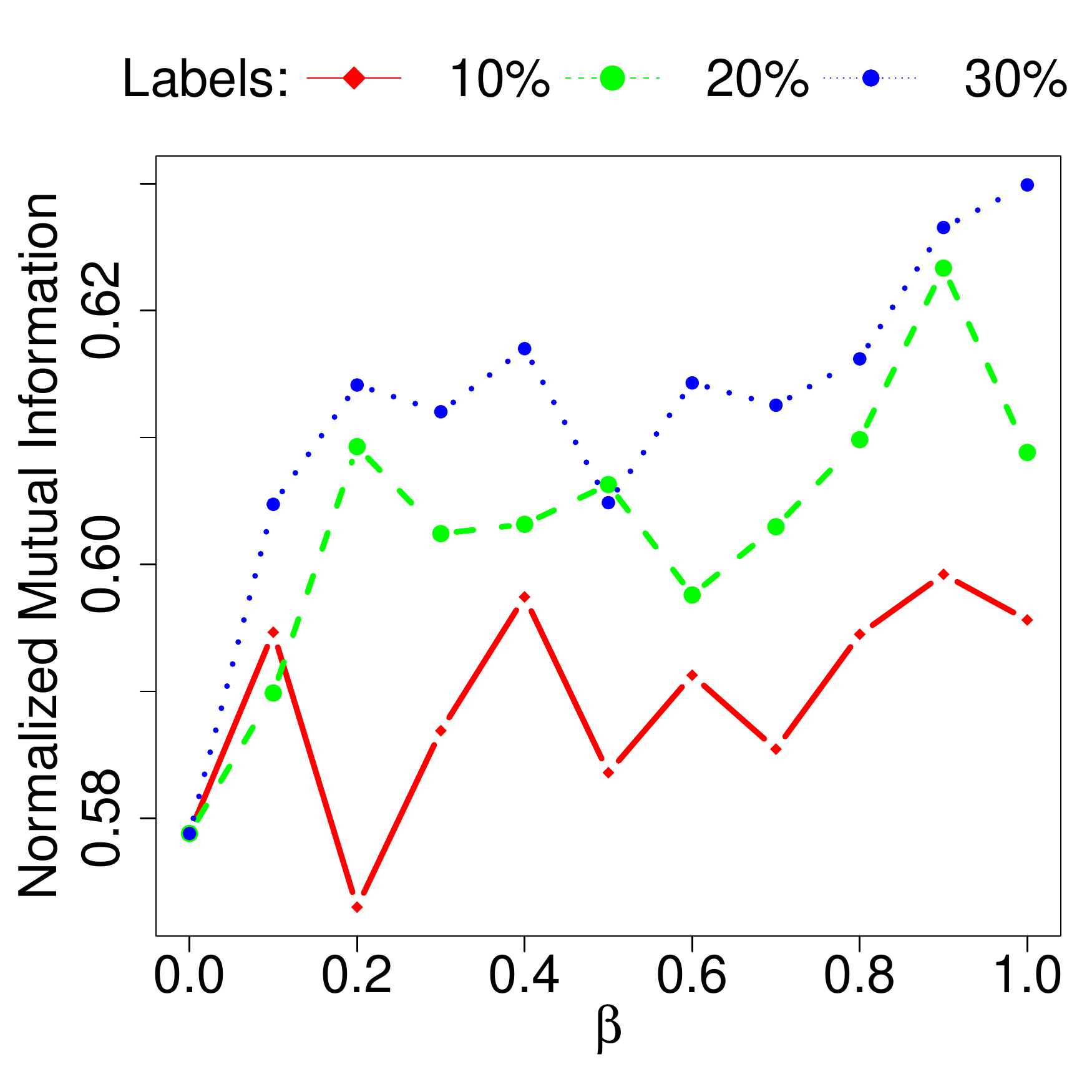}}

\subfigure[User Modeling]{\includegraphics[width=1.7in]{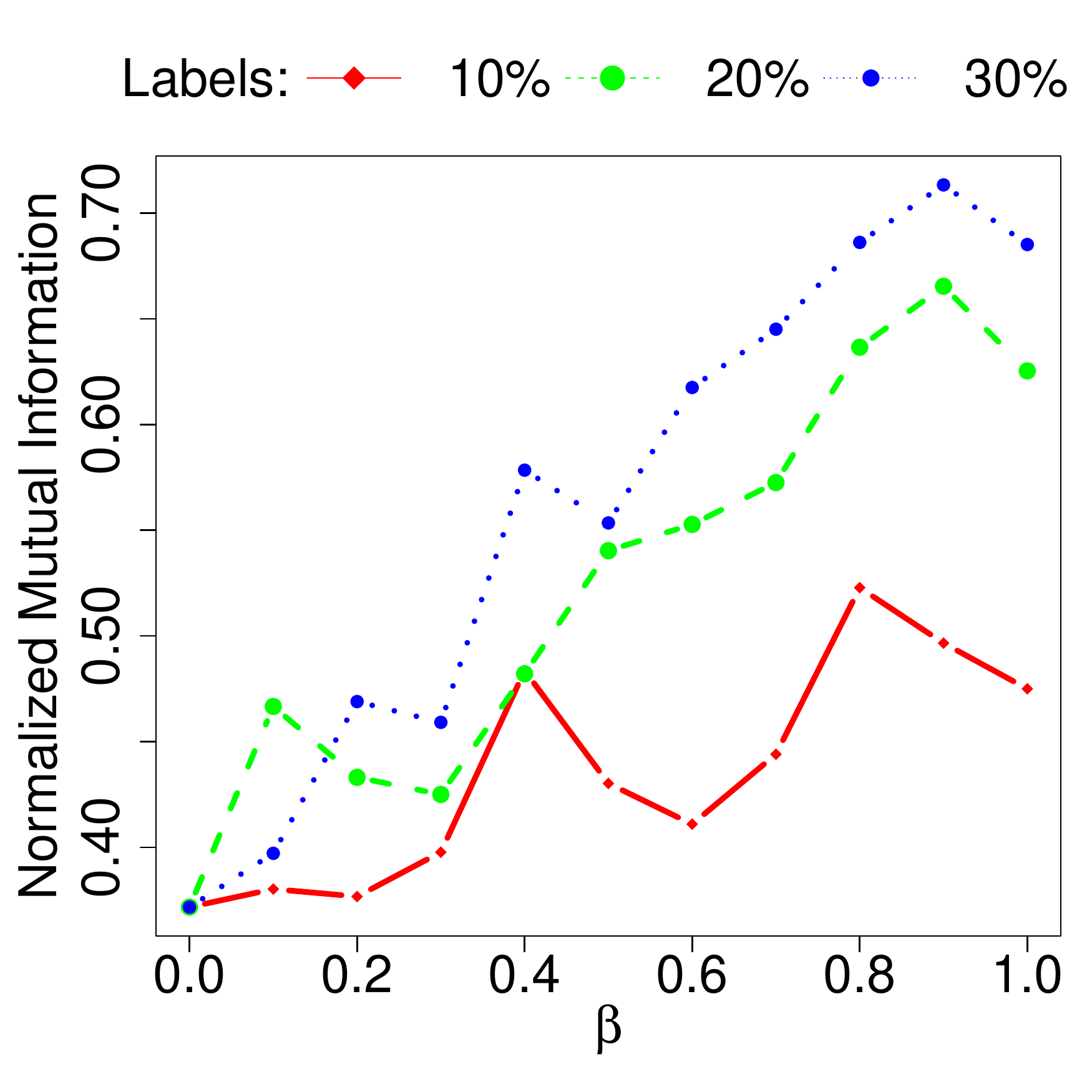}}
\subfigure[Vertebral]{\includegraphics[width=1.7in]{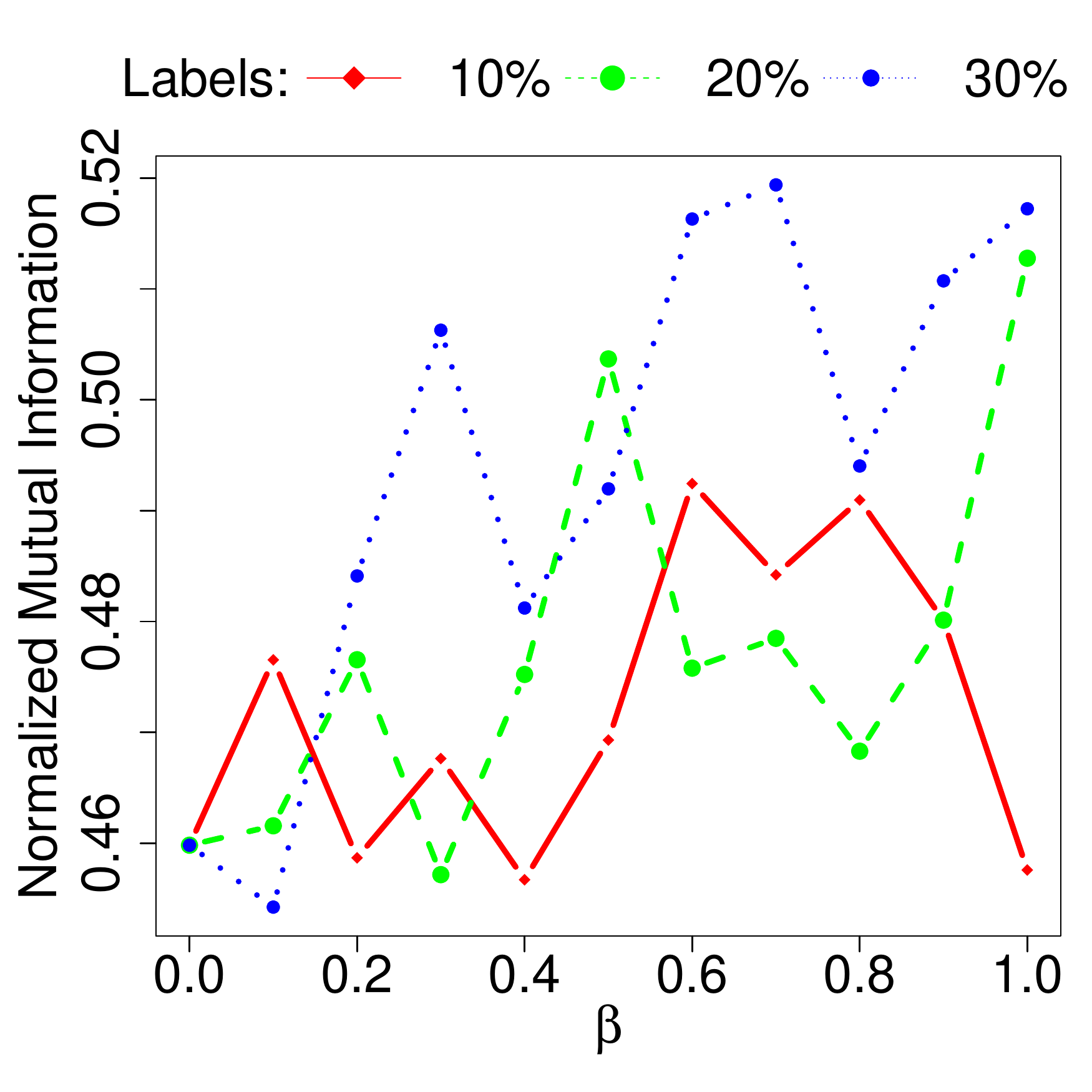}}
\subfigure[Wine]{\includegraphics[width=1.7in]{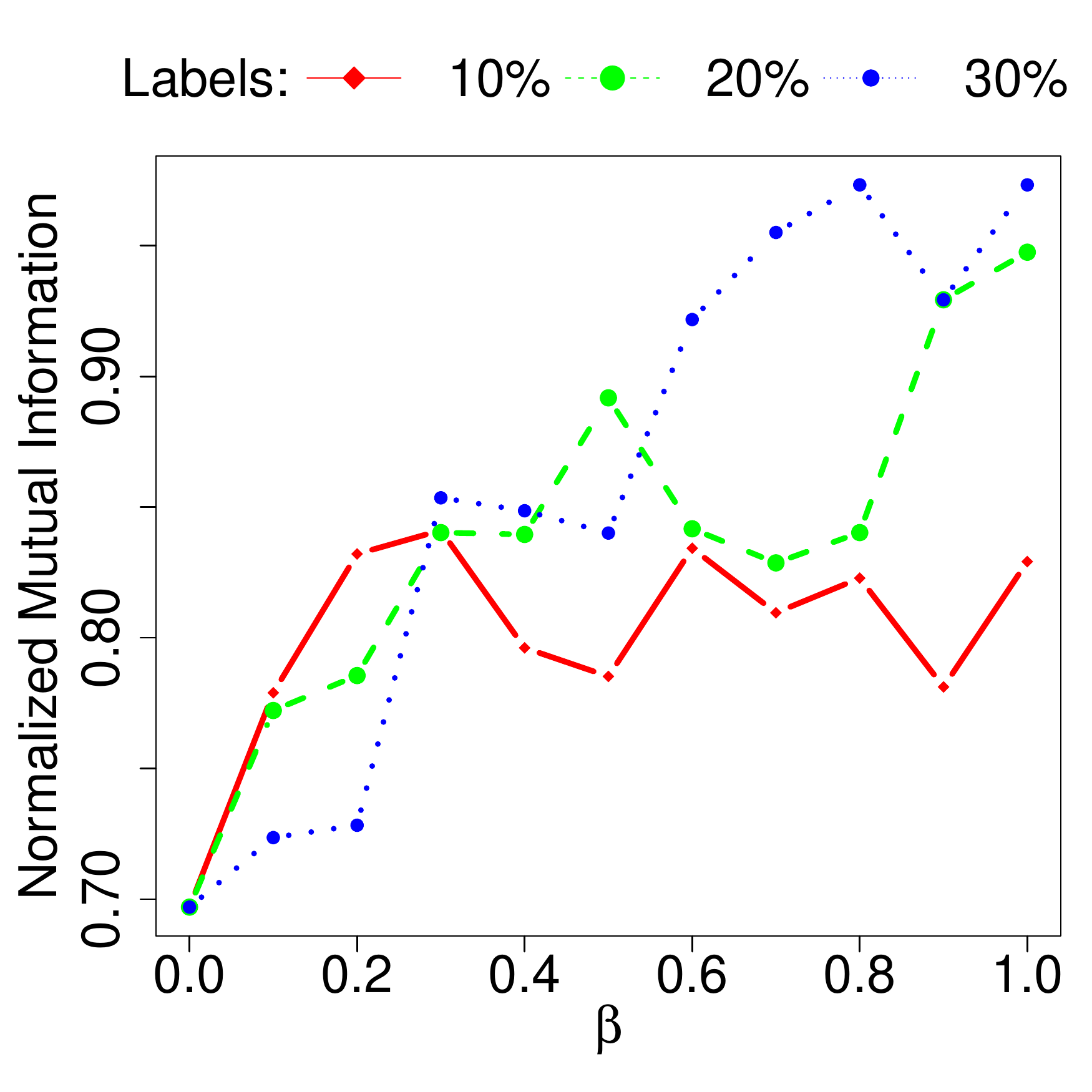}}

\subfigure[Iris]{\includegraphics[width=1.7in]{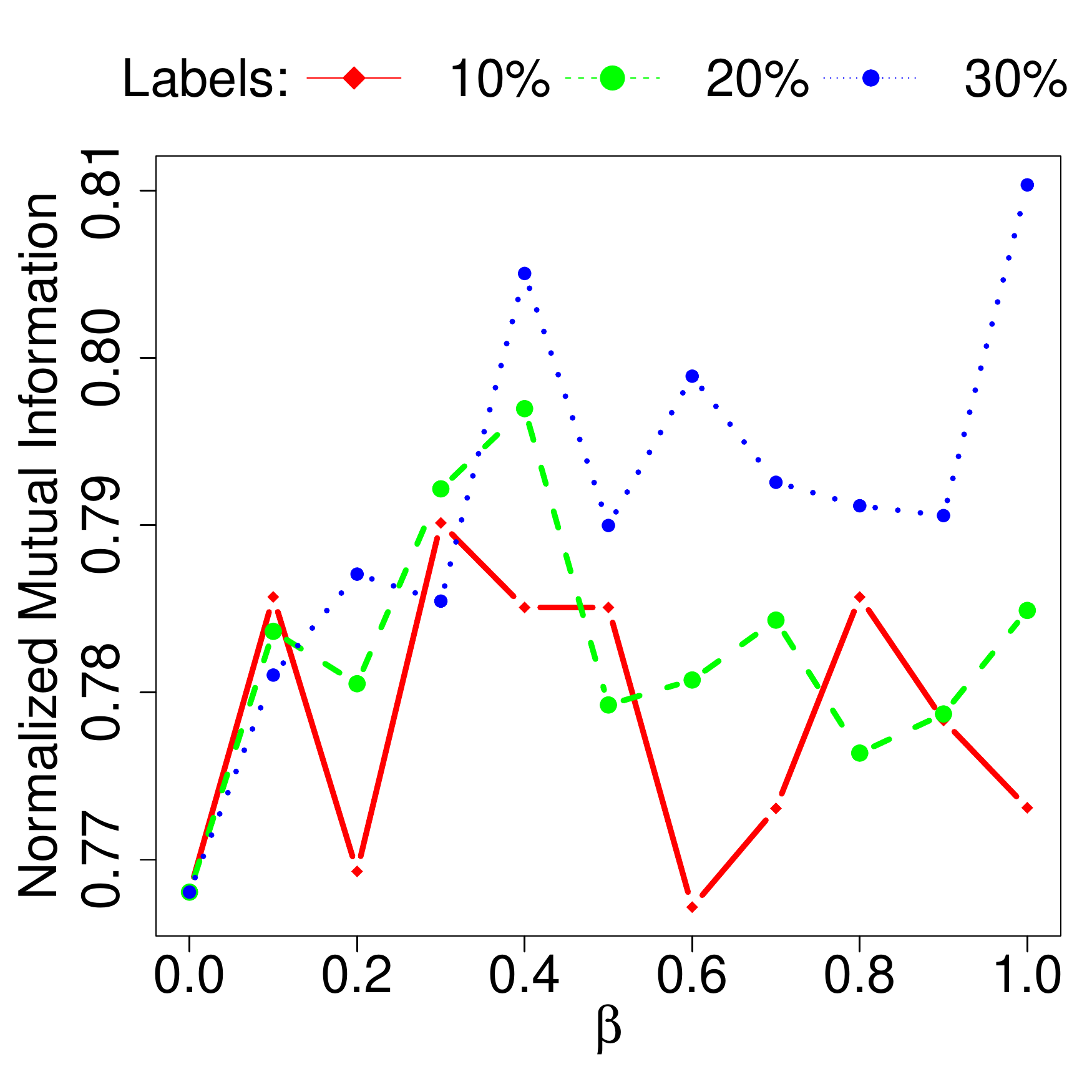}}
\end{center}
\caption{Dependence between the number of labeled data points and the value of parameter $\beta$.}
\label{fig:beta-labels}
\end{figure}

\begin{figure}
\begin{center}
\subfigure[Ecoli]{\includegraphics[width=1.7in]{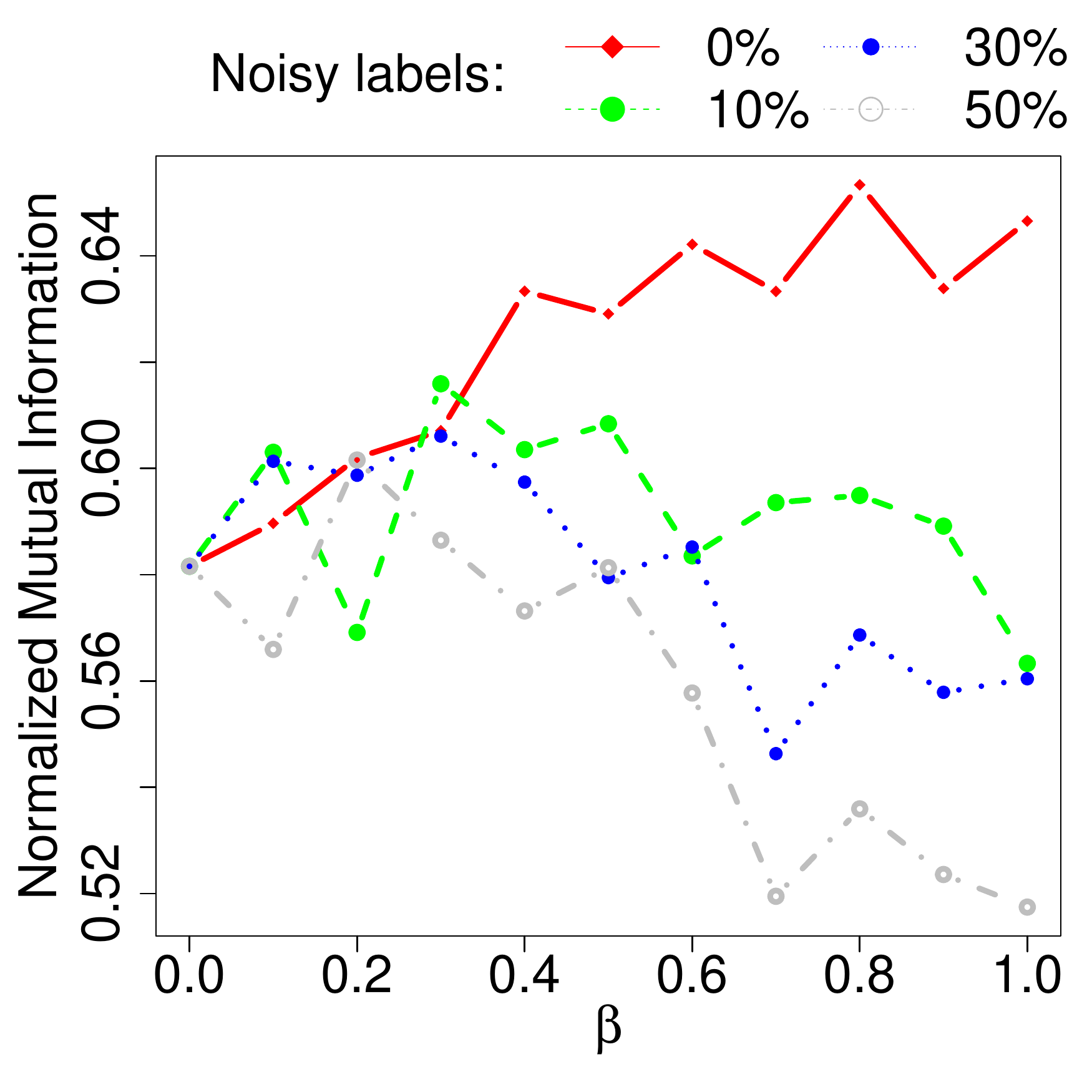}}
\subfigure[Glass]{\includegraphics[width=1.7in]{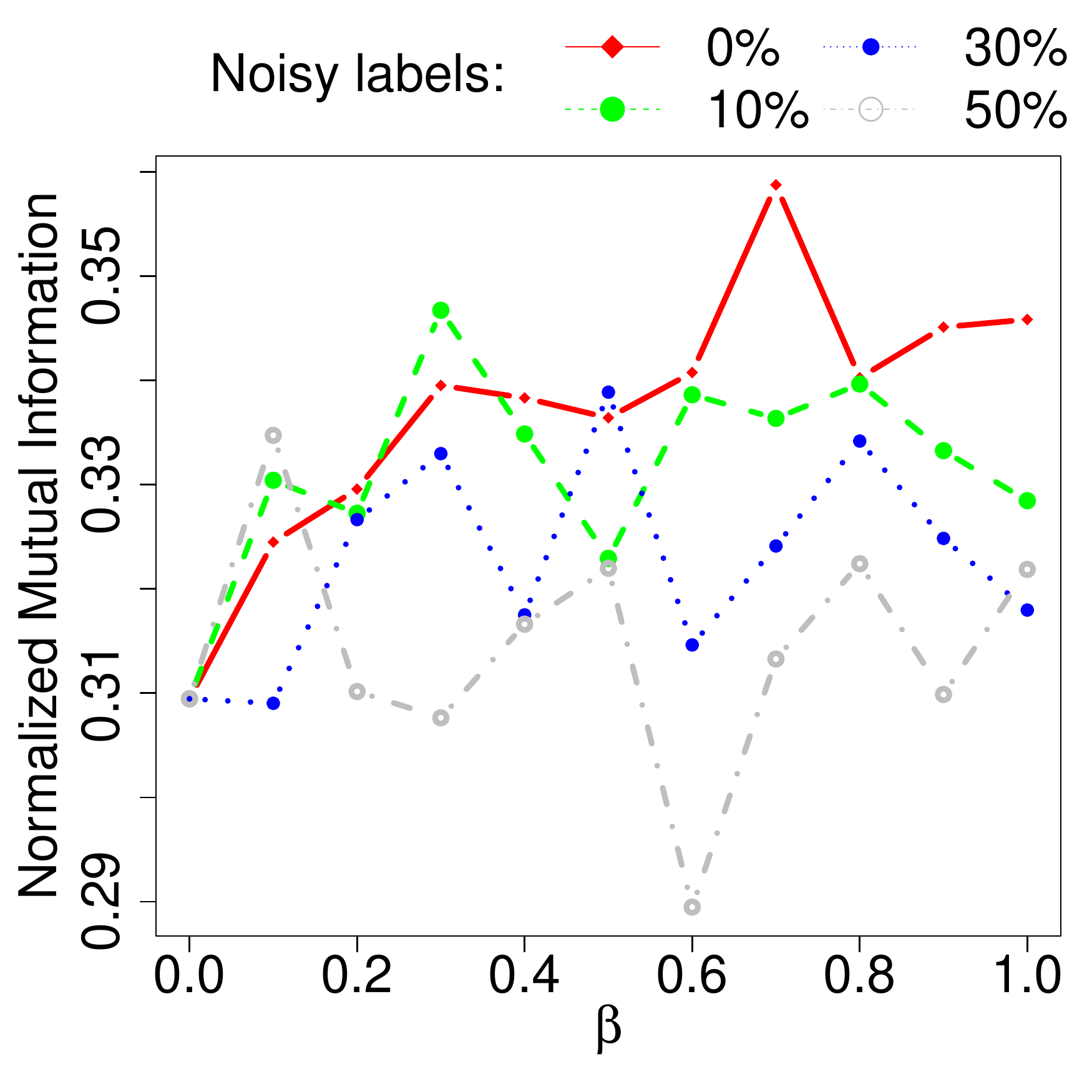}}
\subfigure[Segmentation]{\includegraphics[width=1.7in]{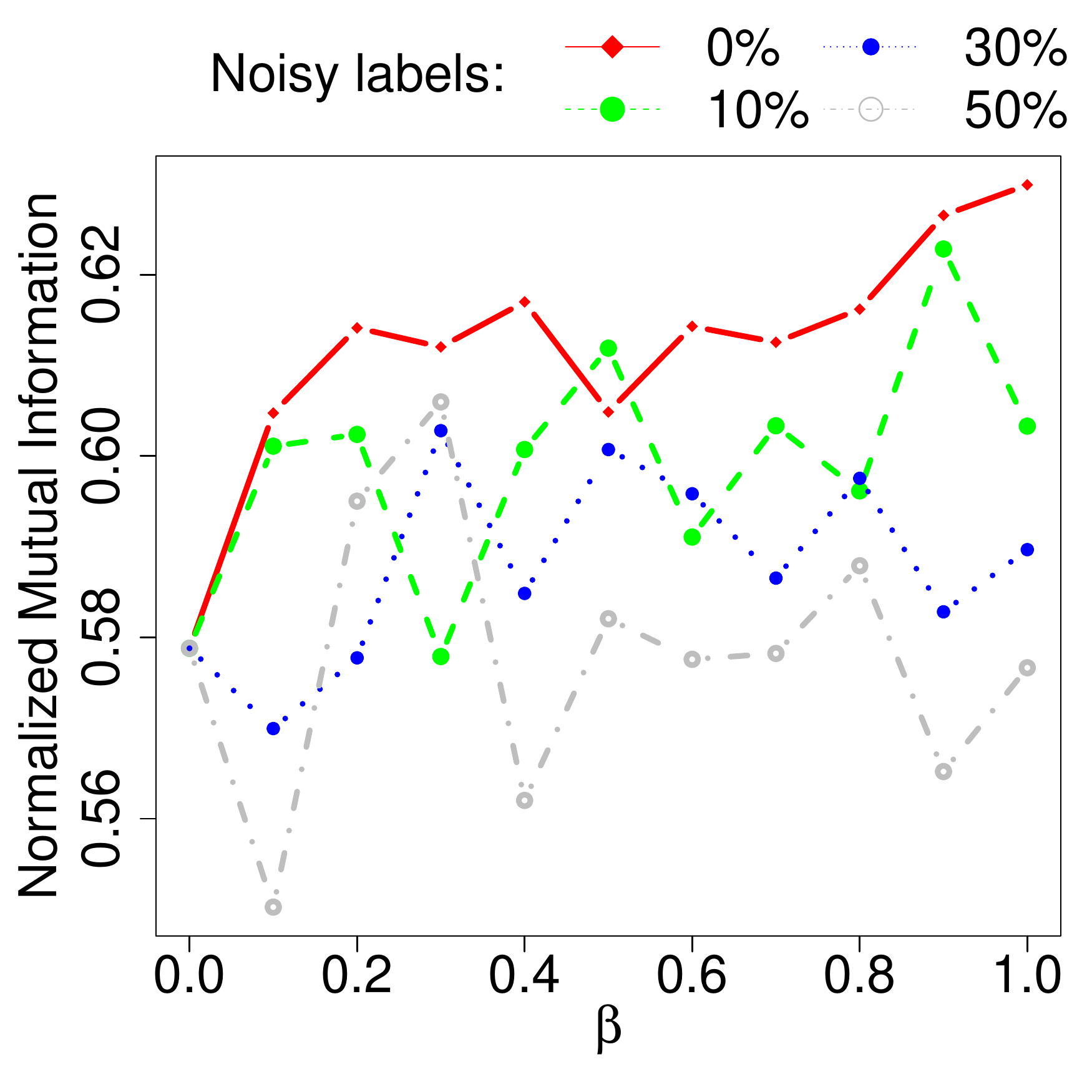}}

\subfigure[User Modeling]{\includegraphics[width=1.7in]{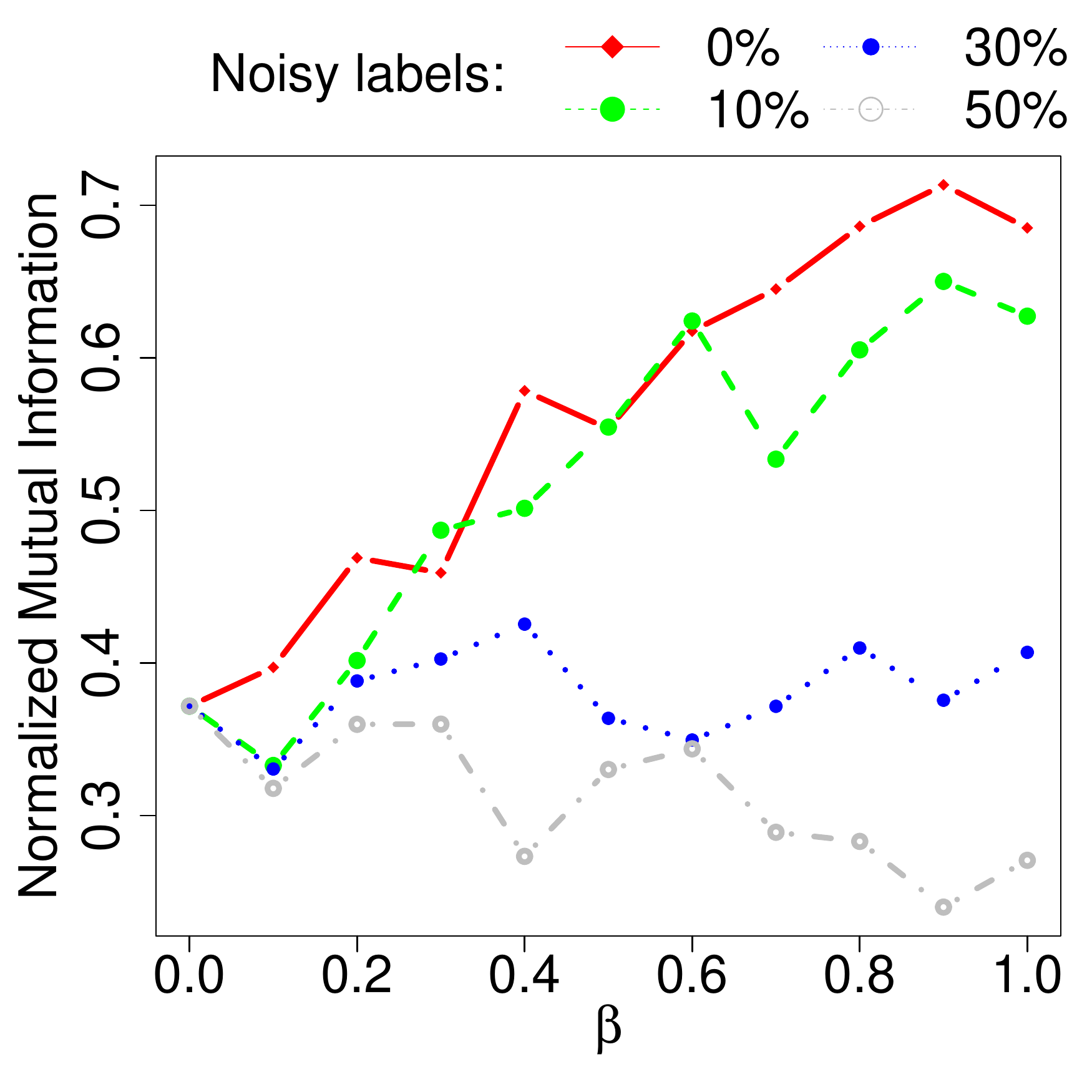}}
\subfigure[Vertebral]{\includegraphics[width=1.7in]{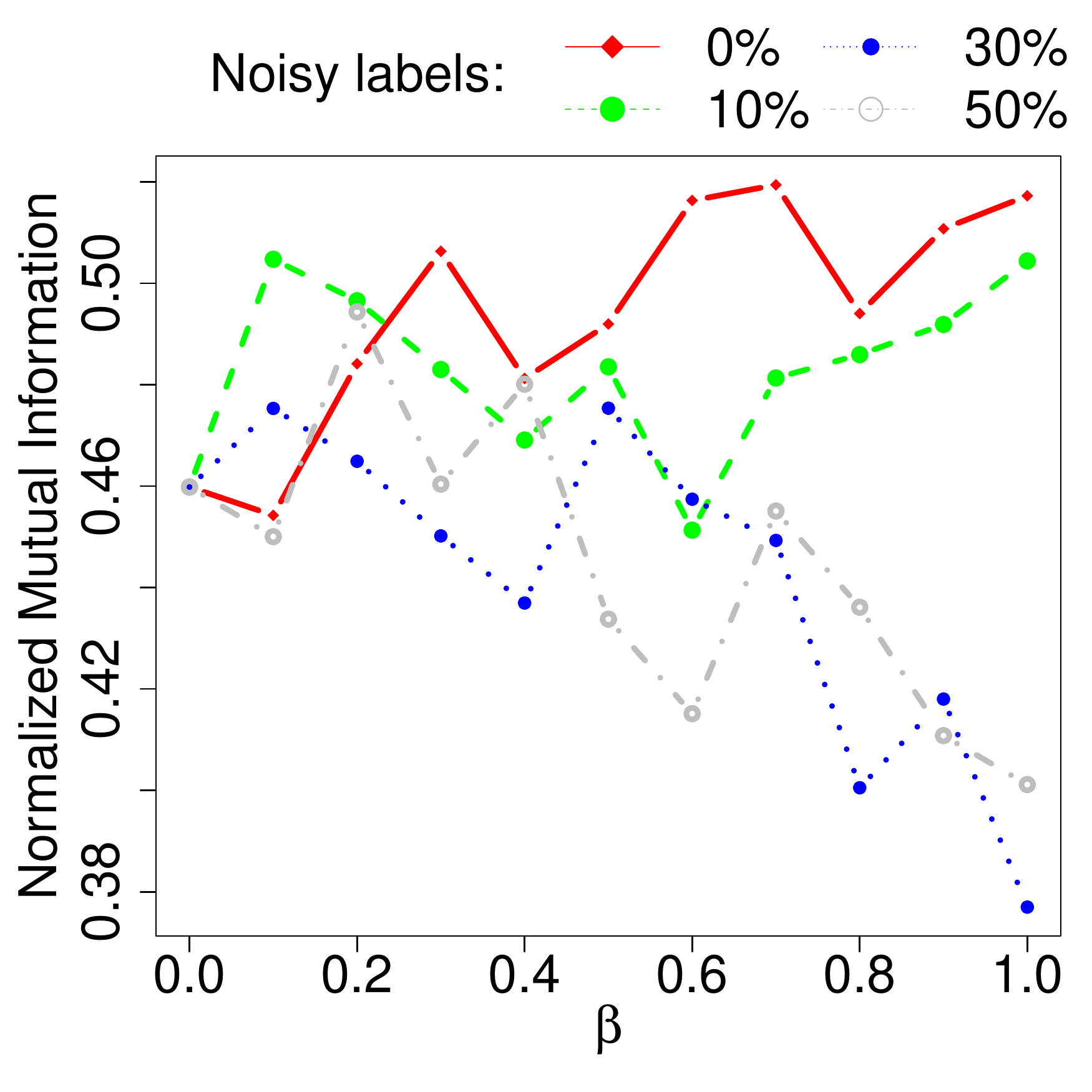}}
\subfigure[Wine]{\includegraphics[width=1.7in]{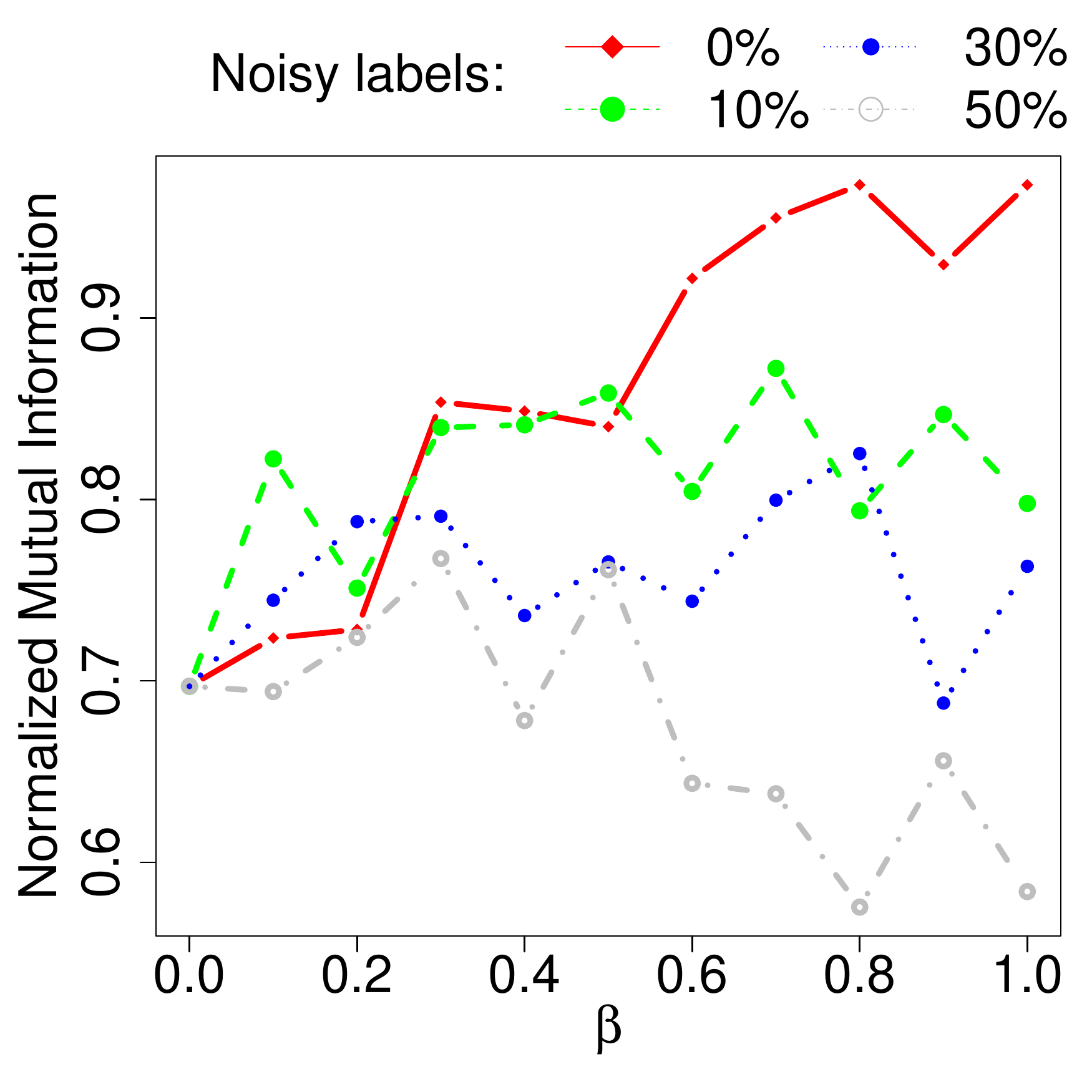}}

\subfigure[Iris]{\includegraphics[width=1.7in]{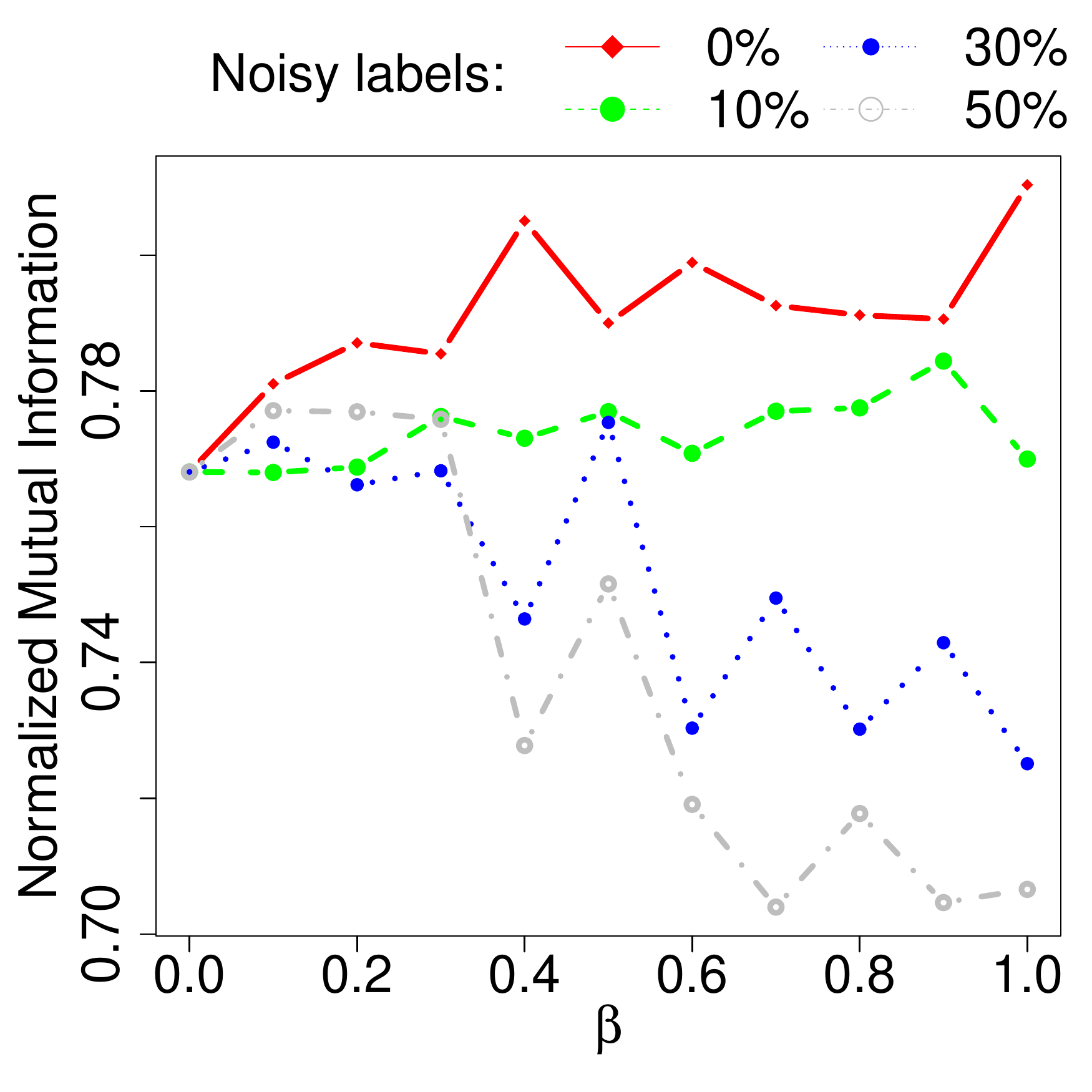}}
\end{center}
\caption{Dependence between the fraction of incorrect labels and the value of parameter $\beta$.}
\label{fig:beta-noise}
\end{figure}

First, we checked the performance of {\cmc} with different values of $\beta$ in the noiseless case. We randomly selected  10\%, 20\% and 30\% of the data points, respectively, and labeled them according to their true classes. Figure \ref{fig:beta-labels} shows that {\cmc} with $\beta= \beta_0$ run on 30\% labels performed similarly to {\cmc} with $\beta=1$ run on 10\% labels. Therefore, we see that the lack of labeled data can be compensated with a larger value of $\beta$. Moreover, a larger value of $\beta$ makes {\cmc} benefit more from a larger number of correctly labeled data points.

In the second experiment we investigated the relation between the fraction of noisy side information and the weight parameter. We drew 30\% of the data points and labeled 0\%, 10\%, 30\%, and 50\% of them incorrectly (the remaining selected data points were labeled with their correct class labels). We see in Figure \ref{fig:beta-noise} that a small noise of 10\% did not have severe negative effects on the clustering results. In this case NMI was almost always higher than in the unsupervised case (i.e., for $\beta=0$), even for $\beta=1$. For 50\% of incorrectly labeled data points, increasing $\beta$ has a negative effect on the clustering performance, while $\beta=\beta_0$ provided high robustness to the large amount of noisy labels and in most cases performed at least as well as the unsupervised scenario. For the case where 30\% of labels were misspecified, choosing $\beta < 0.6$ seems to produce results at least as good as when no side information is available.

\subsection{Hierarchy of chemical classes -- a case study} \label{sec:chem}

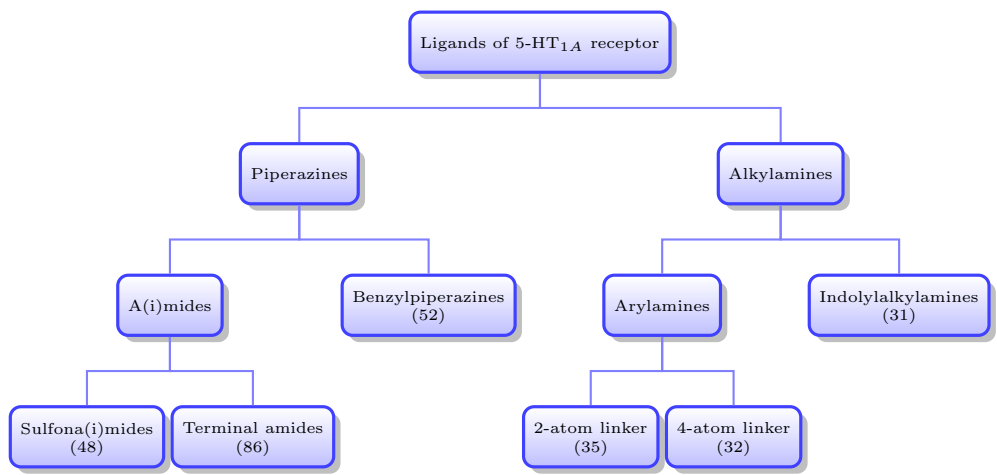
\begin{figure}[t]
\centering
\begin{tikzpicture}[every tree node/.style={top color=white,
    bottom color=blue!25,
    rectangle,rounded corners,
    minimum height=8mm,
    draw=blue!75,
    very thick,
    drop shadow,
    align=center,
    text depth = 0pt},edge from parent/.style={draw=blue!50,
    thick},font=\tiny]
\Tree [.{Ligands of 5-HT$_{1A}$ receptor}
        [.{Piperazines}
            [.{A(i)mides} 
                [.{Sulfona(i)mides\\(48)} ]
                [.{Terminal amides\\(86)} ] ]
            [.{Benzylpiperazines\\(52)} ] ] 
        [.{Alkylamines}
            [.{Arylamines} 
                [.{2-atom linker\\(35)} ]
                [.{4-atom linker\\(32)} ] ]
            [.{Indolylalkylamines\\(31)} ] ] 
]
\end{tikzpicture}
\caption{Hierarchy tree of chemical compounds classes. The numbers in brackets indicate the number of times the corresponding class appeared in the data set.}
\label{fig:hierarchy}
\end{figure}

Our {\cmc} cost function only penalizes including elements from different categories into the same cluster. Covering a single category by more than one cluster is not penalized if the cost for model accuracy outweighs the cost for model complexity. In this experiment, we will show that this property is useful in discovering subgroups from side information derived from a cluster hierarchy.

We considered a data set of chemical compounds that act on 5-HT$_{1A}$ receptor ligands, one of the proteins responsible for the regulation of the central nervous system \cite{olivier1999, smieja2016chem}. Part of this data set was classified hierarchically by an expert~\cite{warszycki2013linear}, as shown in Figure~\ref{fig:hierarchy}. For an expert it is easier to provide a coarse categorization rather than a full hierarchical classificiation, especially if it is not clear how many subgroups exist. Therefore, in some cases, it might be easier to get a hierarchical structure based on this coarse categorization made by the expert and an automatic clustering algorithm that finds a partition corresponding to the clusters at the bottom of the hierarchy.

We used the Klekota-Roth fingerprint representation of chemical compounds \cite{klekota2008chemical}, which describes each object by a binary vector, where ``1'' means presence and ``0'' denotes absence of a predefined chemical pattern. Since this representation contains 4860 features in total, its direct application to model-based clustering can lead to singular covariance matrices of clusters. Therefore, PCA was used to reduce its dimension to the 10 most informative components. This data set contains 284 examples in total (see Figure \ref{fig:hierarchy} for the cardinalities of particular chemical classes).

\begin{figure}
\begin{center}
\includegraphics[width=3.7in]{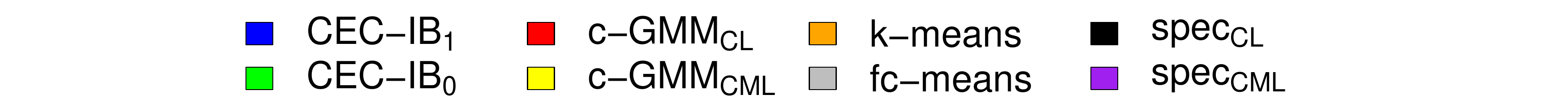}

\subfigure[Noiseless]{\includegraphics[width=2.0in]{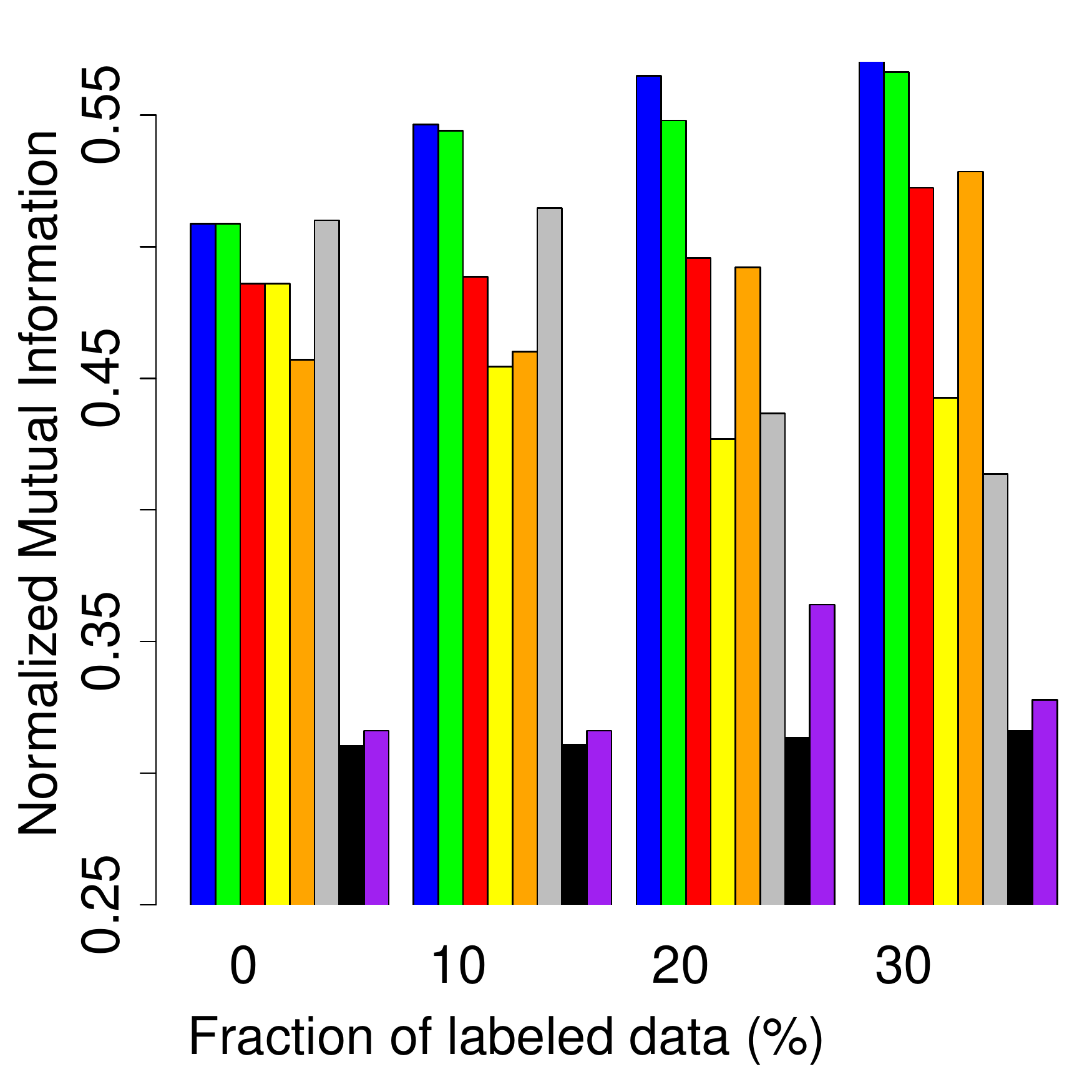}}
\subfigure[10\% of noise]{\includegraphics[width=2.0in]{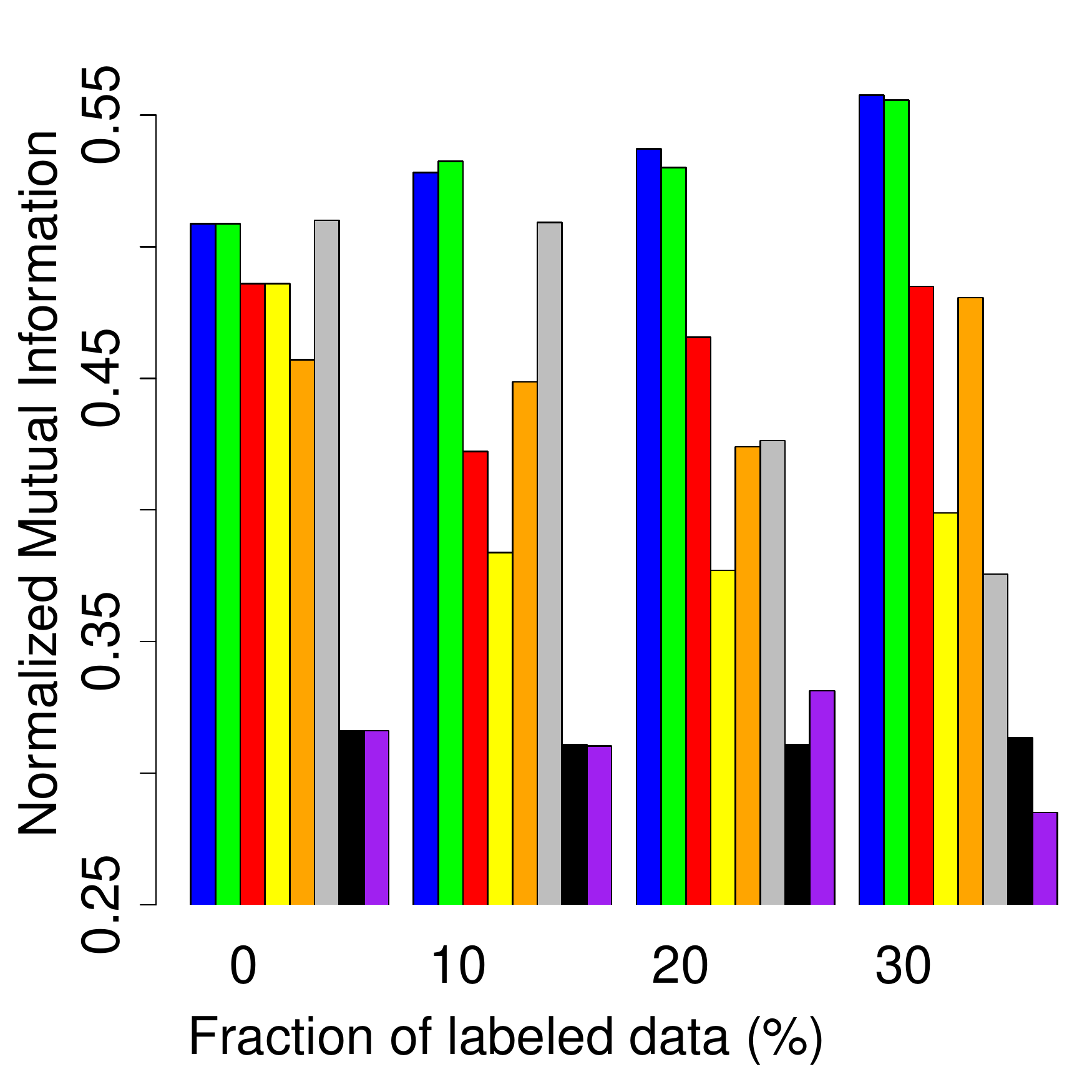}}
\subfigure[20\% of noise]{\includegraphics[width=2.0in]{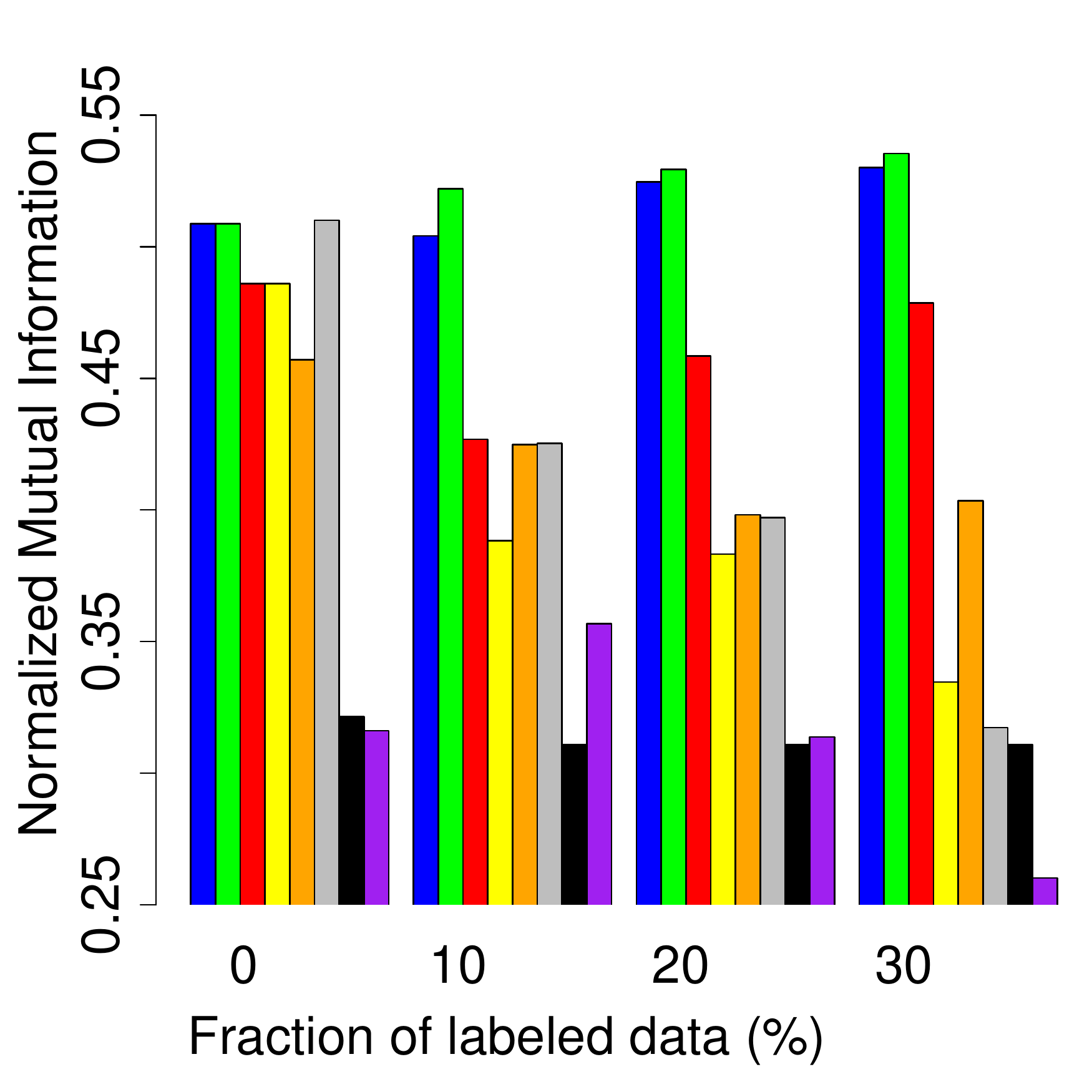}}
\subfigure[30\% of noise]{\includegraphics[width=2.0in]{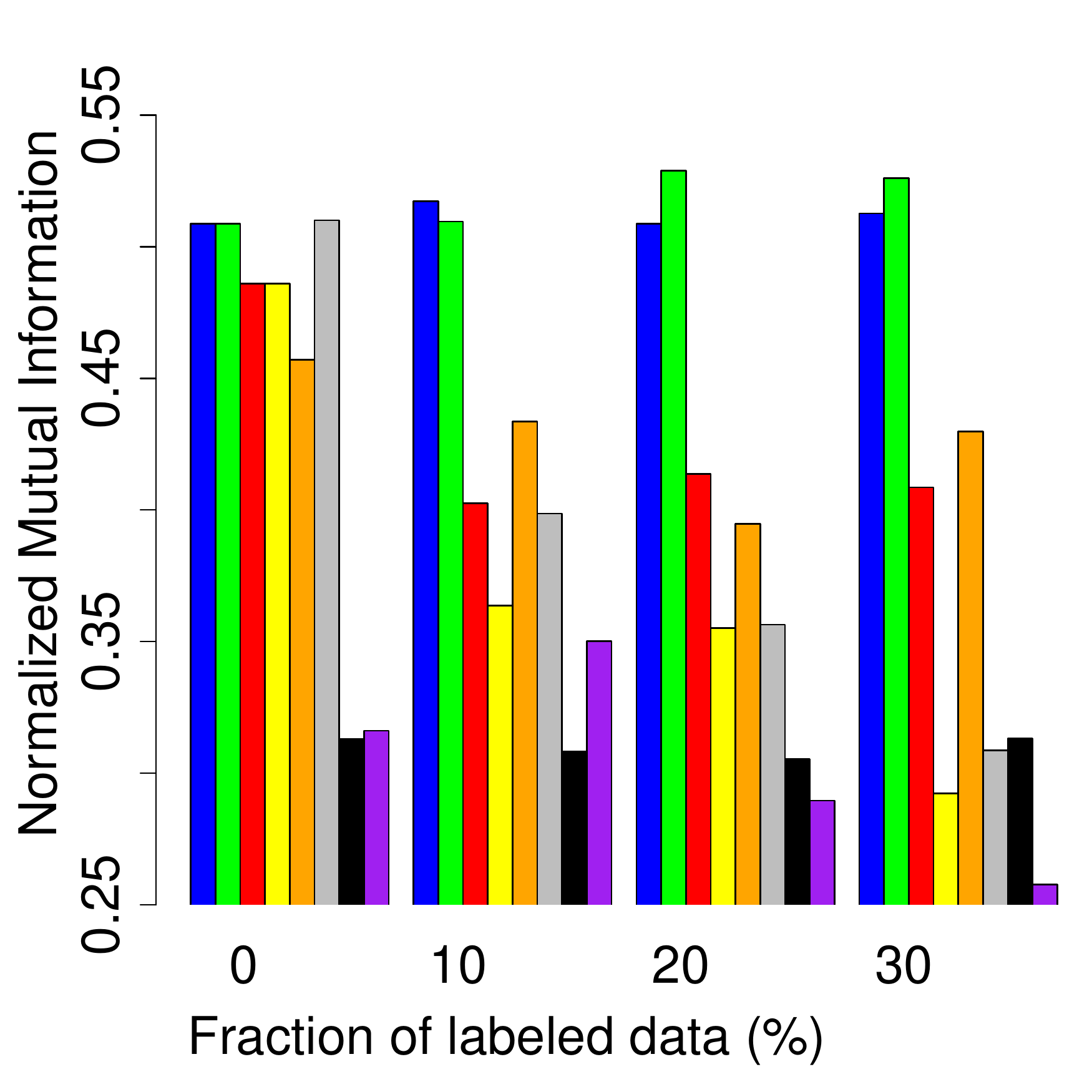}}
\end{center}
\caption{Detection of chemical subgroups. }
\label{fig:chem}
\end{figure}

We generated partition-level side information from the division of chemical data set into two classes: Piperazines and Alkylamines. We considered 0\%, 10\%, 20\% and 30\% of data points to be labeled and supposed that the human expert assigns incorrect labels with probabilities:  0\%, 10\%, 20\%, and 30\% respectively. Based on the results from previous subsection we used $\beta = 0.6$ instead of $\beta = \beta_0$, which is denoted by $\cmc_{0.6}$. Our method was run with 10 initial groups, while the other algorithms used the knowledge of the correct number of clusters. As mentioned in Section~\ref{sec:experiment:few}, it is not possible to run mixmod in this case, since the desired number of clusters is larger than the number of categories.

It can be seen from Figure \ref{fig:chem} that $\cmc_1$ gave the highest score among all methods when the expert was always assigning the correct labels and it was only slightly better than $\cmc_{0.6}$. In the case of 20\% and 30\% of misspecified labels it was slightly better to use $\beta = 0.6$, although the differences were very small. {\cmc} terminated usually with 6 or 7 groups.

One can observe that GMM with negative constraints was able to use this type of side information effectively. In the noiseless case, its results improved with the number of labeled data points, but not as much as with our method. In the noisy case, however, its performance dropped down. It is worth mentioning that the implementation of negative constraints with hidden Markov random fields is very costly, while our method is computationally efficient. k-means benefited from the side information in noiseless case, but deteriorated its performance when incorrect labels were introduced. GMM with positive and negative constraints, and fc-means were not able to use this type of knowledge to full effect. We observed that the use of negative constraints only has no effect on spec, i.e. its results were almost identical for any number of labeled data\footnote{We observed similar effects for most UCI data sets when we used negative constraints only in the setting of Section \ref{sec:experiment:few}. Changing the parametrization of the method did not overcome this negative behavior.}. The results of spec with both types of constraints led to some improvements, but its overall performance was quite low. We were unable to provide a satisfactory explanation for this behavior.

\subsection{Image segmentation}\label{sec:image}

\begin{figure}
\begin{center}
\subfigure[Image]{\includegraphics[width=1.7in]{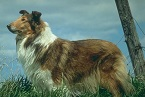}\label{fig:dog}} 
\subfigure[Side Information]{\includegraphics[width=1.7in]{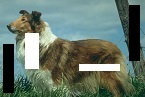}\label{fig:doglab}}
\end{center}
\caption{Test image and sample labeling dividing the picture into background and foreground. }
\label{fig:image}
\end{figure}

To further illustrate the performance of {\cmc} we applied it to an image segmentation task. We chose the picture of a dog retrieved from Berkeley Image Segmentation database \footnote{\url{https://www2.eecs.berkeley.edu/Research/Projects/CS/vision/bsds/}} presented in Figure \ref{fig:dog} (picture no. 247085 resized into $70 \times 46$ resolution) and tried to separate the shape of the dog from the background. As partition-level side information, we marked four regions by one of two labels (indicated by white and black colors, see Figure \ref{fig:doglab}). This information was passed to all considered clustering methods. In this example we focus on noiseless side information, thus we put $\beta = 1$ for {\cmc}.

To transform the image into vector data, we selected a window of size $7 \times 7$ around each pixel and used it as a feature vector of dimension $147$ (3 color intensities for 49 pixels each). Then, we applied PCA to reduce the dimension of these vectors to 5 most informative components. In consequence, we obtained a data set with 3220 data points in $\mathbb{R}^5$.

\begin{figure}
\begin{center}
\subfigure[$\cmc$]{\includegraphics[width=1.7in]{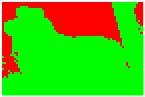}}
\subfigure[mixmod]{\includegraphics[width=1.7in]{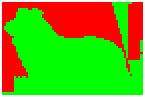}}
\subfigure[c-GMM]{\includegraphics[width=1.7in]{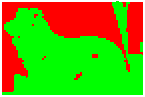}}
\subfigure[k-means]{\includegraphics[width=1.7in]{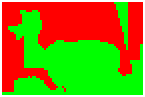}}
\subfigure[fc-means]{\includegraphics[width=1.7in]{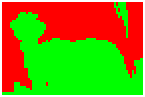}}
\subfigure[spec]{\includegraphics[width=1.7in]{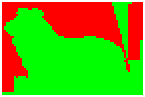}}
\end{center}
\caption{Image segmentation into foreground and background.}
\label{fig:image2}
\end{figure}

Figure \ref{fig:image2} shows the clustering results when the algorithms were run with two clusters. It can be seen that {\cmc}, mixmod, c-GMM and spec provided reasonable results. Generally though, in all cases the shape of the dog was often mixed with a part of background. This is not surprising, since 1) {\cmc}, mixmod and c-GMM are ``unimodal'', i.e. they try to detect compact groups described by single Gaussians, and 2) k-means and fc-means represent clusters by a single central point. Both background and foreground are too complex to be generalized by so simple patterns.

In order to take this into account, we first tried to detect what is a ``natural'' number of segments. For this purpose, we ran {\cmc} with 10 initial groups, which was finally reduced to 5 clusters and used this number in other algorithms. As it was shown in previous experiments using chemical compounds, must-link constraints cannot help when we have a partial labeling for two coarse classes, but we are interested in discovering their subgroups. Thus, the partition-level side information was only transformed into cannot-link constraints.

\begin{figure}
\begin{center}
\subfigure[$\cmc$]{\includegraphics[width=1.7in]{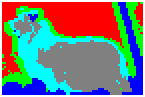}}
\subfigure[c-GMM]{\includegraphics[width=1.7in]{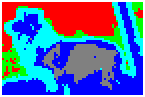}}\\
\subfigure[k-means]{\includegraphics[width=1.7in]{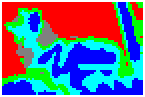}}
\subfigure[fc-means]{\includegraphics[width=1.7in]{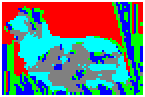}}
%\subfigure[spec]{\marek{TODO}\includegraphics[width=1.7in]{newfigs/image/km-5}}
\end{center}
\caption{Image segmentation with five clusters. }
\label{fig:image5}
\end{figure}

The results presented in Figure \ref{fig:image5} show that {\cmc} separated the background from the foreground quite well. Each of these two regions was described by two clusters, while the fifth group was used for detecting the boundary between them. The creation of such an additional group is natural, because feature vectors were constructed using overlapping windows and the contrast between background and foreground was sharp. One may notice that c-GMM and k-means also allocated one group for the boundary (green colored cluster). Nevertheless, both methods created clusters which mixed some elements from the background and foreground (blue and cyan colored clusters). The result returned by fc-means separated the two main parts of the image, but also contains a lot of small artifacts. As in the chemical example, spec was not able to achieve reasonable results with cannot-link constraints only. Similarly, it was not possible to run mixmod in this case.

\section{Conclusion}

We introduced a semi-supervised clustering method that combines model-based clustering realized by CEC with the constraint  used by the information bottleneck method. The proposed cost function consists of three terms: the first tries to minimize the final number of clusters, the second penalizes the model for being inconsistent with side information, and the third controls the quality of data modeling. The performance of our method can be tuned by changing a weight parameter that trades between these three conflicting goals. Our method is flexible in the sense that it can be applied to both classical semi-supervised clustering tasks, as well as to tasks in which either not all classes appear in the labels or in which subgroups should be discovered based on the labels. For the latter problems, it is difficult or computationally expensive to use existing techniques. Setting the weight parameter appropriately, for which we provide a deep theoretical analysis, makes our method robust to incorrect labels. We evaluated the performance of our method on several data sets, including a case study using chemical compounds data set and an image segmentation task.

\appendix

\section{Cross-Entropy Clustering}\label{app:cec}
The empirical cross-entropy between the data set $X$ and the parametric mixture $f$ of Gaussian densities is, for a given clustering $\Y$,
\begin{align*}
 H^\times(X \| f)  &= -\frac{1}{|X|} \sum_{i=1}^k \sum_{x \in Y_i} \log(p_i f_i(x))\\
  &= - \sum_{i=1}^k \frac{|Y_i|}{|X|} \left( \log p_i + \frac{1}{|Y_i|} \sum_{x \in Y_i} \log f_i(x) \right)\\
  &= - \sum_{i=1}^k \frac{|Y_i|}{|X|} \log p_i +\sum_{i=1}^k \frac{|Y_i|}{|X|} H^\times(Y_i \| f_i) .
\end{align*}
The first sum is minimized by selecting $p_i=|Y_i|/|X|$, in which case the cross-entropy reduces to the entropy of the cluster partition
$$
H(\Y) := - \sum_{i=1}^k \frac{|Y_i|}{|X|} \log \frac{|Y_i|}{|X|}.
$$

For the second sum, recall that the cross-entropy of a Gaussian density $f=\N(\mu,\Sigma)$ with mean vector $\mu$ and covariance matrix $\Sigma$ equals:
$$
H^\times(X \| f) = \tfrac{N}{2}\ln(2\pi)+
\tfrac{1}{2}\|\mu_X-\mu\|_{\Sigma}+\tfrac{1}{2}\mathrm{tr}(\Sigma^{-1}\Sigma_X)+\tfrac{1}{2}\ln \det (\Sigma),
$$
where $\mu_X$ and $\Sigma_X$ are the sample mean vector and sample covariance matrix of $X$, respectively, and where $\|x\|_\Sigma$ is the Mahalanobis norm of $x$ with respect to $\Sigma$. The density $f \in \F$ minimizing the cross-entropy function is $f=\N(\mu_X,\Sigma_X)$, i.e., its mean equals the sample mean of $X$, and its covariance matrix equals the sample covariance matrix of $X$ \cite[Theorem 4.1]{tabor2014cross}. In this case, the cross entropy equals the differential Shannon entropy of $\N(\mu_X, \Sigma_X)$, i.e.,
$$
H^\times(X \| \N(\mu_X,\Sigma_X)) = \tfrac{N}{2}\ln(2\pi e)+\tfrac{1}{2}\ln \det (\Sigma_X) = H(\N(\mu_X, \Sigma_X)).
$$
It follows that the second sum is minimized by selecting, for every $i$, the maximum likelihood estimator of $Y_i$, $f_i=\N(\mu_{Y_i},\Sigma_{Y_i})$ \cite[Theorem 4.1, Proposition 4.1]{tabor2014cross}.

\section{Proof of Theorem~\ref{thm:condcross}}\label{app:derivation}
We consider the CEC cost function~\eqref{eq:cec} separately for each category $Z_j$ and define the conditional cross-entropy as
$$
H^\times((X \| f) |\Z) = \sum_{j=1}^m \frac{|Z_j|}{|X|} H^\times(Z_j \| f_{|j})
$$
where 
$$
f_{|j} = \max(p_1(j) f_1,\ldots,p_k(j) f_k).
$$
In other words, we assume a parameterized density model in which the weights $p_i(j)$ may depend on the category, while the densities $f_i$ may not. Rewriting above cost yields
\begin{align*}
 H^\times((X \| f)|\Z) 
 &= -\sum_{j=1}^m \frac{|Z_j|}{|X|} \sum_{x\in Z_j} \frac{1}{|Z_j|} \log f_{|j}(x)\\
 &= -\frac{1}{|X|}\sum_{j=1}^m \sum_{i=1}^k\sum_{x\in Z_j\cap Y_i} \log p_i(j)f_i(x)\\
 &= -\sum_{j=1}^m \sum_{i=1}^k\frac{|Z_j\cap Y_i|}{|X|}\log p_i(j)-\frac{1}{|X|}\sum_{i=1}^k\sum_{x\in Y_i} \log f_i(x).
\end{align*}
The second sum is minimized by the maximum likelihood estimates $f_i=\N(\mu_{Y_i},\Sigma_{Y_i})$, while the first term is minimized for $p_i(j)=\frac{|Z_j \cap Y_i|}{|Z_j|}$. We thus get
\begin{align*}
 -\sum_{j=1}^m \sum_{i=1}^k\frac{|Z_j\cap Y_i|}{|X|}\log p_i(j) &= -\sum_{j=1}^m \sum_{i=1}^k\frac{|Z_j\cap Y_i|}{|X|}\log \frac{|Z_j\cap Y_i|}{|Z_j|}\\
 &= H(\Y|\Z)\\
 &= H(\Z|\Y) + H(\Y) - H(\Z)
\end{align*}
by the chain rule of entropy. Since $H(\Z)$ does not depend on the clustering $\Y$,  the minimization of the above conditional cross-entropy is equivalent to the minimization of
\begin{align}
 H(\Y) + \sum_{i=1}^k \frac{|Y_i|}{|X|} \crossfi + H(\Z| \Y).\label{costFull}
\end{align}
This is exactly the cost~\eqref{eq:costCond} for $\beta=1$.\qed

\section{Convergence Speed of Hartigan-Based CEC} \label{app:optimizationTime}

\begin{table}[t]\footnotesize
\caption{Mean number of iterations that Hartigan CEC, EM-based GMM and Lloyd k-means need to converge.}
\label{tab:convergence}	
\medskip
\setlength{\arrayrulewidth}{0.1mm}
\setlength{\tabcolsep}{3pt}
\centering
\begin{tabular}{lcccc}
\bf \shortstack{Data \\ set} & \bf \shortstack{Hartigan \\ CEC} & \bf \shortstack{EM-based \\ GMM} & \bf \shortstack{Hartigan \\ k-means}  & \bf \shortstack{Lloyd \\ k-means}   \\ \hline
Ecoli & 6.4 & 18.6 & 3.2 & 10.4\\
Glass & 5.5 & 15.7 & 3.3 & 9.7 \\
Iris & 5.1 & 19.1 & 2.3 & 7\\
Segmentation & 4.4 & 16.7 & 3.3 & 9.3\\
User Modeling & 8.5 & 48.2 & 3.9 & 10.9\\
Vertebral & 7.6 & 17.8 & 2.3 & 9\\
Wine & 7.6 & 13.6 & 2.3 & 8.6\\
\hline
\end{tabular}
\\ \vspace{0.1in}
\end{table}

We compared the number of iterations that CEC, EM, and k-means required to converge to a local minimum. We used the seven UCI data sets from Table \ref{tab:data1} and averaged the results over ten runs. Side information was not considered in these experiments. Table \ref{tab:convergence} shows that the Hartigan heuristic applied to the CEC cost function converges faster than EM does for fitting a GMM. The same holds when comparing the Hartigan algorithm with Lloyd's method applied to k-means. Similar results were obtained in an experimental evaluation \cite{cec_guide}. We also found out that that {\cmc} uses a similar number of iterations as CEC; however, the convergence speed varies with the particular sample of side information, which makes a reliable comparison more difficult.

\section{Chain Rule for Proportional Partitions}\label{app:chainrule}
We now show that, for partitions $\Y$ proportional to $\Z$, the chain rule of entropy can be applied, i.e., 
$$
H(\Y) + H(\Z|\Y)=H(\Z,\Y).
$$
We have
\begin{align*}
 &H(\Y) + H(\Z|\Y)\\
 &= -\sum_{i=1}^k \frac{|Y_i|}{|X|} \log \frac{|Y_i|}{|X|} - \sum_{i=1}^k  \frac{|Y_i|}{|X|} \sum_{j=1}^m \frac{|Y_i \cap Z_j|}{|Y_i \cap X_\ell|} \log \frac{|Y_i \cap Z_j|}{|Y_i \cap X_\ell|}\\
 &\stackrel{(a)}{=} -\sum_{i=1}^k \frac{|Y_i \cap X_\ell|}{|X_\ell|} \log \frac{|Y_i \cap X_\ell|}{|X_\ell|} - \sum_{i=1}^k  \frac{|Y_i \cap X_\ell|}{|X_\ell|} \sum_{j=1}^m \frac{|Y_i \cap Z_j|}{|Y_i \cap X_\ell|} \log \frac{|Y_i \cap Z_j|}{|Y_i \cap X_\ell|}\\
 &= -\sum_{i=1}^k \sum_{j=1}^m \frac{|Y_i \cap Z_j|}{|X_\ell|} \log \frac{|Y_i \cap X_\ell|}{|X_\ell|} - \sum_{i=1}^k  \sum_{j=1}^m \frac{|Y_i \cap Z_j|}{|X_\ell|} \log \frac{|Y_i \cap Z_j|}{|Y_i \cap X_\ell|}\\
 &= -\sum_{i=1}^k \sum_{j=1}^m \frac{|Y_i \cap Z_j|}{|X_\ell|} \log  \frac{|Y_i \cap Z_j|}{|X_\ell|}\\
 &= H(\Z,\Y)
\end{align*}
where $(a)$ is because $\Y$ is proportional to $\Z$ and thus $\frac{|Y_i \cap X_\ell|}{|X_\ell|}=\frac{|Y_i|}{|X|}$. In a similar manner it can be shown that
$$
H(\Z) + H(\Y|\Z)=H(\Z,\Y)
$$
where $H(\Z)=-\sum_{j=1}^m \frac{|Z_j|}{|X_\ell|} \log \frac{|Z_j|}{|X_\ell|}$ and where
$$
H(\Y|\Z) = \sum_{j=1}^m \frac{|Z_j|}{|X_\ell|} H(\Y|Z_j) = -\sum_{j=1}^m\sum_{i=1}^k \frac{|Y_i \cap Z_j|}{|X_\ell|} \log\frac{|Y_i\cap Z_j|}{|Z_j|}.
$$

\section{Proof of Theorem~\ref{thm:split}}\label{app:proofsplit}
\begin{lemma} \label{lem:split}
Let the data set $X \subset \R^N$ be partitioned into two clusters $Y_1$ and $Y_2$ such that the sample covariance matrix $\Sigma_i$ of $Y_i$ is positive definite for $i=1,2$.

Then 
$$
H(\N(\mu_X,\Sigma_X)) \geq   \frac{|Y_1|}{|X|} H(\N(\mu_{Y_1},\Sigma_{Y_1}))+ \frac{|Y_2|}{|X|} H(\N(\mu_{Y_2},\Sigma_{Y_2})).
$$
\end{lemma}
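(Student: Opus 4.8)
The plan is to reduce the stated inequality to a purely determinantal one and then close it with two classical facts about the positive-definite cone. First I would substitute the closed form of the Gaussian differential entropy, $H(\N(\mu,\Sigma)) = \tfrac{N}{2}\ln(2\pi e) + \tfrac{1}{2}\ln\det\Sigma$, already recorded in Section~\ref{sec:model:cec}. Since $\tfrac{|Y_1|}{|X|} + \tfrac{|Y_2|}{|X|} = 1$, the additive constants $\tfrac{N}{2}\ln(2\pi e)$ on both sides cancel, so writing $p_i := |Y_i|/|X|$ the claim is equivalent to
$$
\ln\det\Sigma_X \;\geq\; p_1\ln\det\Sigma_{Y_1} + p_2\ln\det\Sigma_{Y_2}.
$$

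The genuinely data-dependent step is a decomposition of $\Sigma_X$. Splitting the defining sum over $Y_1$ and $Y_2$ and writing $x-\mu_X = (x-\mu_{Y_i}) + (\mu_{Y_i}-\mu_X)$ for $x\in Y_i$, the cross terms vanish because $\sum_{x\in Y_i}(x-\mu_{Y_i})=0$. This yields
$$
\Sigma_X \;=\; p_1\Sigma_{Y_1} + p_2\Sigma_{Y_2} + B, \qquad B := \sum_{i=1}^2 p_i(\mu_{Y_i}-\mu_X)(\mu_{Y_i}-\mu_X)^T.
$$
Being a nonnegative combination of outer products, $B$ is positive semidefinite, so $\Sigma_X \succeq p_1\Sigma_{Y_1}+p_2\Sigma_{Y_2}$.

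I would then combine two standard tools. By monotonicity of the determinant on the positive-definite cone, the relation $\Sigma_X \succeq p_1\Sigma_{Y_1}+p_2\Sigma_{Y_2} \succ 0$ gives $\det\Sigma_X \geq \det(p_1\Sigma_{Y_1}+p_2\Sigma_{Y_2})$, hence $\ln\det\Sigma_X \geq \ln\det(p_1\Sigma_{Y_1}+p_2\Sigma_{Y_2})$. By concavity of the map $A\mapsto\ln\det A$ on the same cone,
$$
\ln\det(p_1\Sigma_{Y_1}+p_2\Sigma_{Y_2}) \;\geq\; p_1\ln\det\Sigma_{Y_1} + p_2\ln\det\Sigma_{Y_2}.
$$
Chaining these two inequalities produces exactly the reduced statement, completing the proof.

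The proof is a clean reduction rather than a hard calculation, so the main thing to watch is the \emph{positive-definiteness bookkeeping}: both the determinant-monotonicity argument and the concavity of $\ln\det$ are valid only on the strictly positive-definite cone, and a weighted average $p_1\Sigma_{Y_1}+p_2\Sigma_{Y_2}$ is guaranteed to stay there precisely because the hypothesis assumes each $\Sigma_i$ positive definite (so that $\ln\det$ does not run off to $-\infty$). The covariance decomposition is the one insight-bearing step; everything after it is invocation of textbook matrix facts.
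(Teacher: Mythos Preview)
Your proof is correct and follows essentially the same route as the paper: both use the law-of-total-covariance decomposition $\Sigma_X = p_1\Sigma_{Y_1}+p_2\Sigma_{Y_2}+B$ with $B\succeq 0$, then chain determinant monotonicity with the log-concavity of $\det$ on the positive-definite cone (the paper cites \cite[Cor.~7.6.8 and Cor.~4.3.12]{Horn_Matrix} for these two steps, in the reverse order from yours).
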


\begin{proof}
Let $p=|Y_1|/|X|$. By the law of total (co-)variance, we have
\begin{multline}
  \Sigma_X = p\Sigma_{Y_1} + (1-p)\Sigma_{Y_2}\\ + \underbrace{p(\mu_X-\mu_{Y_1})(\mu_X-\mu_{Y_1})^T+ (1-p)(\mu_X-\mu_{Y_2})(\mu_X-\mu_{Y_2})^T}_{=:\tilde\Sigma}
\end{multline}
where $\tilde\Sigma$ is the covariance matrix obtained from the sample mean vectors $\mu_{Y_1}$ and $\mu_{Y_2}$ of $Y_1$ and $Y_2$. Consequently, we get
\begin{align*}
& p H(\N(\mu_{Y_1},\Sigma_{Y_1})) + (1-p) H(\N(\mu_{Y_2},\Sigma_{Y_2})) \\ 
&= \frac{Np}{2} \ln(2 \pi e) + \frac{p}{2} \ln(\det\Sigma_{Y_1})+ \frac{N(1-p)}{2}\ln(2 \pi e) + \frac{(1-p)}{2} \ln(\det\Sigma_{Y_2})\\
&= \frac{N}{2}\ln(2 \pi e) + \frac{1}{2}\ln\left((\det\Sigma_{Y_1})^{p} (\det\Sigma_{Y_2})^{(1-p)}\right)\\
&\stackrel{(a)}{\leq} \frac{N}{2}\ln(2 \pi e) + \frac{1}{2}\ln(\det(p\Sigma_{Y_1}+(1-p)\Sigma_{Y_2}))\\
&\stackrel{(b)}{\leq} \frac{N}{2}\ln(2 \pi e) + \frac{1}{2}\ln(\det(p\Sigma_{Y_1}+(1-p)\Sigma_{Y_2}+\tilde\Sigma))\\
&= \frac{N}{2}\ln(2 \pi e) + \frac{1}{2}\ln(\det\Sigma)\\
&= H( \N(m,\Sigma_X))
\end{align*}
where $(a)$ follows because $\Sigma_{Y_1}$ and $\Sigma_{Y_2}$ are positive definite and from~\cite[Cor.~7.6.8]{Horn_Matrix}, and where $(b)$ follows because $\tilde\Sigma$ is positive semi-definite and from, e.g.,~\cite[Cor.~4.3.12]{Horn_Matrix}.
\end{proof}

% We can now proceed with the proof of our first theorem:
\begin{proof}[Proof of Theorem \ref{thm:split}]
Since $\Y$ is proportional to $\Z$ and since the coarsening of a proportional partition is proportional, we can apply the chain rule of entropy to get (see~\ref{app:chainrule})
\begin{align*}
 H(\Y) + H(\Z|\Y) &= H(\Z,\Y)\\
 H(\tilde{\Y}) + H(\Z|\tilde{\Y}) &= H(\Z,\tilde{\Y}).
\end{align*}
We hence get
\begin{multline}\label{eq:proportionalCoarsening}
 H(\Y) + H(\Z|\Y) = H(\Z,\Y)
 \stackrel{(a)}{=} H(\Z,\Y,\tilde{\Y})
 = H(\Z,\tilde{\Y})+H(\Y|\Z,\tilde{\Y})\\
 \stackrel{(b)}{=} H(\Z,\tilde{\Y})
 = H(\tilde{\Y}) + H(\Z|\tilde{\Y})
\end{multline}
where $(a)$ is because $\tilde{\Y}$ is a coarsening of $\Y$ and $(b)$ is because $\Y$ is a coarsening of $\Z$, respectively. In other words, for proportional coarsenings of $\Z$, consistency with $\Z$ (measured by the conditional entropy) can be freely traded for model simplicity (measured by entropy).

For the remaining part of the RHS of \eqref{eq:inquality}, we write:
$$
\sum_{i=1}^k \frac{|Y_i|}{|X|} \crossfi = \sum_{j=1}^{k'} \frac{|\tilde{Y}_j|}{|X|} \sum_{i: Y_i\subseteq\tilde{Y}_j} \frac{|Y_i|}{|\tilde{Y}_j|} \crossfi.
$$

If the inner sums on the RHS consist of at most two terms, i.e. $\tilde{Y}_j=\{Y_{i_1},Y_{i_2}\}$ or $\tilde{Y}_j=Y_{i}$, the inequality is established by Lemma~\ref{lem:split}. If the inner sum consists of more than two clusters, one needs to apply Lemma~\ref{lem:split} recursively. For example, if $\tilde{Y}_1=\{Y_1,Y_2,Y_3\}$,
\begin{multline*}
\sum_{i=1}^3 \frac{|Y_i|}{|\tilde{Y}_1|} \crossfi\\
 \le  \frac{|Y_1\cup Y_2|}{|\tilde{Y}_1|} H(\N(\mu_{Y_1 \cup Y_2},\Sigma_{Y_1 \cup Y_2})) + \frac{|Y_3|}{|\tilde{Y}_1|} H(\N(\mu_{Y_3},\Sigma_{Y_3}))\\
 \le H(\N(\mu_{\tilde{Y}_1},\Sigma_{\tilde{Y}_1}).
\end{multline*}
This completes the proof.
\end{proof}

\section{Proof of Theorem \ref{thm:limitBeta}} \label{app:proof:lastthm}

Note that, with~\eqref{eq:betacost} and~\eqref{eq:proportionalCoarsening} (since both $\Y$ and $\tilde\Y$ are proportional coarsenings of $\Z$), we obtain
\begin{align*}
 &H(\tilde\Y)+\beta H(\Z|\tilde\Y)-H(\Y) - \beta H(\Z|\Y)\\
 &= H(\tilde\Y)+\beta H(\Z|\tilde\Y) - (1-\beta)H(\Y) - \beta (H(\tilde\Y)+H(\Z|\tilde\Y))\\
 &=(1-\beta)\left(H(\tilde\Y)-H(\Y)\right)\\
 &=(\beta-1) H(\Y|\tilde\Y)=(\beta-1) H\left(\frac{p_{k'}}{q_{k'}},\dots,\frac{p_k}{q_{k'}}\right).
\end{align*}
For $i=1,\dots,k'-1$, we have $\crossfi=H(\N(\mu_{\tilde{Y}_i},\Sigma_{\tilde{Y}_i}))$, hence only the last term remains. We obtain with the proof of Theorem~\ref{thm:split},
\begin{align*}
H(\N(\mu,\Sigma))-\sum_{i=k'}^k \frac{p_i}{q_{k'}} \crossfi = \sum_{i=k'}^k \frac{p_i}{2q_{k'}} \ln\left(\frac{\det\Sigma}{\det\Sigma_{Y_i}}\right).
\end{align*}
It follows that the two costs in the statement are equal for $\beta_0$ such that
$$
(\beta_0-1) H\left(\frac{p_{k'}}{q_{k'}},\dots,\frac{p_k}{q_{k'}}\right) = \sum_{i=k'}^k \frac{p_i}{2q_{k'}} \ln\left(\frac{\det\Sigma_{Y_i}}{\det\Sigma}\right)
$$
from which we get
\begin{equation}
 \beta_0 = 1+\frac{\sum_{i=k'}^k \frac{p_i}{2q_{k'}} \ln\left(\frac{\det\Sigma_{Y_i}}{\det\Sigma}\right) }{H\left(\frac{p_{k'}}{q_{k'}},\dots,\frac{p_k}{q_{k'}}\right)}.
\end{equation}\qed

\section*{Acknowledgement}

The authors thank Hongfu Liu, Pengjiang Qian and Daphne Teck Ching Lai for sharing their codes implementing semi-supervised versions of k-means, spectral clustering and fuzzy clustering. We also thank Jacek Tabor for useful discussions and comments.

The work of Marek \'Smieja was supported by the National Science Centre (Poland) grant no. 2016/21/D/ST6/00980. The work of Bernhard C. Geiger has been funded by the Erwin Schr\"odinger
Fellowship J 3765 of the Austrian Science Fund and by the German Ministry of Education and Research in the framework of an Alexander von Humboldt Professorship.

\section*{References}

\bibliographystyle{plain}      % basic style, author-year citations
\bibliography{bib}   % name your BibTeX data base

\begin{thebibliography}{10}

\bibitem{aggarwal2013data}
Charu~C Aggarwal and Chandan~K Reddy.
\newblock {\em Data clustering: algorithms and applications}.
\newblock Chapman and Hall/CRC, 2013.

\bibitem{ambroise2001learning}
Christophe Ambroise, Thierry Denoeux, G{\'e}rard Govaert, and Philippe Smets.
\newblock Learning from an imprecise teacher: probabilistic and evidential
  approaches.
\newblock {\em Applied Stochastic Models and Data Analysis}, 1:100--105, 2001.

\bibitem{ana2003robust}
LNF Ana and Anil~K Jain.
\newblock Robust data clustering.
\newblock In {\em Proc. IEEE Conf. on Computer Vision and Pattern Recognition
  (CVPR)}, volume~2, pages II--128, 2003.

\bibitem{Asafi_ConstraintsFeatures}
S.~Asafi and D.~Cohen-Or.
\newblock Constraints as features.
\newblock In {\em Proc. IEEE Conf. on Computer Vision and Pattern Recognition
  (CVPR)}, pages 1634--1641, Portland, OR, June 2013.

\bibitem{basu2003semi}
Sugato Basu.
\newblock {\em Semi-supervised clustering: Learning with limited user
  feedback}.
\newblock PhD thesis, The University of Texas at Austin, 2003.

\bibitem{basu2004probabilistic}
Sugato Basu, Mikhail Bilenko, and Raymond~J Mooney.
\newblock A probabilistic framework for semi-supervised clustering.
\newblock In {\em Proc. ACM Int. Conf. on Knowledge Discovery and Data Mining
  (SIGKDD)}, pages 59--68, Seattle, WA, August 2004.

\bibitem{basu2008constrained}
Sugato Basu, Ian Davidson, and Kiri Wagstaff.
\newblock {\em Constrained clustering: Advances in algorithms, theory, and
  applications}.
\newblock CRC Press, 2008.

\bibitem{biecek2012r}
Przemyslaw Biecek, Ewa Szczurek, Martin Vingron, Jerzy Tiuryn, et~al.
\newblock The {R} package bgmm: mixture modeling with uncertain knowledge.
\newblock {\em Journal of Statistical Software}, 47(3):31, 2012.

\bibitem{bouveyron2009robust}
Charles Bouveyron and St{\'e}phane Girard.
\newblock Robust supervised classification with mixture models: Learning from
  data with uncertain labels.
\newblock {\em Pattern Recognition}, 42(11):2649--2658, 2009.

\bibitem{Calandriello_InfoMaxClust}
Daniele Calandriello, Gang Niu, and Masashi Sugiyama.
\newblock Semi-supervised information-maximization clustering.
\newblock {\em Neural Networks}, 57:103--111, 2014.

\bibitem{chechik2005information}
Gal Chechik, Amir Globerson, Naftali Tishby, and Yair Weiss.
\newblock Information bottleneck for {G}aussian variables.
\newblock {\em Journal of Machine Learning Research}, 6(Jan):165--188, 2005.

\bibitem{come2009learning}
Etienne C{\^o}me, Latifa Oukhellou, Thierry Denoeux, and Patrice Aknin.
\newblock Learning from partially supervised data using mixture models and
  belief functions.
\newblock {\em Pattern Recognition}, 42(3):334--348, 2009.

\bibitem{fei2005bayesian}
Li~Fei-Fei and Pietro Perona.
\newblock A {Bayesian} hierarchical model for learning natural scene
  categories.
\newblock In {\em Proc. IEEE Conf. on Computer Vision and Pattern Recognition
  (CVPR)}, pages 524--531, San Diego, CA, June 2005.

\bibitem{gondek2007non}
David Gondek and Thomas Hofmann.
\newblock Non-redundant data clustering.
\newblock {\em Knowledge and Information Systems}, 12(1):1--24, 2007.

\bibitem{hartigan1979algorithm}
John~A Hartigan and Manchek~A Wong.
\newblock Algorithm {AS} 136: A k-means clustering algorithm.
\newblock {\em Journal of the Royal Statistical Society. Series C (Applied
  Statistics)}, 28(1):100--108, 1979.

\bibitem{Horn_Matrix}
Roger~A. Horn and Charles~R. Johnson.
\newblock {\em Matrix Analysis}.
\newblock Cambridge University Press, Cambridge, 2 edition, 2013.

\bibitem{hullermeier2005learning}
Eyke H{\"u}llermeier and J{\"u}rgen Beringer.
\newblock Learning from ambiguously labeled examples.
\newblock In {\em Proc. Int. Symposium on Intelligent Data Analysis (IDA)},
  pages 168--179, Madrid, Spain, September 2005. Springer.

\bibitem{jain1999data}
Anil~K Jain, M~Narasimha Murty, and Patrick~J Flynn.
\newblock Data clustering: a review.
\newblock {\em ACM computing surveys (CSUR)}, 31(3):264--323, 1999.

\bibitem{jiang2015collaborative}
Yizhang Jiang, Fu-Lai Chung, Shitong Wang, Zhaohong Deng, Jun Wang, and
  Pengjiang Qian.
\newblock Collaborative fuzzy clustering from multiple weighted views.
\newblock {\em IEEE Transactions on Cybernetics}, 45(4):688--701, 2015.

\bibitem{Kamvar_SpectralLearning}
S.~Kamvar, D.~Klein, and C.D. Manning.
\newblock Spectral learning.
\newblock In {\em Proc. Int. Joint Conf. on Artificial Intelligence (IJCAI)},
  pages 561--566, Acapulco, Mexico, August 2003.

\bibitem{klekota2008chemical}
Justin Klekota and Frederick~P Roth.
\newblock Chemical substructures that enrich for biological activity.
\newblock {\em Bioinformatics}, 24(21):2518--2525, 2008.

\bibitem{lai2013improving}
Daphne Teck~Ching Lai and Jonathan~M Garibaldi.
\newblock Improving semi-supervised fuzzy c-means classification of breast
  cancer data using feature selection.
\newblock In {\em Proc. IEEE Int. Conf. on Fuzzy Systems (FUZZ-IEEE)}, pages
  1--8, Hyderabad, India, July 2013. IEEE.

\bibitem{lange2005learning}
Tilman Lange, Martin~HC Law, Anil~K Jain, and Joachim~M Buhmann.
\newblock Learning with constrained and unlabelled data.
\newblock In {\em Proc. IEEE Conf. on Computer Vision and Pattern Recognition
  (CVPR)}, pages 731--738, San Diego, CA, June 2005.

\bibitem{lebret2015rmixmod}
R{\'e}mi Lebret, Serge Iovleff, Florent Langrognet, Christophe Biernacki,
  Gilles Celeux, and G{\'e}rard Govaert.
\newblock Rmixmod: the {R} package of the model-based unsupervised, supervised
  and semi-supervised classification mixmod library.
\newblock {\em Journal of Statistical Software}, 67(6):241--270, 2015.

\bibitem{lelis2009semi}
Levi Lelis and J{\"o}rg Sander.
\newblock Semi-supervised density-based clustering.
\newblock In {\em Proc. IEEE Int. Conf. on Data Mining}, pages 842--847, Miami,
  Florida, December 2009. IEEE.

\bibitem{asuncion2007uci}
M.~Lichman.
\newblock {UCI} machine learning repository, 2013.

\bibitem{liu2015clustering}
Hongfu Liu and Yun Fu.
\newblock Clustering with partition level side information.
\newblock In {\em Proc. IEEE Int. Conf. on Data Mining (ICDM)}, pages 877--882,
  Atlantic City, NJ, November 2015.

\bibitem{lu2016semi}
Mei Lu, Xiang-Jun Zhao, Li~Zhang, and Fan-Zhang Li.
\newblock Semi-supervised concept factorization for document clustering.
\newblock {\em Information Sciences}, 331:86--98, 2016.

\bibitem{Lu_PPC}
Zhengdong Lu and Todd~K. Leen.
\newblock Semi-supervised learning with penalized probabilistic clustering.
\newblock In {\em Advances in Neural Information Processing Systems (NIPS)},
  pages 849--856, Vancouver, British Columbia, Canada, December 2005.

\bibitem{Nelson_RevisitingModels}
Blaine Nelson and Ira Cohen.
\newblock Revisiting probabilistic models for clustering with pair-wise
  constraints.
\newblock In {\em Proc. Int. Conf. on Machine Learning (ICML)}, pages 673--680,
  Corvallis, OR, June 2007.

\bibitem{olivier1999}
Berend Olivier, Willem Soudijn, and Ineke van Wijngaarden.
\newblock The 5-{HT1A} receptor and its ligands: structure and function.
\newblock In {\em Progress in Drug Research}, volume~52, pages 103--165. 1999.

\bibitem{pedrycz2008fuzzy}
Witold Pedrycz, Alberto Amato, Vincenzo Di~Lecce, and Vincenzo Piuri.
\newblock Fuzzy clustering with partial supervision in organization and
  classification of digital images.
\newblock {\em IEEE Transactions on Fuzzy Systems}, 16(4):1008--1026, 2008.

\bibitem{pedrycz1997fuzzy}
Witold Pedrycz and James Waletzky.
\newblock Fuzzy clustering with partial supervision.
\newblock {\em IEEE Transactions on Systems, Man, and Cybernetics, Part B
  (Cybernetics)}, 27(5):787--795, 1997.

\bibitem{qian2016affinity}
Pengjiang Qian, Yizhang Jiang, Shitong Wang, Kuan-Hao Su, Jun Wang, Lingzhi Hu,
  and Raymond~F Muzic.
\newblock Affinity and penalty jointly constrained spectral clustering with
  all-compatibility, flexibility, and robustness.
\newblock {\em IEEE Transactions on Neural Networks and Learning Systems},
  2016.
\newblock , accepted for publication.

\bibitem{Redner_EM}
Richard~A. Redner and Homer~F. Walker.
\newblock Mixture densities, maximum likelihood and the {EM} algorithm.
\newblock {\em SIAM Review}, 26(2):195--239, 1984.

\bibitem{shental2004computing}
Noam Shental, Aharon Bar-hillel, Tomer Hertz, and Daphna Weinshall.
\newblock Computing {G}aussian mixture models with {EM} using equivalence
  constraints.
\newblock In {\em Advances in Neural Information Processing Systems (NIPS)},
  pages 465--472, Vancouver, British Columbia, Canada, December 2004.

\bibitem{slonim2002information}
Noam Slonim.
\newblock {\em The information bottleneck: Theory and applications}.
\newblock PhD thesis, Hebrew University of Jerusalem, 2002.

\bibitem{smieja2016chem}
Marek {\'S}mieja and Dawid Warszycki.
\newblock Average information content maximization - a new approach for
  fingerprint hybridization and reduction.
\newblock {\em PLoS ONE}, 11(1):e0146666, 2016.

\bibitem{spurek2017active}
P~Spurek, J~Tabor, and K~Byrski.
\newblock Active function cross-entropy clustering.
\newblock {\em Expert Systems with Applications}, 72:49--66, 2017.

\bibitem{cec_guide}
Przemyslaw Spurek, Konrad Kamieniecki, Jacek Tabor, Krzysztof Misztal, and
  Marek \'Smieja.
\newblock R package {CEC}.
\newblock {\em Neurocomputing}, 237:410--413, 2016.

\bibitem{Strouse_DIB}
DJ~Strouse and David~J. Schwab.
\newblock The deterministic information bottleneck.
\newblock In {\em Proc. Conf. on Uncertainty in Artificial Intelligence (UAI)},
  pages 696--705, New York City, NY, June 2016.

\bibitem{tabor2014cross}
Jacek Tabor and Przemyslaw Spurek.
\newblock Cross-entropy clustering.
\newblock {\em Pattern Recognition}, 47(9):3046--3059, 2014.

\bibitem{tishby2000information}
Naftali Tishby, Fernando~C. Pereira, and William Bialek.
\newblock The information bottleneck method.
\newblock In {\em Proc. Allerton Conf. on Communication, Control, and
  Computing}, pages 368--377, Monticello, IL, September 1999.

\bibitem{topchy2003combining}
Alexander Topchy, Anil~K Jain, and William Punch.
\newblock Combining multiple weak clusterings.
\newblock In {\em Proc. IEEE Int. Conf. on Data Mining (ICDM)}, pages 331--338,
  Melbourne, Florida, November 2003.

\bibitem{Tu2016673}
Enmei Tu, Yaqian Zhang, Lin Zhu, Jie Yang, and Nikola Kasabov.
\newblock A graph-based semi-supervised k nearest-neighbor method for nonlinear
  manifold distributed data classification.
\newblock {\em Information Sciences}, 367:673--688, 2016.

\bibitem{Wang_FCSC}
Ziang Wang and Ian Davidson.
\newblock Flexible constrained spectral clustering.
\newblock In {\em Proc. ACM Int. Conf. on Knowledge Discovery and Data Mining
  (SIGKDD)}, pages 563--572, Washington, DC, July 2010.

\bibitem{warszycki2013linear}
D.~Warszycki, S.~Mordalski, K.~Kristiansen, R.~Kafel, I.~Sylte, Z.~Chilmonczyk,
  and A.~J. Bojarski.
\newblock A linear combination of pharmacophore hypotheses as a new tool in
  search of new active compounds--an application for 5-{HT1A} receptor ligands.
\newblock {\em PloS ONE}, 8(12):e84510, 2013.

\bibitem{yi2012semi}
Jinfeng Yi, Rong Jin, Shaili Jain, Tianbao Yang, and Anil~K Jain.
\newblock Semi-crowdsourced clustering: Generalizing crowd labeling by robust
  distance metric learning.
\newblock In {\em Advances in Neural Information Processing Systems (NIPS)},
  pages 1772--1780, Lake Tahoe, December 2012.

\bibitem{zhu2009introduction}
Xiaojin Zhu and Andrew~B Goldberg.
\newblock Introduction to semi-supervised learning.
\newblock {\em Synthesis lectures on artificial intelligence and machine
  learning}, 3(1):1--130, 2009.

\end{thebibliography}

\end{document}